\newcommand*{\addFileDependency}[1]{% argument=file name and extension
  \typeout{(#1)}
  \@addtofilelist{#1}
  \IfFileExists{#1}{}{\typeout{No file #1.}}
}
\theoremstyle{plain}
\newtheorem{theorem}{Theorem}
\newtheorem{definition}[theorem]{Definition}
\newtheorem{remark}{Remark}
\newtheorem{proposition}[theorem]{Proposition}
\newtheorem{corollary}[theorem]{Corollary}
\newtheorem{lemma}[theorem]{Lemma}
\newcolumntype{P}[1]{>{\centering\arraybackslash}p{#1}}
\def\coloneq{\mathrel{\mathop:}=}
\newcommand{\sigmas}{\sigma_1, \ldots, \sigma_n}
\newcommand{\gs}{g_1, \ldots, g_K}
\newcommand{\rad}{\mathfrak{R}}
\newcommand{\domx}{\mathcal{X}}
\newcommand{\diver}{\mathcal{D}}
\newcommand{\domy}{\mathcal{Y}}
\newcommand{\domz}{\mathcal{Z}}
\newcommand{\domg}{\mathcal{G}}
\newcommand{\domh}{\mathcal{H}}
\newcommand{\data}{\mathcal{S}}
\newcommand{\measure}{\mathrm{measurable}}
\newcommand{\vecg}{\bm{g}}
\newcommand{\vecdagg}{\bm{g}^\dagger}
\newcommand{\vecPsi}{\bm{\Psi}}
\newcommand{\loss}{\mathcal{L}}
\newcommand{\zoc}{0\text{-}1\text{-}c}
\newcommand{\zo}{0\text{-}1}
\newcommand{\ova}{\mathrm{OVA}}
\DeclareMathOperator*{\argmin}{argmin}
\DeclareMathOperator*{\argmax}{argmax}
\newcommand{\R}{\mathbb{R}}
\newcommand{\vecone}{\mathbbm{1}}
\newcommand{\vecx}{\bm{x}}
\newcommand{\veceta}{\bm{\eta}}
\newcommand{\ce}{\mathrm{CE}}
\newcommand{\dagr}{r^\dagger}
\newcommand{\dagf}{f^\dagger}
\newcommand{\dagg}{g^\dagger}
\newcommand{\FA}{\mathsf{FA}}
\newcommand{\FR}{\mathsf{FR}}
\newcommand{\sign}{\mathrm{sign}}
\newcommand{\apc}{\mathrm{APC}}
\newcommand{\mpc}{\mathrm{MPC}}
\newcommand{\kl}{\mathrm{KL}}
\newcommand{\pb}{\mathrm{PB}}
\newcommand{\E}{\operatornamewithlimits{\mathbb{E}}}
\newcommand{\hatf}{\widehat{f}}
\newcommand{\hatR}{\widehat{R}}
\newcommand{\hatvecg}{\widehat{\bm{g}}}
\newcommand{\pdiff}[2]{\frac{\partial #1}{\partial #2}}
\newcolumntype{L}{>{$}l<{$}}
\newcolumntype{C}{>{$}c<{$}}
\title{On the Calibration of Multiclass Classification \\ with Rejection}
\author{%
  Chenri Ni\textsuperscript{1} \quad Nontawat Charoenphakdee\textsuperscript{1,2} \quad Junya Honda\textsuperscript{1,2} \quad  Masashi Sugiyama\textsuperscript{2,1}\\
  \textsuperscript{1} The University of Tokyo, Japan \quad \textsuperscript{2} RIKEN Center for Advanced Intelligence Project, Japan\\
   \texttt{\{nichenri, nontawat\}@ms.k.u-tokyo.ac.jp} \\
   \texttt{\{jhonda, sugi\}@k.u-tokyo.ac.jp}
  % examples of more authors
  % \And
  % Coauthor \\
  % Affiliation \\
  % Address \\
  % \texttt{email} \\
  % \AND
  % Coauthor \\
  % Affiliation \\
  % Address \\
  % \texttt{email} \\
  % \And
  % Coauthor \\
  % Affiliation \\
  % Address \\
  % \texttt{email} \\
  % \And
  % Coauthor \\
  % Affiliation \\
  % Address \\
  % \texttt{email} \\
}
\begin{document}

\maketitle

\begin{abstract}
We investigate the problem of multiclass classification with rejection, where a classifier can choose not to make a prediction to avoid critical misclassification.
First, we consider an approach based on simultaneous training of a classifier and a rejector, which achieves the state-of-the-art performance in the binary case.
We analyze this approach for the multiclass case and derive a general condition for calibration to the Bayes-optimal solution, which suggests 
that calibration is hard to
achieve by
general loss functions
unlike the binary case.
Next, we consider another traditional approach based on confidence scores, in which the existing work focuses on a specific class of losses.
We propose rejection criteria for more general losses for this approach and guarantee calibration to the Bayes-optimal solution.
Finally, we conduct experiments to validate the relevance of our theoretical findings. 
\end{abstract}

\section{Introduction}
In real-world classification tasks, e.g., medical diagnosis, autonomous driving, and product inspection, misclassification can be costly and even life-threatening. 
Classification with rejection is a framework aiming to prevent critical misclassification by providing an option not to make a prediction at the expense of the pre-defined rejection cost~\citep{chow1957, chow1970}.
If the rejection cost is less than the misclassification cost, there is an incentive to reject an instance.
In practice, once the reject option is selected, one may gather more information about the instance or ask experts to give the correct label.

Much research on the theoretical perspective of classification with rejection has been devoted to the binary classification scenario~\citep{herbei2006classification, Bartlett2008, grandvalet2008, yuan2010, Cortes2016_1, Cortes2016_2}.
However, rather less attention has been paid to the multiclass scenario, which is undoubtedly important for real-world applications and is a more general framework.
To the best of our knowledge, although there exist many methods that rely on heuristics~\cite{Dubuisson1993, wu2007, Tax2008}, only the work by~\citet{Ramaswamy2018} provides the theoretical guarantee for their method.
Nevertheless, the work by \citet{Ramaswamy2018}  only focuses on specific types of non-differentiable losses and their method requires re-training of the classifier when the rejection cost changes.

The key concept to validate the soundness of the method for classification with rejection lies in the notion of \emph{calibration}, i.e., \emph{infinite-sample consistency}~\citep{yuan2010, Cortes2016_1}. 
Calibration suggests that the minimizer of a surrogate risk behaves identically to the Bayes-optimal solution almost surely.
The existing methods with calibration guarantees can be divided into two categories, which we detail in the following.

The first category is called the \emph{confidence-based} approach. 
The main idea is to use the real-valued output of the classifier as a confidence score~\citep{Bartlett2008,grandvalet2008,wegkamp2011}.
Whether to reject the input is then determined from the classifier's output and a threshold depending on the rejection cost and the choice of the surrogate loss function.

The second category is what we call the \emph{classifier-rejector} approach.
Unlike the confidence-based approach, this approach separates the role of the classifier and the rejector, and trains both functions simultaneously~\citep{Cortes2016_1, Cortes2016_2}.
This problem formulation enables more flexible modeling for the rejector, which can be more robust to model-misspecification.
This is a state-of-the-art method in binary classification, and has been further discussed in online learning setting~\cite{Cortes2018}, structured output learning setting~\cite{Garcia2018}, and also in some real-world applications such as liver disease diagnosis~\cite{Hamid2017}.

The goal of this paper is to provide a better understanding of multiclass classification with rejection. 
We first investigate the classifier-rejector approach and derive a calibration condition of this approach in the multiclass case.
Our condition recovers the known result by~\citet{Cortes2016_2} in the binary case as a special case.
However, when there are more than two classes, we argue that the condition is hard to be satisfied. 
We next analyze the confidence-based approach and prove the calibration results for various classes of smooth losses, which guarantees the use of 
well-known losses such as the logistic loss, the squared loss, the exponential loss and the cross-entropy loss. 
Our experiments support the above findings, that is, the failure of the classifier-rejector approach and the success of the confidence-based approach with smooth loss functions, particularly the cross-entropy loss.

\section{Preliminaries}
In this section, we formulate the problem of classification with rejection and review related work.

\subsection{Problem setting}
Let $\domx \subseteq \R^d$ be a $d$-dimensional input space and $\domy = \{1, \ldots, K\}$ be an output space representing $K$ classes.
Suppose we are given $n$ training samples $\{ (\vecx_i, y_i) \}_{i=1}^n$ drawn independently from an unknown probability distribution over $\domx \times \domy$ with density $p(\vecx, y)$.
In classification with rejection, we will learn a pair $(r,f)$ consisting of a rejector $r$ and a classifier $f$. 
The rejector $r: \domx \to \R$ rejects a point $\vecx \in \domx$ if $r(\vecx) \leq 0$, and accepts it otherwise.
The classifier $f: \domx \to \domy$ is assumed to take the following form:
\begin{align*}
    f(\vecx) = \argmax_{y \in \domy} g_y(\vecx), %\label{eq:classifier}
\end{align*}
where $g_y: \domx \to \R$ is a score function for multiclass classification.
By a slight abuse of notation, we identify the classifier $f(\vecx)$ with $\vecg(\vecx)$, where $\vecg(\vecx) = [g_1(\vecx), \ldots, g_K(\vecx)]^\top$ and $^\top$ denotes the transpose.
% a abuse check
Given a loss function $\loss(r, f; \vecx, y)$, we define its risk $R$ by
%\begin{align*}
    $R(r, f) = \E_{p(\vecx,y)}[\loss(r, f; \vecx,y)]$,
%\end{align*}
where $\E_{p(\vecx,y)}[\cdot]$ denotes the expectation over the distribution $p(\vecx,y)$.
We also define the pointwise risk $W$ of the loss $\loss$ at $\vecx$ by
\begin{align*}
    W \big(r(\vecx), f(\vecx); \veceta(\vecx) \big) &= \sum_{y \in \domy} \eta_y(\vecx) \loss\big( r, f; \vecx, y \big), %\label{eq:pointwise_risk}
\end{align*}
where $\veceta(\vecx) = [\eta_1(\vecx), \ldots, \eta_K(\vecx)]^\top$ for $\eta_y(\vecx) = p(y|\vecx)$ denotes the class probability vector.
Note that minimizing $R(r, f)$ with respect to $(r, f)$ over all measurable functions is equivalent to minimizing $W\big(r(\vecx), f(\vecx); \veceta(\vecx)\big)$ over $\big(r(\vecx), f(\vecx)\big)$ for all $\vecx \in \domx$.
Thus, it is sufficient to only consider the pointwise risk to minimize $R(r, f)$~\citep{reid2010, williamson2016}.
For brevity, we omit the notation of $\vecx$ and write, for example, $W(r, f; \veceta)$ instead of $W \big(r(\vecx), f(\vecx); \veceta(\vecx) \big)$ for the pointwise risk.
We will also drop the notation of $r$ when classification without rejection is considered and write, for example, $\loss(f; \vecx, y)$, $R(f)$ and $W(f; \veceta)$.

In multiclass classification with rejection, our goal is to minimize the $\zoc$ risk defined as
\begin{align}
    R_{\zoc}(r, f) = \E_{p(\vecx,y)}[\loss_{\zoc}(r, f; \vecx,y)], \label{eq:zoc_risk}
\end{align}
where the $\zoc$ loss $\loss_{\zoc}$ is given by
\begin{align}
  \loss_{\zoc}(r, f; \vecx, y) = \underbrace{\vecone_{[f(\vecx) \neq y]} \vecone_{[r(\vecx) > 0]}}_{\mathclap{\text{misclassification loss}}} 
  + \underbrace{c \vecone_{[r(\vecx) \leq 0]}}_{\mathclap{\text{rejection loss}}}.\label{eq:zoloss}
\end{align}
Here, $c \in [0, 1]$ denotes the rejection cost, and $\vecone_{[\cdot]}$ denotes the indicator function.

It is well known that the Bayes-optimal classifier and rejector \cite{Ramaswamy2018}, i.e., the classifier and the rejector that minimize \eqref{eq:zoc_risk}, are given by
\begin{align*}
  f^*(\vecx) = \argmax_{y \in \domy} \eta_y(\vecx), \quad
  r^*(\vecx) = \max_{y \in \domy} \eta_y(\vecx) - (1-c).
\end{align*}
In this paper, we assume $c < 1/2$ since data points with low confidence are accepted otherwise.
% \red{It is known that direct minimization of $\zoc$ risk~\eqref{eq:zoc_risk} is computationally difficult~\citep{Bartlett2008, Ramaswamy2018}.
% Thus, we need some alternative methods to optimize $\zoc$ risk.}

\subsection{Calibration}
In classification without rejection, the classification risk, i.e., the expected risk with respect to the $\zo$ loss $\loss_{\zo}(f; \vecx, y) = \vecone_{[f(\vecx) \neq y]}$, is the standard performance metric.
It is known that minimizing the $\zo$ risk is computationally infeasible~\citep{bendavid2003, feldman2012}.
Therefore, an important question is what kind of surrogate loss can be used instead of the $\zo$ loss~\citep{Zhang2004, tewari2007, Pires2016}. 
Intuitively, a surrogate loss should be optimization-friendly and its minimization should lead to minimization of the $\zo$ risk.
The notion of \emph{calibration} is defined for loss functions as the minimum requirement to assure that the risk-minimizing classifier becomes the Bayes-optimal classifier
(see~\citet{Zhang2004} for the formal definition).

In classification with rejection, our goal is to minimize the $\zoc$ risk. Similarly to the $\zo$ risk, the $\zoc$ risk is also difficult to directly minimize~\citep{Bartlett2008, Ramaswamy2018}. 
For the purpose of theoretical analysis, it is more convenient to directly define calibration for classifiers and rejectors based on whether they are Bayes-optimal. 
Thus, we propose to define the notions of calibration as follows.
\begin{definition}[Calibration of a classifier-rejector pair]
{\rm
    We say that $(r,f): \domx \to \R \times \domy$ is calibrated if
   % \begin{align*}
    $   R_{\zoc}(r, f) = R_{\zoc}(r^*, f^*).$}
%    $   R_{\zoc}(r, f) = \min_{r', f':\, \measure} R_{\zoc}(r', f').$
    %\end{align*}
\end{definition}
In this paper, we also consider the notions of calibration separately for classifiers and rejectors, which enables better understanding of where the difficulty of classification with rejection comes from.
\begin{definition}[Rejection calibration of a rejector]%\label{def:rc}
{\rm
    We say that $r: \domx \to \R$ is rejection-calibrated if
    %\begin{align*}
     $ \sign[r(\vecx)] = \sign[r^*(\vecx)]$
    %\end{align*}
for all $\vecx \in \mathcal{X}$ such that
$r^*(\vecx)\neq 0$.
}\end{definition}
\begin{definition}[Classification calibration of a classifier]%\label{def:cc}
{\rm
    We say that $f: \domx \to \domy$ is classification-calibrated if
    %\begin{align*}
    $f(\vecx) = f^*(\vecx)$
    %\end{align*}
    holds almost everywhere on
$\mathcal{X}$.
%    input space $\mathcal{X}$.
}\end{definition}
As we can see from these definitions
and the form of loss function \eqref{eq:zoloss},
if $r$ is rejection-calibrated and $f$ is classification-calibrated,  then $(r,f)$ is calibrated.
Furthermore, rejection calibration of $r$ is necessary for calibration of $(r,f)$, while classification calibration of $f$ is not as exemplified in \citep{Ramaswamy2018}. 

\subsection{Related work} \label{subsection:related}
Here, we review some related work for both the \emph{confidence-based} and \emph{classifier-rejector} approaches.
Note that we follow the 
conventional notation where the output domain is $\domy = \{ +1, -1\}$ and the score function $f: \domx \to \R$ is regarded as a classifier
when discussing binary classification.
\subsubsection{Confidence-based approach}
In the confidence-based approach, we first train a classifier based
on some surrogate
of the $\zo$ loss, where we regard  the real-valued output of the classifier as some confidence score. 
We then construct a rejector based on the output and a pre-specified threshold $\theta$,
which takes the form
\begin{align}
  r(\vecx) = |f(\vecx)| - \theta \label{eq:binary_rejector}
\end{align}
in the binary case.
\citet{Bartlett2008} proposed a loss called the modified hinge loss and designed an SVM-like algorithm.
Later, \citet{yuan2010} considered a
smooth
margin loss~$\phi\big(yf(\vecx)\big)$.

Here the smoothness of the loss is quite important in the construction of rejectors,
since the threshold~$\theta$ is sometimes not uniquely determined if a non-smooth loss is used.
In \citet{Bartlett2008}, a calibration guarantee for the non-smooth loss is shown for a range of $\theta$, but its empirical performance is heavily affected by the choice of the threshold.
In addition, the loss function also contains a parameter that has to be determined by the rejection cost $c$, which means that we need to re-train the classifier once we change the value of $c$.
On the other hand for smooth losses, the value of $c$ does not affect the parameter of a smooth loss, but only the threshold $\theta$. This suggests that we do not need to re-train a classifier when the rejection cost $c$ changes~\cite{yuan2010}.

 \citet{Ramaswamy2018} extended the method of \citet{Bartlett2008} to multiclass classification, and designed non-smooth losses with excess risk bounds.
However, their method has the drawbacks of non-unique $\theta$ and the dependence of the loss on $c$,
which comes from the use of non-smooth losses.

\subsubsection{Classifier-rejector approach}
\citet{Cortes2016_1, Cortes2016_2} pointed out that it is too restrictive to require the rejector $r$ to be of form \eqref{eq:binary_rejector} when the true classifier is out of the considered hypothesis set.
Based on this observation, they proposed to separate the roles of the classifier and the rejector, and directly minimize an upper bound of the $\zoc$ risk with respect to $(r, f$) in the training phase.
Plus bound (PB) loss $\loss_\pb$ was proposed as an upper bound of the $\zoc$ loss in \citet{Cortes2016_2}:
\begin{align}
\scalebox{0.96}{$\displaystyle
  \loss_\pb(r, f; \vecx, y) = \phi\bigl( \alpha [yf(\vecx) - r(\vecx)]\bigr) + c \psi\bigl(\beta r(\vecx)\bigr),$} \label{eq:pb_loss}
\end{align}
where $\phi$ and $\psi$ are convex upper bounds of $\vecone_{[z \leq 0]}$.
\citet{Cortes2016_2} derived the calibration result for the exponential loss $\phi(z)=\psi(z)=\exp(-z)$
with appropriately chosen parameters
$\alpha, \beta > 0$.
However, to the best of our knowledge, this approach is currently available only for the binary case, and an extension to the multiclass case is highly nontrivial as we will see later.

\section{An analysis of the classifier-rejector approach}
In this section, we provide a general result on multiclass classification with rejection using the classifier-rejector approach.
In the following, we discuss the achievability of rejection calibration of $r$, which is a necessary condition for calibration of $(r, f)$.

Given a loss $\loss(r, f; \vecx, y)$, we denote by $(\dagr_{\veceta}, \dagf_{\veceta})= \argmin_{r \in \R,\ \vecg \in \R^K} W(r, f; \veceta)$ the minimizer of the corresponding pointwise risk $W$ over the real space.
%\begin{align}
%  (\dagr_{\veceta}, \dagf_{\veceta}) &= \argmin_{r \in \R,\ \vecg \in \R^K} W(r, f; \veceta).
  \label{def_rf}
%\end{align}
First we derive the following theorem, which is the main result of this section.

\begin{theorem}[Necessary and sufficient condition for rejection calibration] \label{thm:rejection_calibration_iff}
  Assume that $\loss$ is a convex function of class $C^1$ with respect to $r$, and also assume $\left. \frac{\partial^2 W(r, \dagf_{\veceta}; \veceta)}{\partial r^2} \right|_{r=0} > 0$.
  Let $(\dagr, \dagf)$ be the minimizer of the surrogate risk $R$ over all measurable functions.
  Then, $\dagr$ is rejection-calibrated if and only if
  \begin{align} \label{eq:rejection_calibration_iff}
	\sup_{\veceta:\, \max_y \eta_y \geq 1-c} \left. \pdiff{W(r, \dagf_{\veceta}; \veceta)}{r} \right|_{r=0}
	\leq
	  0
	  \leq \inf_{\veceta:\, \max_y \eta_y \leq 1-c} \left. \pdiff{W(r, \dagf_{\veceta}; \veceta)}{r} \right|_{r=0}.
  \end{align}
%  \begin{align} \label{eq:rejection_calibration_iff}
%	\lefteqn{
%	  \overbrace{
%		\phantom{
%		\sup_{\veceta:\, \max_y \eta_y \geq 1-c} \left. \pdiff{W(r, \dagf_{\veceta}; \veceta)}{r} \right|_{r=0}
%		  \leq 0
%		}
%	  }^{(\mathrm{A})}
%	}
%	\sup_{\veceta:\, \max_y \eta_y \geq 1-c} \left. \pdiff{W(r, \dagf_{\veceta}; \veceta)}{r} \right|_{r=0}
%	\leq
%	\underbrace{
%	  0
%	  \leq \inf_{\veceta:\, \max_y \eta_y \leq 1-c} \left. \pdiff{W(r, \dagf_{\veceta}; \veceta)}{r} \right|_{r=0}
%	}_{(\mathrm{B})}.
%  \end{align}
\end{theorem}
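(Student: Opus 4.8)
The plan is to use the pointwise structure of the risk to turn rejection calibration into a sign condition on the pointwise minimizer $\dagr_{\veceta}$, and then to read that sign off from the first-order behaviour of $W$ in $r$ at the origin. First I would note that, since minimizing $R$ over all measurable functions is the same as minimizing $W(r,f;\veceta)$ pointwise, the optimal rejector satisfies $\dagr(\vecx)=\dagr_{\veceta(\vecx)}$, so $\sign[\dagr(\vecx)]=\sign[\dagr_{\veceta(\vecx)}]$. Because $r^*(\vecx)=\max_y\eta_y(\vecx)-(1-c)$, the conditions $r^*(\vecx)\gtrless 0$ are exactly $\max_y\eta_y\gtrless 1-c$. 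Hence $\dagr$ is rejection-calibrated if and only if $\dagr_{\veceta}>0$ whenever $\max_y\eta_y>1-c$ and $\dagr_{\veceta}<0$ whenever $\max_y\eta_y<1-c$.

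Next I would fix $\veceta$ and set $h_{\veceta}(r):=W(r,\dagf_{\veceta};\veceta)$. As a nonnegatively weighted sum of the losses $\loss(\cdot,\dagf_{\veceta};\vecx,y)$, each convex and $C^1$ in $r$, the function $h_{\veceta}$ is convex and $C^1$, and since $(\dagr_{\veceta},\dagf_{\veceta})$ is the joint minimizer of $W$ its $r$-coordinate minimizes $h_{\veceta}$, giving the first-order condition $h_{\veceta}'(\dagr_{\veceta})=0$. Monotonicity of $h_{\veceta}'$ (from convexity) then yields the sign dictionary $\sign[\dagr_{\veceta}]=-\sign[h_{\veceta}'(0)]$: if $h_{\veceta}'(0)>0$ the derivative is already positive at the origin, so its zero $\dagr_{\veceta}$ must lie to the left, and symmetrically for $h_{\veceta}'(0)<0$. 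The hypothesis $h_{\veceta}''(0)>0$ pins down the boundary case, so that $h_{\veceta}'(0)=0$ holds exactly when $\dagr_{\veceta}=0$. Combining with the previous step, rejection calibration is equivalent to $h_{\veceta}'(0)<0$ on $\{\max_y\eta_y>1-c\}$ together with $h_{\veceta}'(0)>0$ on $\{\max_y\eta_y<1-c\}$.

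The final step is to match these open-region strict inequalities with the closed-region $\sup$/$\inf$ inequalities \eqref{eq:rejection_calibration_iff}, and this is where I expect the real work to be. Passing from the strict conditions to $\sup_{\max_y\eta_y\ge 1-c}h_{\veceta}'(0)\le 0$ and $\inf_{\max_y\eta_y\le 1-c}h_{\veceta}'(0)\ge 0$ requires continuity of $\veceta\mapsto h_{\veceta}'(0)$ — which in turn rests on continuity (hence uniqueness) of the inner minimizer $\dagf_{\veceta}$ — so that the value on the level set $\{\max_y\eta_y=1-c\}$ is obtained as a two-sided limit and is forced to equal $0$. The delicate converse is the crux: a bound such as $\sup\le 0$ only gives $h_{\veceta}'(0)\le 0$, hence $\dagr_{\veceta}\ge 0$, and one must exclude $h_{\veceta}'(0)=0$ (equivalently $\dagr_{\veceta}=0$) at interior points with $\max_y\eta_y>1-c$, since there the rejector would reject a point that $r^*$ accepts. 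I would close this gap by using the strict-convexity hypothesis together with the continuity of $h_{\veceta}'(0)$ across the level set, arguing that the sign changes strictly as $\max_y\eta_y$ crosses $1-c$, and I would treat any residual degenerate $\veceta$ at which the derivative vanishes away from the boundary as the sole obstruction to the stated equivalence.
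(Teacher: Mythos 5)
Your proposal takes essentially the same route as the paper's own proof in Appendix~\ref{append_iff}: the paper likewise reduces to the pointwise risk, defines $h_{\veceta}(r) = \pdiff{W(r, \dagf_{\veceta}; \veceta)}{r}$, invokes the first-order condition $h_{\veceta}(\dagr_{\veceta}) = 0$, and uses monotonicity of $h_{\veceta}$ (from convexity of $\loss$ in $r$) to convert the sign of $\dagr_{\veceta}$ into a sign condition on $h_{\veceta}(0)$ --- precisely your ``sign dictionary,'' up to the notational difference that the paper's $h_{\veceta}$ is your $h_{\veceta}'$.

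The delicate points you isolate in your last paragraph are genuine, but you should know that the paper does not resolve them either. Its appendix argument establishes only the ``only if'' direction, and even there it silently passes from the open regions $\max_y \eta_y \lessgtr 1-c$ (the only ones on which rejection calibration constrains $\dagr_{\veceta}$, since the definition exempts points with $r^*(\vecx)=0$) to the closed regions appearing in \eqref{eq:rejection_calibration_iff}; that is exactly the continuity issue you raise. For sufficiency, the paper defers to Appendix~\ref{subsec:case_study}, which is a computation for the exponential-loss case rather than a general argument; in particular, the degenerate possibility $h_{\veceta}(0)=0$ at an interior point of the accept region --- where $\dagr_{\veceta}=0$ is forced to be the unique minimizer by the assumption $\left.\frac{\partial^2 W(r,\dagf_{\veceta};\veceta)}{\partial r^2}\right|_{r=0}>0$, so the learned rejector rejects a point that $r^*$ accepts --- is not excluded anywhere in the paper. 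So your attempt matches the paper's proof in substance, and it is in fact more explicit than the paper about where the second-derivative hypothesis enters and about what would still need to be checked for the stated ``if and only if'' to hold in full rigor.
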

The proof of this theorem is given in Appendix~\ref{append_iff}.
The following corollary is a weaker version of this theorem but gives more insight into the strength of the requirement for rejection calibration.
\begin{corollary}[Necessary condition for rejection calibration] \label{thm:rejection_calibration}
Under the same assumption as Theorem~\ref{thm:rejection_calibration_iff},
$\dagr$ is rejection-calibrated only if
%  Assume that $\loss$ is a convex function of class $C^1$ with respect to $r$.
%  Let $(\dagr, \dagf)$ be the minimizer of the surrogate risk $R$ over all measurable functions.
%  Then, if $\dagr$ is rejection-calibrated, we have
	\begin{align}
	\sup_{\veceta:\,\max_y \eta_y = 1-c} \left. \pdiff{W(r, \dagf_{\veceta}; \veceta)}{r} \right|_{r=0}
	= 0,
	\qquad \inf_{\veceta:\,\max_y \eta_y = 1-c} \left. \pdiff{W(r, \dagf_{\veceta}; \veceta)}{r} \right|_{r=0}
	= 0 \label{eq:rejection_calibration_AB}.
  \end{align}
%   \begin{align}
% 	\sup_{\veceta:\,\max_y \eta_y = 1-c} \left. \pdiff{W(r, \dagf_{\veceta}; \veceta)}{r} \right|_{r=0}
% 	= 0  \label{eq:rejection_calibration_A},\\
% 	\inf_{\veceta:\,\max_y \eta_y = 1-c} \left. \pdiff{W(r, \dagf_{\veceta}; \veceta)}{r} \right|_{r=0}
% 	= 0 \label{eq:rejection_calibration_B}.
%   \end{align}
%    \begin{minipage}{0.5\hsize}
%	\begin{align}
%	\sup_{\veceta:\,\max_y \eta_y = 1-c} \left. \pdiff{W(r, \dagf_{\veceta}; \veceta)}{r} \right|_{r=0}
%	= 0  \label{eq:rejection_calibration_A},
%	\end{align}
%  \end{minipage}
%  \begin{minipage}{0.5\hsize}
%  \begin{align}
%	\quad \inf_{\veceta:\,\max_y \eta_y = 1-c} \left. \pdiff{W(r, \dagf_{\veceta}; \veceta)}{r} \right|_{r=0}
%	= 0 \label{eq:rejection_calibration_B}.
%  \end{align}
%  \end{minipage}
\end{corollary}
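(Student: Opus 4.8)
The plan is to obtain the corollary as an immediate consequence of Theorem~\ref{thm:rejection_calibration_iff}, by restricting the two-sided inequality \eqref{eq:rejection_calibration_iff} to the boundary set $\{\veceta : \max_y \eta_y = 1-c\}$. Writing $D(\veceta) := \left.\pdiff{W(r, \dagf_{\veceta}; \veceta)}{r}\right|_{r=0}$ for brevity, the forward direction of the theorem tells us that rejection calibration of $\dagr$ forces
\[
\sup_{\veceta:\, \max_y \eta_y \geq 1-c} D(\veceta) \;\leq\; 0 \;\leq\; \inf_{\veceta:\, \max_y \eta_y \leq 1-c} D(\veceta).
\]
The key observation is that the boundary set lies inside \emph{both} index sets simultaneously: a vector $\veceta_0$ with $\max_y \eta_{0,y} = 1-c$ satisfies both $\max_y \eta_{0,y} \geq 1-c$ and $\max_y \eta_{0,y} \leq 1-c$.

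First I would exploit this two-sided membership. Since $\veceta_0$ belongs to the region $\{\max_y \eta_y \geq 1-c\}$, it contributes to the supremum, so $D(\veceta_0) \leq \sup_{\max_y \eta_y \geq 1-c} D \leq 0$. Since $\veceta_0$ also belongs to the region $\{\max_y \eta_y \leq 1-c\}$, it contributes to the infimum, so $D(\veceta_0) \geq \inf_{\max_y \eta_y \leq 1-c} D \geq 0$. Combining these two bounds pinches $D(\veceta_0)$ between $0$ and $0$, whence $D(\veceta_0) = 0$ for \emph{every} $\veceta_0$ on the boundary. Taking the supremum and the infimum of the identically-zero function $D$ over the boundary set then yields both equalities in \eqref{eq:rejection_calibration_AB}.

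The argument is short precisely because the corollary is a genuine weakening of the theorem: the theorem constrains $D$ on two full half-spaces, whereas the corollary merely isolates the implication on the decision boundary $\max_y \eta_y = 1-c$, which is exactly where the sign of $r^*$ changes. The only point requiring care is that the boundary set $\{\veceta : \max_y \eta_y = 1-c\}$ be nonempty, so that the supremum and infimum are taken over a nonempty set and the equalities are meaningful. This holds under the standing assumption $c < 1/2$: since $K \geq 2$ we have $1-c > 1/2 \geq 1/K$, and for any target value $m \in [1/K, 1]$ one can place a probability vector on the simplex with largest coordinate equal to $m$ (set one coordinate to $m$ and spread the remaining mass $1-m \leq (K-1)m$ over the other coordinates), in particular for $m = 1-c$. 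I do not anticipate a genuine obstacle here; the substantive content resides entirely in Theorem~\ref{thm:rejection_calibration_iff}, and the corollary is the clean reading of that result at the rejection threshold.
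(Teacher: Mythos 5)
Your proposal is correct and takes essentially the same approach as the paper: the paper likewise deduces the corollary directly from the necessary direction of Theorem~\ref{thm:rejection_calibration_iff}, using the fact that passing from the constraint set $\{\veceta:\max_y\eta_y\ge 1-c\}$ (resp.\ $\{\veceta:\max_y\eta_y\le 1-c\}$) to the boundary $\{\veceta:\max_y\eta_y= 1-c\}$ cannot increase the supremum (resp.\ decrease the infimum), which squeezes both boundary quantities to $0$. Your pointwise pinching of $D(\veceta_0)$ between the two bounds is the same squeeze argued point by point, with the additional (harmless) verification that the boundary set is nonempty, which the paper leaves implicit.
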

This corollary is straightforward from the relation
	\begin{align*}
 \inf_{\veceta:\,\max_y \eta_y \le 1-c}
 h(\veceta)
\le
 \inf_{\veceta:\,\max_y \eta_y = 1-c}  h(\veceta)
\le
	\sup_{\veceta:\,\max_y \eta_y = 1-c}
	 h(\veceta)
\le
	\sup_{\veceta:\,\max_y \eta_y \ge 1-c} h(\veceta)
 \end{align*}
for any function $h(\veceta)$.
The conditions
in \eqref{eq:rejection_calibration_AB}
%~\eqref{eq:rejection_calibration_A} and \eqref{eq:rejection_calibration_B}
require that the supremum and the infimum of the objective function $\pdiff{W(r, \dagf_{\veceta}; \veceta)}{r} \Big|_{r=0}$ coincide under the same constraint.
Therefore, the objective function is required to depend only on $\max_y \eta_y$, but not on the class probabilities of other classes.
Whereas $\max_y \eta_y$ uniquely determines the other probability as $1-\max_y \eta_y$ in the binary case, it still allows a degree of freedom in the multiclass case, which results in the situation where two conditions in \eqref{eq:rejection_calibration_AB} do not necessarily hold simultaneously.

The failure of the classifier-rejector approach is intuitively explained as follows.
The Bayes-optimal rejector $r^*$ must be determined only from $\max_y \eta_y$.
Nevertheless, the classifier-rejector approach ignores this requirement and tries to directly construct a rejector $r$, which does not satisfy this requirement in general. This contrasts to the rejector in \eqref{eq:conf_rejector} obtained by the confidence-based approach, where the requirement is encoded by the inverse link function and the max operator.

\begin{remark} \label{remark} {\rm
An error of the rejector can be classified into
{\it False Reject} ($\FR$)
and {\it False Accept} ($\FA$),
which correspond to the outcomes when the rejector mistakenly rejects (resp.~accepts) the data that should be accepted (resp.~rejected).
We can see from close inspection of the proof of Theorem~\ref{thm:rejection_calibration_iff}
that
the first inequality of
\eqref{eq:rejection_calibration_iff}
is the condition for the $\FR$ rate to be zero, while the second inequality is the condition for the $\FA$ rate to be zero.
}\end{remark}

To understand the above difference between the binary and multiclass cases more precisely, let us consider the following example so that
the conditions in \eqref{eq:rejection_calibration_AB} are explicitly written.
Define two surrogate losses given by
\begin{align}
    \loss_{\mpc}(r, f; \vecx, y)
    &= \sum_{y' \neq y} \phi \Bigl( \alpha \bigl( g_y(\vecx) - g_{y'}(\vecx) \bigr) \Bigr) \psi (-\alpha r(\vecx))+ c \psi \bigl(\beta r(\vecx)\bigr), \label{eq:loss_mpc} \\
   	\loss_{\apc}(r, f; \vecx, y)
   	&= \sum_{y' \neq y} \phi \Bigl( \alpha \bigl( g_y(\vecx) - g_{y'}(\vecx) - r(\vecx) \bigr) \Bigr) + c \psi \bigl(\beta r(\vecx)\bigr), \label{eq:loss_apc}
\end{align}
which we call the multiplicative pairwise comparison (MPC) loss and the additive pairwise comparison (APC) loss, respectively.
Here, $\phi$ and $\psi$ are convex losses that bound $\vecone_{[z\leq 0]}$ from above, and $\alpha$ and $\beta$ are positive constants that control the performance of the rejector.
Note that the pairwise comparison loss is
often used as a multiclass extension of a binary loss \cite{Weston1998}.
Also note that the APC loss reduces to the PB loss \cite{Cortes2016_2} in~\eqref{eq:pb_loss} when $K=2$.
Here
the MPC loss and the APC loss
are natural ones at least for the purpose of classification in the sense that
the classifiers induced by them are classification-calibrated 
(see Appendix~
\ref{subsec:order} for the proof).
Nevertheless, when $\phi$ and $\psi$ are exponential losses,
\eqref{eq:rejection_calibration_AB} gives the following conditions:
\begin{align} 
  \frac{\beta}{\alpha} = (K-2) + 2 \sqrt{(K-1) \frac{1-c}{c}}, \hspace{5mm}
  \frac{\beta}{\alpha} = 2 \sqrt{\frac{1-c}{c}},
  \label{cond_exponential}
\end{align} 
which recover the result proved by \citet{Cortes2016_2} when $K=2$
(see Appendix~\ref{subsec:case_study} for details).
Here the RHSs of \eqref{cond_exponential}
for $K > 2$ are not identical and therefore
we cannot find any $\alpha$ and $\beta$
satisfying the above equations,
even though we get a classification-calibrated classifier.
This implies the failure in rejection calibration. Not only when $\phi$ and $\psi$ are exponential losses, we can also prove the failure of the classifier-rejector approach when $\phi$ and $\psi$ are logistic losses using the same proof technique (see Appendix~\ref{subsec:case_study}).

Note that, strictly speaking, it remains an open question whether it is possible to find a calibrated surrogate loss in the classifier-rejector approach. In this paper, our result emphasizes that calibration in the multiclass scenario is significantly more difficult. Intuitively, a necessary condition in Corollary~\ref{thm:rejection_calibration} is relatively easy to satisfy for $K=2$ but it is not the case when $K>2$, as illustrated in our examples.

\section{An analysis of the confidence-based approach}
This section focuses on the extension of the confidence-based approach to the multiclass case using smooth losses.
When we need some confidence score in the multiclass case, it is convenient to consider a class of loss functions called strictly proper composite losses~\cite{reid2010} defined as follows. 
 \begin{definition}[Strictly proper composite loss~\cite{reid2010}]{\rm
  A loss $\loss$ is strictly proper composite with link function $\vecPsi: [0,1]^K \to \R^K$ if the pointwise risk $W$ of $\loss$ satisfies $\argmin_{\vecg} W(\vecg; \veceta) = \vecPsi(\veceta) = [\Psi_1(\veceta), \ldots, \Psi_K(\veceta)]^\top$.
  }
\end{definition}

With this class of losses, the threshold $\theta$ used in the rejector derived in~\citet{yuan2010} is expressed as $\Psi_1\big((1-c,c)\big)$ in the binary case.
However,
unlike the binary case,
it is known that the link function $\vecPsi$ sometimes does not have a closed form whereas the inverse link function $\vecPsi^{-1}$ often does in multiclass classification~\cite{reid2010}.
Thus, when we design a rejector in the multiclass case, it would be natural to use the inverse link function to map output $\hatvecg$ to the estimated class probability vector $\widehat{\veceta}$ rather than to use the link function itself as in the binary case.
Based on this discussion,
we consider
the following rejector based on the relationship between the inverse link $\Psi_y^{-1}$ and the Bayes-optimal rejector $r^*(\vecx)=\max_{y\in\domy} \eta_y(\vecx) - (1-c)$:
\begin{align}
  r(\vecx) = r_f(\vecx) = \max_{y \in \domy} \Psi_y^{-1}\big( \vecg(\vecx) \big) - (1-c). \label{eq:conf_rejector}
\end{align}
Recall that we identify the classifier $f$ with $\vecg$, and we use the notation $r_f$ in the sense that $r$ is determined by $f$.
Below, we focus on two frequently used losses: one-versus-all (OVA) loss $\loss_\ova$ and cross-entropy (CE) loss $\loss_\ce$:
\begin{align}
  \loss_\ova(f; \vecx, y) &= \phi \big( g_{y}(\vecx)\big) + \sum_{y' \neq y} \phi \big( -g_{y'}(\vecx) \big), \label{eq:ova} \\
  \loss_\ce(f; \vecx, y) &= - g_y(\vecx) + \log \sum_{y' \in \domy} \exp \big(g_{y'}(\vecx) \big) \notag,
\end{align}
for which the inverse link functions are given by
\begin{align}
  \Psi_{y,\,\ova}^{-1}(\vecg) = \frac{\phi'(-g_y)}{\phi'(-g_y) + \phi'(g_y)},
\qquad
  \Psi_{y,\,\ce}^{-1}(\vecg) = \frac{\exp(g_y)}{\sum_{y' \in \domy}\exp(g_{y'})},
  \label{eq:inverse_link}
\end{align}
respectively.
Here, $\phi$ denotes a margin loss~\citep{Bartlett2006}.
Note that unlike the losses proposed in
\citet{Ramaswamy2018}, the OVA loss and the CE loss do not contain $c$.
Thus, training a classifier once is sufficient for various choices of $c$.

We rely on the notion of excess risk bounds to prove the calibration result of the OVA loss and the CE loss. 
Excess risk bounds~\cite{Zhang2004, Bartlett2006, Pires2016} are a tool to directly quantify the relationship between the surrogate risk $R$ and the risk we truly want to minimize.
In our problem, the true risk is the $\zoc$ risk in \eqref{eq:zoc_risk}
and the excess risk bound to be derived is expressed as
\begin{align} \label{eq:ERB_def}
   \xi \big( \Delta R_{\zoc}(r_f, f) \big) \leq \Delta R(f),
\end{align}
where $\xi: \R \to \R_{\geq 0}$ is called a calibration function~\cite{Pires2016}, which is increasing, continuous at $0$ and satisfies $\xi(0) = 0$.
Here, excess risks
$\Delta R_{\zoc}(r_f, f)$ and $\Delta R(f)$ are defined as follows:
\begin{align*}
  \Delta R_{\zoc}(r_f, f) &= R_{\zoc}(r_f, f) - R_{\zoc}(r^*, f^*), \\
  \Delta R(f) &= R(f) - \inf_{f': \measure}R(f').
\end{align*}
Ineq.~\eqref{eq:ERB_def} ensures that the minimization of a surrogate risk leads to the minimization of the $\zoc$ risk.
Therefore, the existence of an excess risk bound guarantees calibration.

%We first show the results regarding OVA loss.
%To begin with, let us define the excess pointwise risk as
%\begin{align*}
%    \Delta W_\ova (f; \veceta) = W_\ova(f; \veceta) - \inf_{\vecg \in\R^K} W_\ova(f; \veceta).
%\end{align*}
%The inverse link function of OVA loss is expressed as
%\begin{align}
%  \Psi_y^{-1}(\vecg) = \frac{\phi'(-g_y)}{\phi'(-g_y) + \phi'(g_y)}. \label{eq:link_ova}
%\end{align}
\begin{table}[t]
  \begin{center}
	\caption{A list of margin losses and the values of $\theta$, $C$ and $s$ that satisfy \eqref{eq:phi_theta} and \eqref{eq:ova1} in Theorem~\ref{thm:erb_ova}.} \label{table:ova_binary_losses}
	\vskip 0.1in
	\scalebox{0.8}{
	{\renewcommand\arraystretch{1.3}
	  \begin{tabular}{c|cccc}
	
		Loss Name     & $\phi(z)$              & $\theta$                          & $C$                  & $s$ \\ \hline
		Logistic      & $\log \big(1 + \exp(-z)\big)$  & $\log\frac{1-c}{c}$               & $\frac{1}{2}$        & $2$ \\
		Exponential   & $\exp(-z)$             & $\frac{1}{2} \log \frac{1-c}{c}$  & $\frac{1}{\sqrt{2}}$ & $2$ \\
		Squared       & $(1-z)^2$              & $1-2c$                            & $\frac{1}{2}$ & $2$ \\
		Squared Hinge & $(1-z)_+^2$            & $1-2c$                            & $\frac{1}{2}$        & $2$ \\
		
	  \end{tabular}
	}
	}
  \end{center}
\end{table}

Now we give excess risk bounds for the OVA loss and the CE loss in the following theorems.
%The following theorem gives an excess risk bound for OVA loss.
\begin{theorem}[Excess risk bound for OVA loss] \label{thm:erb_ova}
  Assume that $\phi$ is a convex function, and there exists $\theta > 0$ such that $\phi'(\theta)$ and $\phi'(-\theta)$ both exist, $\phi'(\theta) < 0$ and
  \begin{align}
	\dfrac{\phi'(-\theta)}{\phi'(-\theta) + \phi'(\theta)} = 1-c. \label{eq:phi_theta}
  \end{align}
  In addition, suppose that there exist some constants $C > 0$ and $s \geq 1$ such that
  %for all $y \in \domy$ and probability vector $\veceta$, the following holds:
  \begin{align}
	\inf_{\vecg:\; g_y = \theta}\left\{
	W_\ova(f; \veceta) - \inf_{\vecg' \in\R^K} W_\ova(f'; \veceta)\right\}
	%\Delta W_\ova(f; \veceta)
	&\geq C^{-s} |\eta_{y} - (1-c)|^s \label{eq:ova1}
  \end{align}
  for all $y \in \domy$ and probability vector $\veceta$.
  Then, for all $f$ and $c \in \big[ 0, \frac{1}{2} \big)$, we have
  \begin{align}
	( 2C )^{-s} \Delta R_{\zoc}(r_f, f)^s \leq \Delta R_\ova(f). \label{eq:erb_ova}
  \end{align}
\end{theorem}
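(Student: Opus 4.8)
The plan is to prove a \emph{pointwise} bound $\Delta W_{\zoc}(r_f,f;\veceta)\le 2C\,\bigl(\Delta W_\ova(f;\veceta)\bigr)^{1/s}$ for every $\veceta$ and then lift it to \eqref{eq:erb_ova} by integration. Since the pointwise $\zoc$ risk is $W_{\zoc}(r_f,f;\veceta)=(1-\eta_f)\vecone_{[r_f>0]}+c\,\vecone_{[r_f\le 0]}$ and is minimized by $(r^*,f^*)$ with value $\min(1-\eta_{\max},c)$ (writing $\eta_{\max}=\max_y\eta_y$ and $f=\argmax_y g_y$), we have $\Delta R_{\zoc}(r_f,f)=\E[\Delta W_{\zoc}]$ and $\Delta R_\ova(f)=\E[\Delta W_\ova]$. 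Raising the pointwise bound to the $s$-th power gives $\Delta W_{\zoc}^s\le (2C)^s\,\Delta W_\ova$, and because $t\mapsto t^s$ is convex for $s\ge 1$ and $\Delta W_{\zoc}\ge 0$, Jensen's inequality yields $\bigl(\Delta R_{\zoc}(r_f,f)\bigr)^s=\bigl(\E[\Delta W_{\zoc}]\bigr)^s\le \E[\Delta W_{\zoc}^s]\le (2C)^s\,\E[\Delta W_\ova]=(2C)^s\,\Delta R_\ova(f)$, which is exactly \eqref{eq:erb_ova}. All the work is therefore in the pointwise bound.

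First I would exploit the coordinatewise structure of the OVA loss. Expanding $W_\ova$ and regrouping shows $W_\ova(f;\veceta)=\sum_{y\in\domy}h_y(g_y)$ with $h_y(g_y)=\eta_y\phi(g_y)+(1-\eta_y)\phi(-g_y)$, so the excess decomposes as $\Delta W_\ova=\sum_y\Delta h_y(g_y)$ into nonnegative per-coordinate excesses. Each $h_y$ is convex (as $\phi$ is convex), minimized at the point $g_y^*$ characterized by $\widehat{\eta}_y=\eta_y$ where $\widehat{\eta}_y=\Psi_{y,\ova}^{-1}(\vecg)$, and since the inverse link $g\mapsto\Psi_{y,\ova}^{-1}$ is increasing we have $\argmax_y g_y=\argmax_y\widehat{\eta}_y=f$ and $g_y^*>\theta\iff\eta_y>1-c$ (using \eqref{eq:phi_theta}, i.e.\ $\widehat{\eta}=1-c$ at $g=\theta$). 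The key lemma is: if $g_y$ lies on the opposite side of $\theta$ from $g_y^*$, then by convexity ($\Delta h_y$ is monotone away from its minimizer $g_y^*$) we get $\Delta h_y(g_y)\ge\Delta h_y(\theta)$, and \eqref{eq:ova1}, which under the decomposition reads exactly $\Delta h_y(\theta)\ge C^{-s}|\eta_y-(1-c)|^s$, then gives $|\eta_y-(1-c)|\le C\,(\Delta W_\ova)^{1/s}$.

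Next I would carry out a pointwise case analysis of $\Delta W_{\zoc}$. If the accept/reject decision and the prediction are both correct the excess is $0$. A false reject contributes $\eta_{\max}-(1-c)$: here $r_f\le 0$ forces $g_f\le\theta$, hence $g_{f^*}\le g_f\le\theta<g_{f^*}^*$, so the lemma on coordinate $f^*$ bounds it by $C\,(\Delta W_\ova)^{1/s}$. A false accept contributes $(1-c)-\eta_f$: here $g_f>\theta$ while $g_f^*<\theta$ (since $\eta_f\le\eta_{\max}<1-c$), so the lemma on coordinate $f$ bounds it. The remaining and main case is a pure misclassification in the accept region, with excess $\eta_{\max}-\eta_f$. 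Here I would crucially use $c<1/2$: it forces $\eta_{\max}>1/2$ and hence $\eta_f<1/2<1-c$, so coordinate $f$ is always on the wrong side of $\theta$ (as $g_f>\theta$ but $g_f^*<\theta$), giving $(1-c)-\eta_f\le C\,(\Delta W_\ova)^{1/s}$. Combining with the elementary bound $\eta_{\max}\le 1-\eta_f$ (so $\eta_{\max}-(1-c)\le c-\eta_f\le (1-c)-\eta_f$, again using $c<1/2$) yields $\eta_{\max}-\eta_f=(\eta_{\max}-(1-c))+((1-c)-\eta_f)\le 2C\,(\Delta W_\ova)^{1/s}$.

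The hard part will be this misclassification case: both scores may exceed $\theta$, so the naive threshold argument on coordinate $f^*$ can fail, and the resolution is precisely the observation that $c<1/2$ pins $\eta_f$ strictly below the threshold, allowing a single coordinate-$f$ estimate to control both halves of $\eta_{\max}-\eta_f$ and produce the factor $2$. A secondary technical point to nail down is the monotonicity of $\Psi_{y,\ova}^{-1}$ and the identification of $g_y^*$ as the minimizer of $h_y$, which justify every ``wrong side of $\theta$'' comparison; these follow from $\phi$ being convex with $\phi'(\theta)<0$ and \eqref{eq:phi_theta}.
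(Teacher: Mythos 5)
Your proposal is correct, and its skeleton coincides with the paper's proof: reduce to the pointwise bound $\Delta W_{\zoc}(r_f,f;\veceta)\le 2C\,\Delta W_\ova(\vecg;\veceta)^{1/s}$, lift it by Jensen's inequality, run the same case analysis (correct decision, false reject, false accept, misclassification while accepting --- the paper's cases (A)(G)(H), (C)(D), (E)(F), (B)), reduce each nontrivial case to the constrained infimum at $g_y=\theta$ so that \eqref{eq:ova1} applies, and close the misclassification case by the same factor-$2$ use of $c<1/2$ (your split $\eta_{y^*}-\eta_f=(\eta_{y^*}-(1-c))+((1-c)-\eta_f)$ is the paper's inequality $|\eta_{y^*}-\eta_f|\le 2|\eta_f-(1-c)|$ in disguise). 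Where you genuinely depart is the execution of the central convexity step. The paper introduces constrained minimizers $\vecg^\dagger$, $\vecg^\sharp$, characterizes them by the first-order conditions \eqref{eq:opt_g} and \eqref{eq:opt_g2}, and applies the gradient inequality $W_\ova(\vecg;\veceta)-W_\ova(\vecg^\dagger;\veceta)\ge \nabla_{\vecg}W_\ova(\vecg^\dagger;\veceta)^\top(\vecg-\vecg^\dagger)$, signing the one surviving term. You instead exploit the separable structure $W_\ova(\vecg;\veceta)=\sum_y h_y(g_y)$ with $h_y(g)=\eta_y\phi(g)+(1-\eta_y)\phi(-g)$, so that $\Delta W_\ova=\sum_y \Delta h_y(g_y)\ge \Delta h_y(g_y)$, assumption \eqref{eq:ova1} becomes exactly $\Delta h_y(\theta)\ge C^{-s}|\eta_y-(1-c)|^s$, and each ``wrong side of $\theta$'' comparison is one-dimensional convex monotonicity. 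This buys simplicity and a little robustness: the paper's route needs stationarity at attained (constrained) minimizers in all $K$ coordinates, whereas yours needs nothing beyond the sign of the derivative of $h_y$ at $\theta$. On that point, one refinement for a full write-up: state the monotonicity lemma via $h_y'(\theta)=\eta_y\phi'(\theta)-(1-\eta_y)\phi'(-\theta)$, which is $<0$ when $\eta_y>1-c$ and $\ge 0$ when $\eta_y\le 1-c$ as a direct consequence of \eqref{eq:phi_theta} and $\phi'(\theta)<0$, rather than via the location of an unconstrained minimizer $g_y^*$, since for admissible $\phi$ (e.g., the exponential loss with $\eta_y\in\{0,1\}$) that minimizer need not be attained in $\R$; the same care applies to the monotonicity of $\Psi_{y,\,\ova}^{-1}$, which is guaranteed on the region where $\phi'(g_y)$ and $\phi'(-g_y)$ are both negative.
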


Table~\ref{table:ova_binary_losses} summarizes some margin losses with the values of $\theta$, $C$ and $s$ that satisfy the assumptions \eqref{eq:phi_theta} and \eqref{eq:ova1}. Their derivations are given in
Appendix~\ref{subsection:ova_binary_loss}.
%The proof can be found in Appendix~\ref{subsection:ova}.

%The derivations can be found in Appendix~\ref{subsection:ova_binary_loss}.

% As the notion of calibration is also called infinite-sample consistency, it implicitly considers the case where an infinite number of samples is available.
% % where g all optimization measurable functions without over-fitting

% Still, we gave this proposition to make the paper self-contained in the finite-sample scenario.
% Here the Rademacher complexity scales, for example, $\rad_n(\domg) = \mathrm{O}(1 / \sqrt{n})$ for linear-in-parameter models~\cite{mohri2012}.
% Note that we can also exploit the strong convexity of the losses for logistic loss, exponential loss and squared loss to obtain faster convergence rate as in~\citet{bartlett2002}.

%\red{We can also derive an excess risk bound for CE loss.}
\begin{theorem}[Excess risk bound for CE loss] 
\label{thm:erb_ce}
  For all $f$ and $c \in (0,1/2)$,
  %\big[ 0, \frac{1}{2} \big)$,
  we have
  \begin{align*}
	\frac{1}{2} \Delta R_{\zoc}(r_f, f)^2 \leq \Delta R_\ce(f). %\label{eq:erb_ce}
  \end{align*}
\end{theorem}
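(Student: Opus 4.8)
The plan is to establish the bound pointwise and then lift it to the full risk via Jensen's inequality. Recall from the earlier pointwise-risk reduction that both risks decompose pointwise, so $\Delta R_{\zoc}(r_f,f) = \E[\Delta W_{\zoc}(r_f,f;\veceta)]$ and $\Delta R_\ce(f) = \E[\Delta W_\ce(f;\veceta)]$, where $\Delta W_{\zoc}$ and $\Delta W_\ce(f;\veceta) = W_\ce(f;\veceta) - \inf_{\vecg'} W_\ce(f';\veceta)$ are the corresponding pointwise excess risks and the Bayes-optimal pair $(r^*,f^*)$ is attained at every $\vecx$. Writing $\widehat{\veceta} = [\widehat{\eta}_1,\dots,\widehat{\eta}_K]^\top$ for the softmax output $\widehat{\eta}_y = \Psi_{y,\,\ce}^{-1}(\vecg)$, I would first identify the pointwise CE excess risk as a KL divergence, namely $\Delta W_\ce(f;\veceta) = \kl(\veceta \,\|\, \widehat{\veceta})$. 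This follows because, using the shift-invariance of the softmax, the CE pointwise risk simplifies to the cross-entropy $-\sum_y \eta_y \log \widehat{\eta}_y$, whose minimum over $\vecg$ is the Shannon entropy of $\veceta$, attained at $\widehat{\veceta} = \veceta$; subtracting yields the KL divergence.

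The heart of the argument is a pointwise comparison of the induced $\zoc$ decisions. Since the softmax preserves the $\argmax$, the classifier predicts $\widehat{y} = \argmax_y \widehat{\eta}_y$ and the rejector $r_f$ accepts precisely when $\max_y \widehat{\eta}_y > 1-c$, whereas the Bayes rule predicts $y^\star = \argmax_y \eta_y$ and accepts when $\max_y \eta_y > 1-c$. I would prove the pointwise inequality
\[
\Delta W_{\zoc}(r_f,f;\veceta) \leq \| \veceta - \widehat{\veceta} \|_1
\]
by splitting into the four cases determined by whether the estimated and the Bayes-optimal rules accept or reject. In the both-accept case the excess risk is $\eta_{y^\star} - \eta_{\widehat{y}}$, which I bound by $|\eta_{y^\star} - \widehat{\eta}_{y^\star}| + |\eta_{\widehat{y}} - \widehat{\eta}_{\widehat{y}}|$ after inserting the nonpositive gap $\widehat{\eta}_{y^\star} - \widehat{\eta}_{\widehat{y}}$ (nonpositive by optimality of $\widehat{y}$ for $\widehat{\veceta}$); the both-reject case gives excess risk $0$; and in the two disagreement cases (the $\FR$ and $\FA$ cases of Remark~\ref{remark}) the excess risk equals $\eta_{y^\star}-(1-c)$ and $(1-c)-\eta_{\widehat{y}}$ respectively, each dominated by a single coordinate gap $|\eta_y - \widehat{\eta}_y|$ because the acceptance threshold $1-c$ then lies between the true and estimated probabilities at the relevant coordinate.

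To finish, I would invoke Pinsker's inequality $\kl(\veceta \,\|\, \widehat{\veceta}) \geq \tfrac12 \| \veceta - \widehat{\veceta} \|_1^2$ to upgrade the pointwise $\ell_1$ bound to $\tfrac12 \Delta W_{\zoc}(r_f,f;\veceta)^2 \leq \Delta W_\ce(f;\veceta)$, and then integrate. Since $\Delta W_{\zoc} \geq 0$ and $t \mapsto \tfrac12 t^2$ is convex, Jensen's inequality gives
\[
\tfrac12 \Delta R_{\zoc}(r_f,f)^2 = \tfrac12 \big(\E[\Delta W_{\zoc}]\big)^2 \leq \E\big[\tfrac12 \Delta W_{\zoc}^2\big] \leq \E[\Delta W_\ce] = \Delta R_\ce(f),
\]
which is the claim. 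The main obstacle is the case analysis for the pointwise $\ell_1$ bound: the disagreement cases require carefully using that the threshold $1-c$ separates the true and estimated top probabilities, and the both-accept case needs the optimality of $\widehat{y}$ under $\widehat{\veceta}$. Once this bound is in place, Pinsker's inequality delivers exactly the constant $\tfrac12$ appearing in the theorem.
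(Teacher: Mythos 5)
Your proposal is correct, and it follows the paper's overall architecture—pointwise reduction, identification of the CE excess risk with $\diver_\kl(\veceta \,\|\, \widehat{\veceta})$, a case analysis on accept/reject decisions, Pinsker's inequality, and Jensen's inequality to lift the pointwise bound—but it diverges from the paper in one substantive step. The paper handles the both-accept-but-misclassified case (case (B) in its Table~\ref{table:thm_erb}) by importing Theorem~\ref{thm:erb_ce_old} of \citet{Pires2016}, i.e., the known calibration function $\xi_\ce$ for CE in ordinary multiclass classification, together with the estimate $\xi_\ce(z) \geq \tfrac{1}{2}z^2$; in the remaining nontrivial cases it applies Pinsker's inequality separately and keeps only a single coordinate of the $\ell_1$ norm. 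You instead prove the uniform pointwise bound $\Delta W_{\zoc}(r_f,f;\veceta) \leq \|\veceta - \widehat{\veceta}\|_1$ across all four cases, handling the misclassification case elementarily via the three-term decomposition $\eta_{y^*} - \eta_{\widehat{y}} = (\eta_{y^*} - \widehat{\eta}_{y^*}) + (\widehat{\eta}_{y^*} - \widehat{\eta}_{\widehat{y}}) + (\widehat{\eta}_{\widehat{y}} - \eta_{\widehat{y}})$ and the optimality of $\widehat{y}$ under $\widehat{\veceta}$, then applying Pinsker once at the end. Your route buys self-containedness (no external calibration-function result, no Taylor-expansion argument for $\xi_\ce(z) \geq \tfrac{1}{2}z^2$) and a cleaner unified structure; the paper's route makes explicit the connection to the rejection-free CE calibration theory, which is what motivates its Figure~\ref{fig:ce_calibration_function} comparison of the two calibration functions. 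Both yield exactly the constant $\tfrac{1}{2}$, and your threshold-separation arguments in the $\FR$ and $\FA$ cases coincide with the paper's treatment of cases (C)(D) and (E)(F).
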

The proofs of Theorems~\ref{thm:erb_ova} and~\ref{thm:erb_ce} can be found in Appendices~\ref{subsection:ova}
and
\ref{subsection:ce_erb}, respectively.
%The proof can be found in Appendix~\ref{subsection:ce_erb}.
The derivation of the bound for the OVA loss is a natural extension of \citet{yuan2010} for the binary case.
On the other hand,
although the CE loss can be regarded as a generalization of the logistic loss in binary classification,
the derivation of the excess risk bound for the logistic loss in \citet{yuan2010} heavily relies on the binary setting and is not applicable to the multiclass case.
In fact,
the CE loss is generally hard to bound even in the setting without rejection as discussed in \citet{Pires2016}.
%the excess risk bound of CE loss for the binary case is derived in \cite{yuan2010}, which is not simply applicable to the multiclass case.
% NI: About the discussion here, CE loss reduces to logistic loss in binary case, so there is no discussion with respect to CE loss in binary classification with rejection.
In this paper, we reduce the analysis of the CE loss into that of the KL divergence instead of trying to extend the argument of \citet{yuan2010} or \citet{Pires2016}.
%We instead reduce the analysis to that of KL divergence, which
This enabled us to derive the bound in a considerably simple way.
% NI: Note that the excess risk bound for the CE loss is generally hard to be derived even in multiclass classification without rejection, and \cite{Pires2016} is the only paper that derives it, to the best of our knowledge.

%The inverse link function of CE loss
%is a softmax function
%\begin{align*}
%  $\Psi_y^{-1}(\vecg) = \frac{\exp(g_y)}{\sum_{y' \in \domy}\exp(g_{y'})}.$
%\end{align*}
%\blue{The proof can be obtained by using Pinsker's inequality combining with the result of~\citet{yuan2010}, and the calibration function from~\citet{Pires2016}. An estimation error bound for CE can also be derived in the similar way as the OVA loss.}

% Test
%Note that the calibration function obtained above is similar to the calibration function for multiclass classification without rejection in \citet{Pires2016}.

% We also derive estimation error bounds For both OVA and CE losses.
The excess risk bounds in Theorems~\ref{thm:erb_ova} and \ref{thm:erb_ce}
ensure that the minimization of the expected surrogate risk leads to the minimization of the $\zoc$ risk.
On the other hand,
we can also derive an estimation error bound for the above losses, which 
shows that
the minimization of the empirical surrogate risk leads to the minimization of the expected surrogate risk
for a finite number of samples with a hypothesis class of our interest.
Combining these results completes the scenario to minimize the $\zoc$ risk from finite number of samples under the considered hypothesis class.
Here the derivation of the estimation error bound using the notion of Rademacher complexity~\citep{bartlett2002} is given in Appendix~\ref{subsection:estimation_ova}.

\begin{figure*}[t]
  \begin{minipage}{0.33\hsize}
	\centering
	\includegraphics[width=4.2cm]{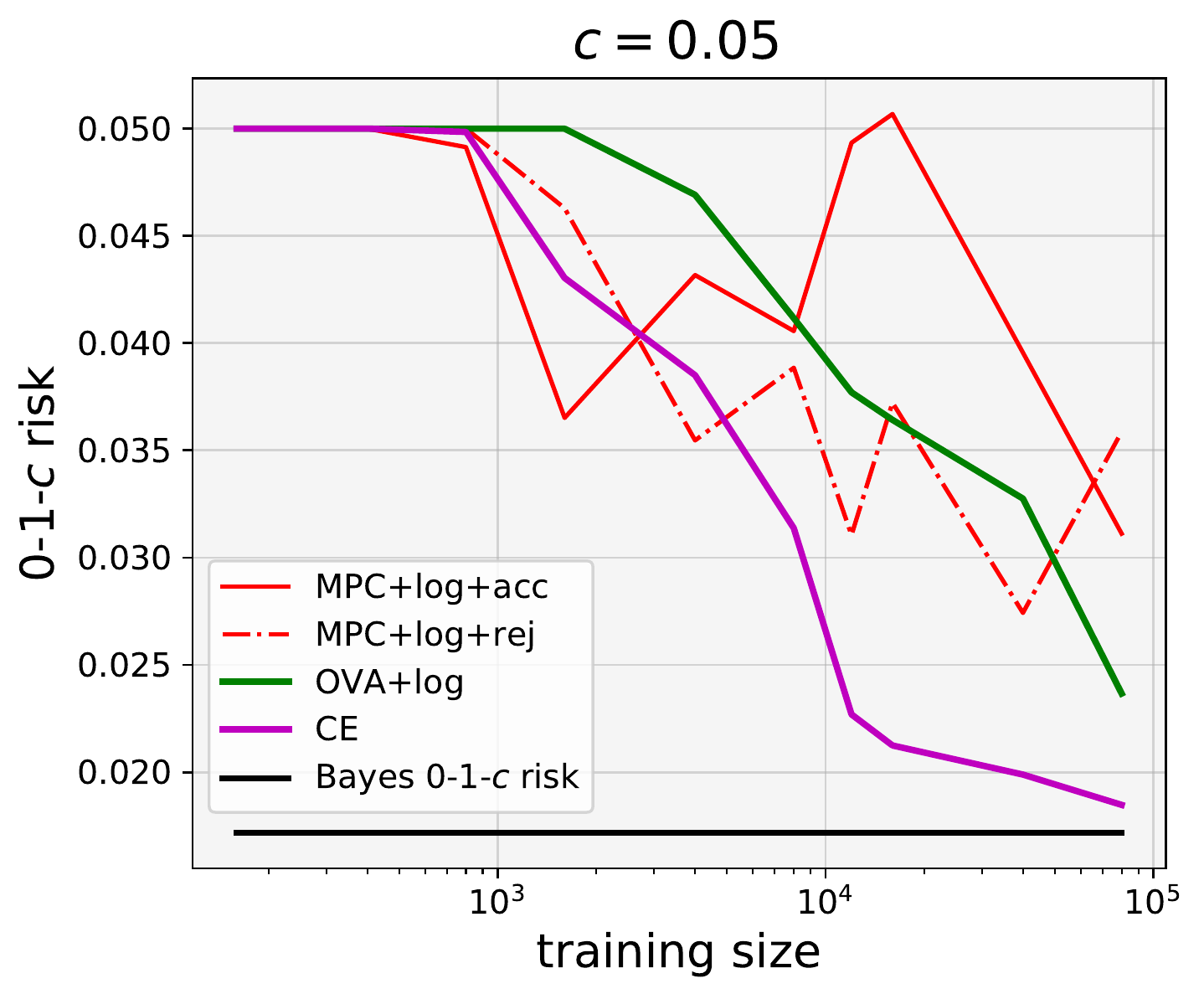}
  \end{minipage}
  \begin{minipage}{0.33\hsize}
	\centering
	\includegraphics[width=4.2cm]{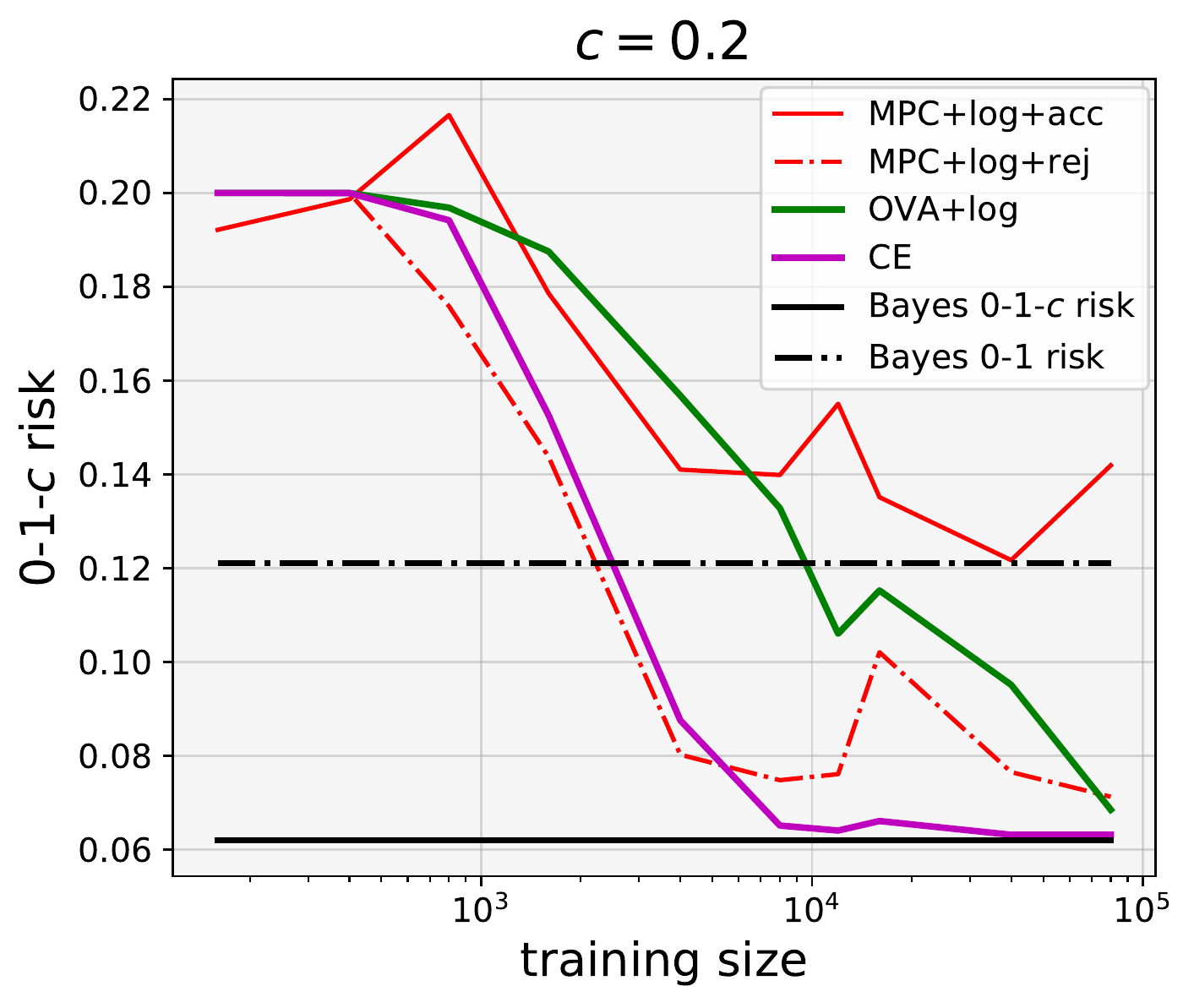}
  \end{minipage}
  \begin{minipage}{0.33\hsize}
	\centering
	\includegraphics[width=4.2cm]{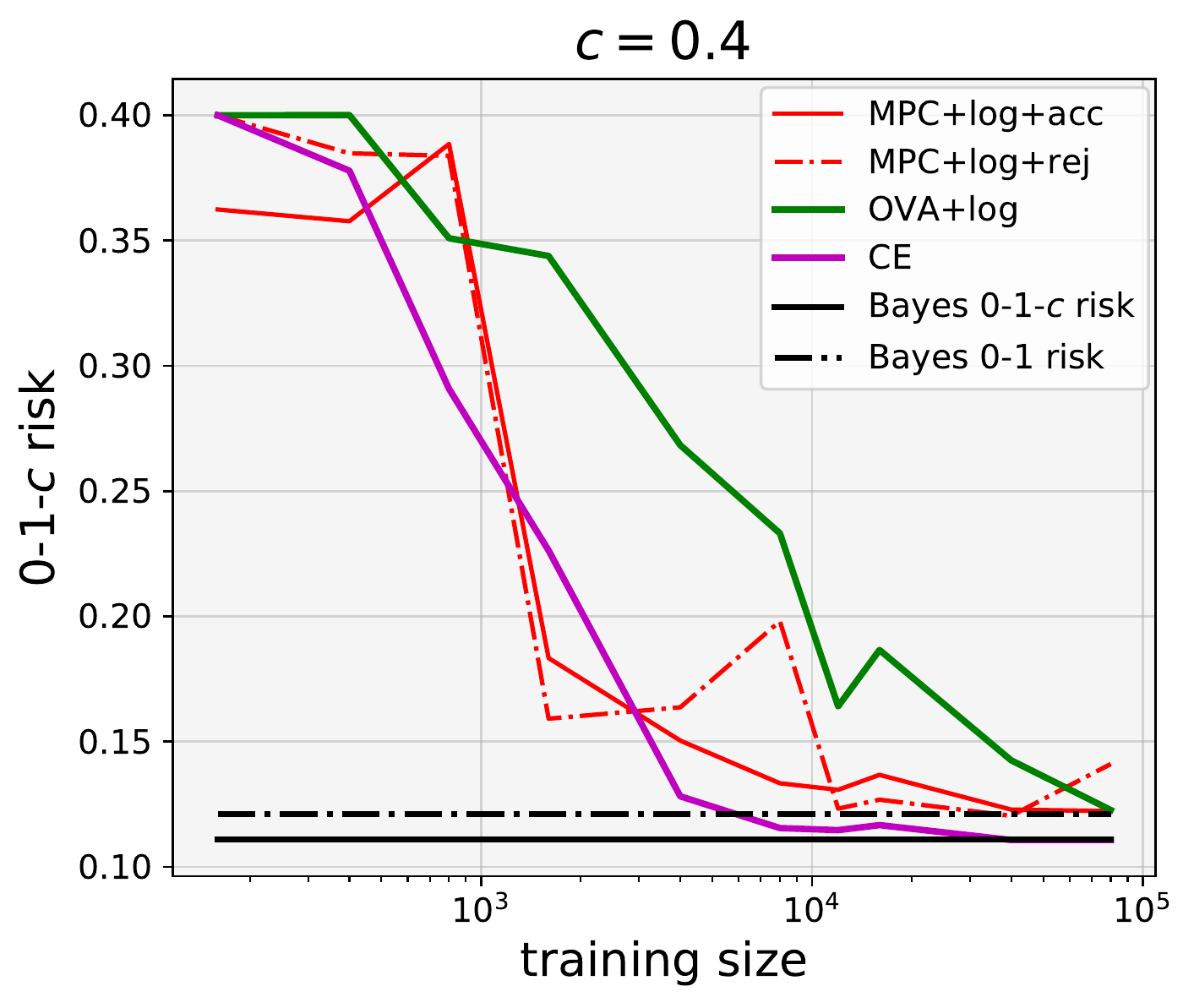}
  \end{minipage}
  \caption{Average $\zoc$ risk on the test data as a function of the training data size on synthetic datasets.}
  \label{fig:synthetic}
\end{figure*}
\section{Experiments}
In this section, we report the results of two experiments based on synthetic and benchmark datasets. 
The purpose of the experiment on synthetic datasets is to verify the performance of calibration for the setting where Bayes-optimal 0-1-c risk is available. On the other hand, we use benchmark datasets to evaluate the practical performance.

\textbf{Common setup:} 
For all methods, we used one-hidden-layer neural networks with the rectified linear units (ReLU) as activation functions, where the number of hidden units is 3 for synthetic datasets, and 50 for benchmark datasets.
We added weight decay with candidates $\{ 10^{-7}, 10^{-4}, 10^{-1} \}$.
AMSGRAD~\cite{adamamsgrad2018} was used for optimization.
More detailed setups can be found in Appendix~\ref{section:experiment}.

\textbf{Synthetic datasets:}
\begin{table}[t]
  \begin{center}
	\caption{Mean and standard deviation of the ratio (\%) of the rejected data over all test data on synthetic datasets when the training data size is 10,000 per class.} \label{table:synthetic_rejection_ratio}
% 	\vspace{0.1in}
	\footnotesize
	{\renewcommand\arraystretch{1.3}
	  \begin{tabular}{@{\hskip 0.01in}c@{\hskip 0.01in}|P{17mm}P{17mm}P{17mm}P{17mm}} \hline
		$c$  &    MPC+log+acc &  MPC+log+rej & OVA+log    & CE \\ \hline

		0.05 &  25.4 (8.6)    &  46.4 (7.6) & 43.9 (1.3) & 33.9 (0.5) \\
		0.2  &  0.0  (0.0)    &  23.2 (1.6)  & 31.5 (0.3) & 28.3 (1.5) \\
		0.4  &  0.0  (0.0)    &  28.8 (9.9)  & 23.1 (0.7) & 17.3 (0.8) \\
	\hline
	  \end{tabular}
	  }
  \end{center}
\end{table}
Here we report the performance of four methods analyzed in this paper.
For the classifier-rejector approach, we used the MPC loss with the logistic loss in \eqref{eq:loss_mpc}, where we used $\alpha = 1$ as in \citet{Cortes2016_2}.
To see the performance of the rejector, we set two values for $\beta$ to satisfy either of \eqref{eq:rejection_calibration_AB} denoted by MPC+log+acc and MPC+log+rej, respectively. 
It is expected that MPC+log+acc will  over-accept the data, and MPC+log+rej will over-reject the data as discussed in Remark~\ref{remark}.
For the confidence-based approach, we used the CE loss (CE) and OVA loss with the logistic loss in \eqref{eq:ova} denoted by OVA+log.
Synthetic data consist of eight classes. More detailed information on data generation process can be found in Appendix~\ref{subsection:synthetic}.

Figure~\ref{fig:synthetic} shows the average $\zoc$ risk on the test data for various training data size, where the lower $\zoc$ risk is the better.
CE shows the best performance in terms of convergence to the Bayes-optimal $\zoc$ risk for all values of $c$.
In spite of the theoretical guarantees of the confidence-based methods of OVA losses, they did not show better performance than the others.
A possible reason is that the inverse link function of the OVA loss is not normalized
as can be seen from \eqref{eq:inverse_link}, which resulted in poor estimation of class probability $\veceta(\vecx)$.
It is observed that the classifier-rejector methods (MPC+log+acc and MPC+log+rej) show unstable performance compared to the other methods.
Table~\ref{table:synthetic_rejection_ratio} shows the rejection ratio when the training data size is 10,000 per class.
We can confirm that MPC+log+acc tends to over-accept and MPC+log+rej tends to over-reject the data, which agrees with the discussion in Remark~\ref{remark}.

\begin{figure*}[t]
  \begin{minipage}{0.33\hsize}
	\centering
	\includegraphics[width=4.2cm]{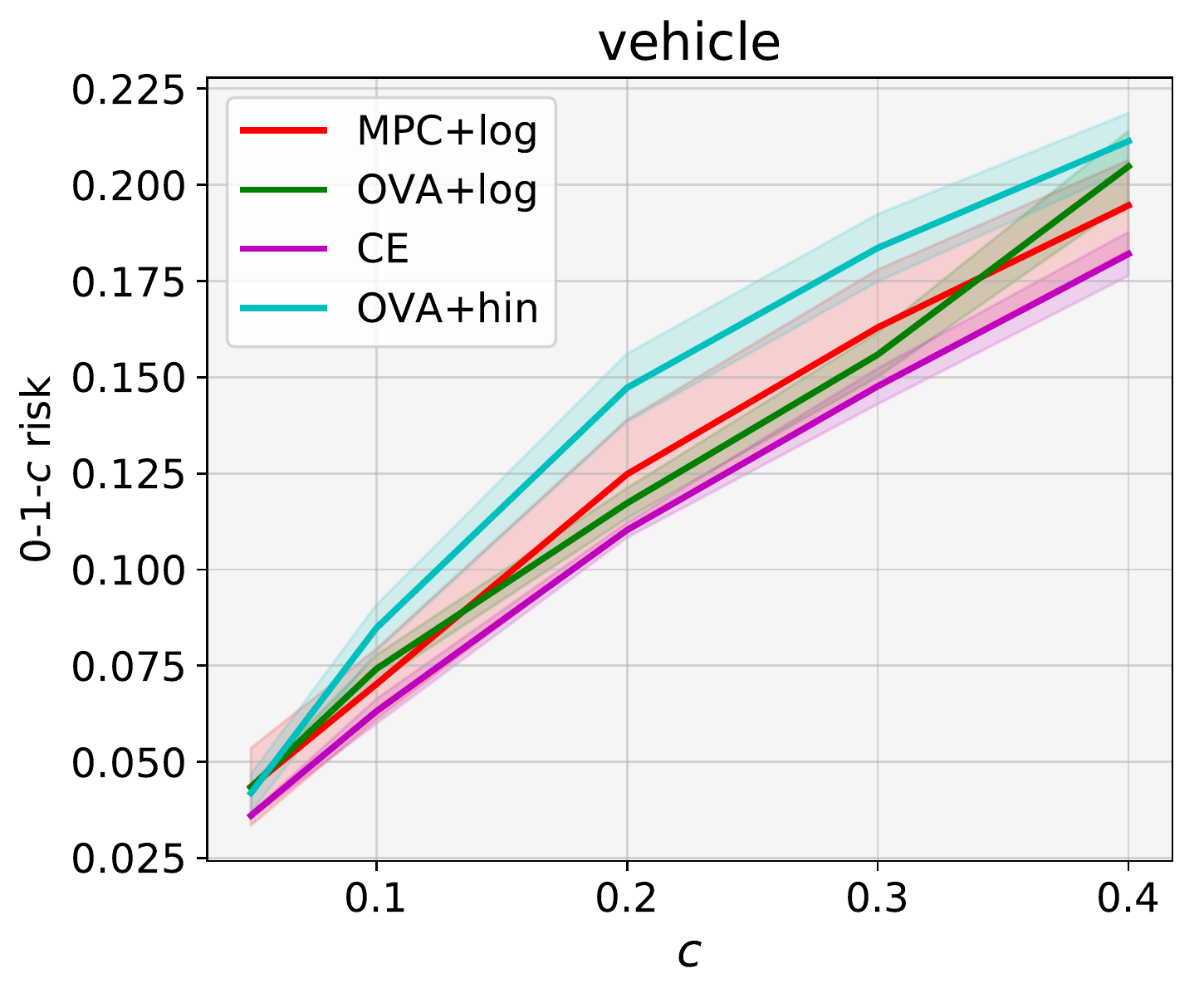}
  \end{minipage}
  \begin{minipage}{0.33\hsize}
	\centering
	\includegraphics[width=4.2cm]{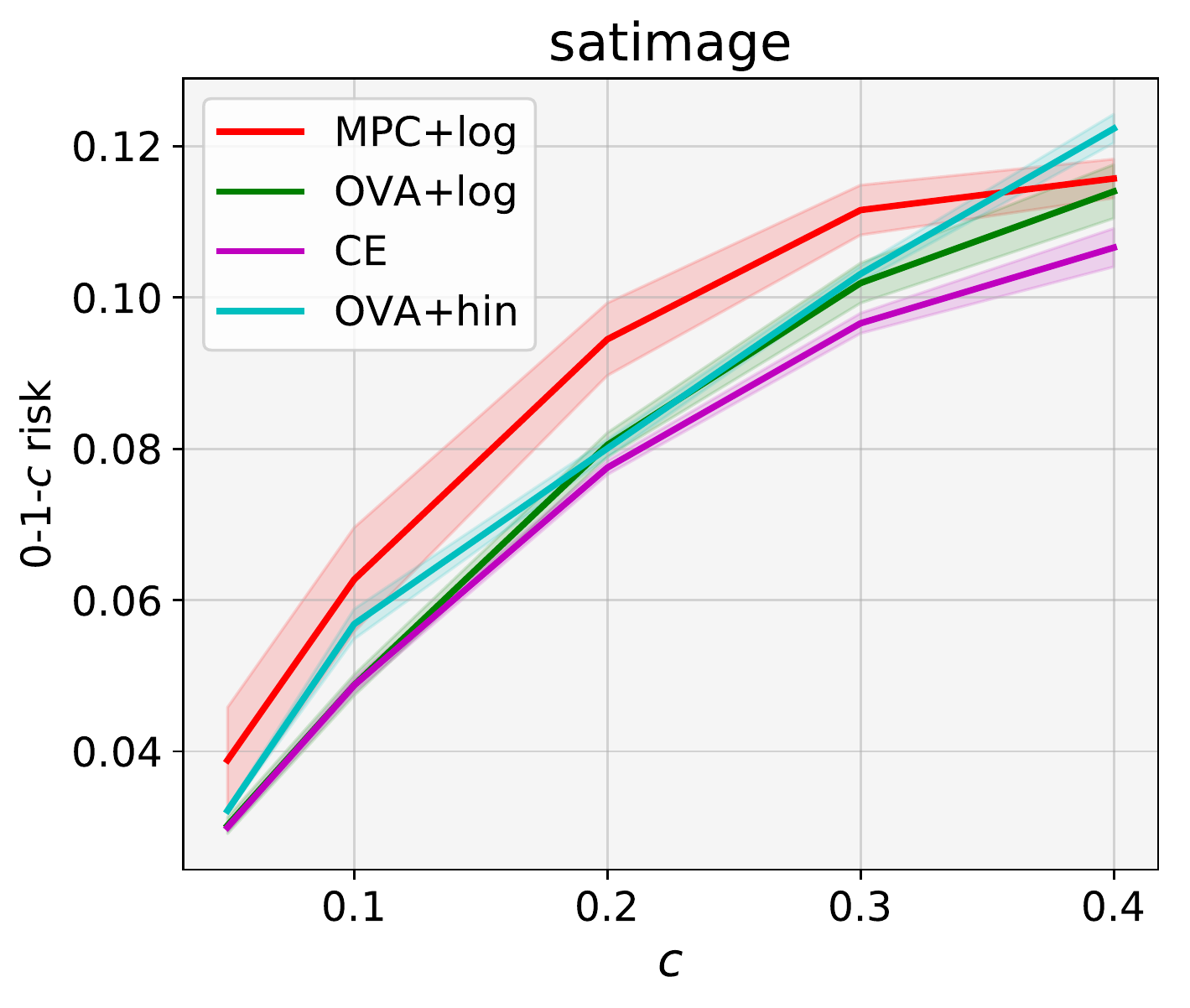}
  \end{minipage}
  \begin{minipage}{0.33\hsize}
	\centering
	\includegraphics[width=4.2cm]{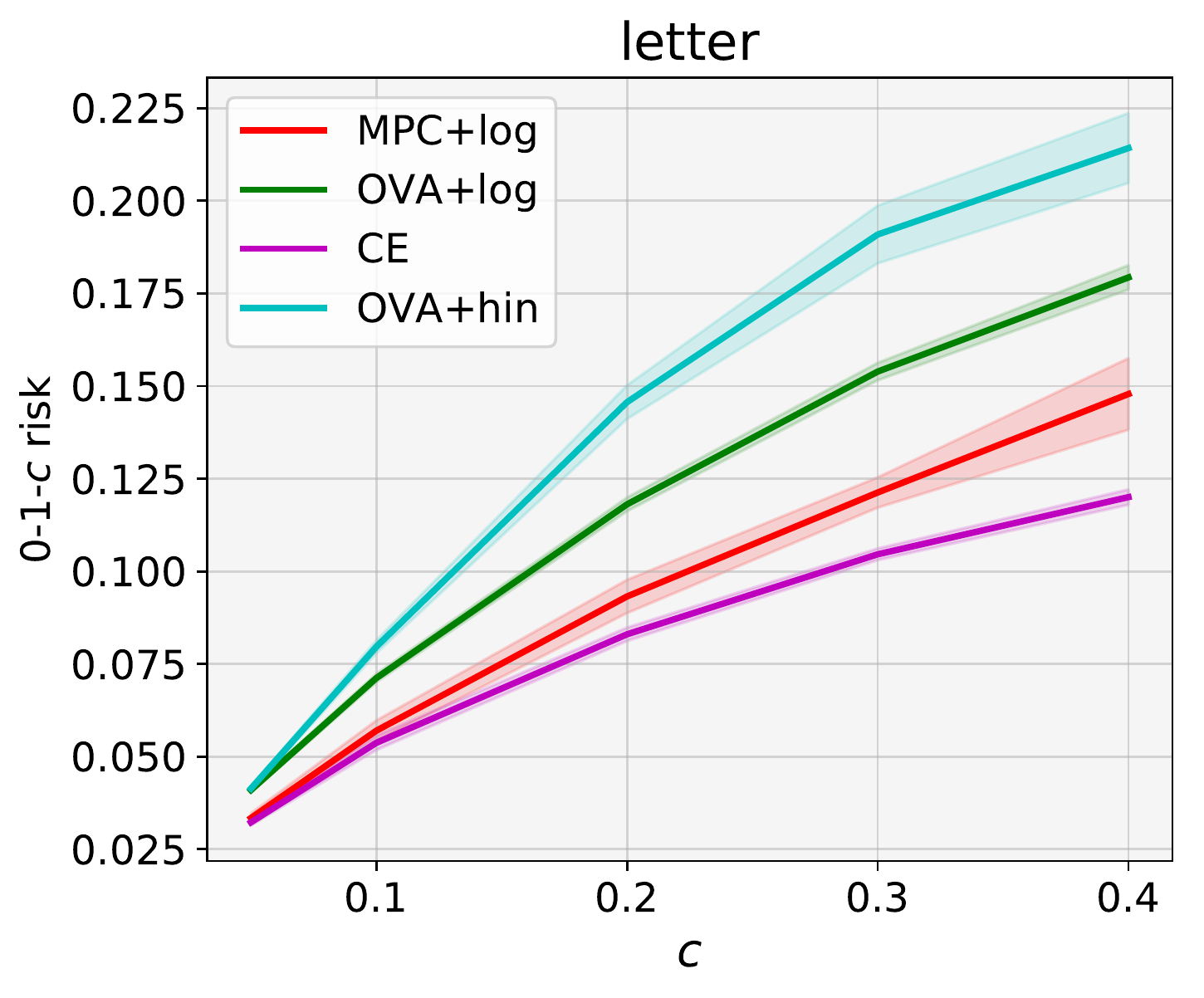}
  \end{minipage}
  \caption{Average and standard error of $\zoc$ risk on the test data as a function of rejection cost $c$ on benchmark datasets for 10 trials. The standard error is plotted in shaded regions.}
  \label{fig:benchmark}
\end{figure*}

\begin{table}[t]%[ht]
  \begin{center}
	\caption{Specification of benchmark datasets: the number of features, the number of classes, the number of training data, and the number of test data.} \label{table:dataset}
	\vspace{0.1in}
	\begin{tabular}{|c|c|c|c|c|} \hline
	  Name     & $\#$features & $\#$classes & $\#$train & $\#$test \\ \hline
	  vehicle  & 18           & 4           & 700       & 146      \\
	  satimage & 36           & 6           & 4435      & 2000     \\
	  yeast    & 8            & 10          & 1000      & 484      \\
	  covtype  & 54           & 7           & 15120     & 565892   \\ 
	  letter   & 16           & 26          & 15000     & 5000     \\ \hline
	\end{tabular}
  \end{center}
\end{table}

\textbf{Benchmark datasets:}
We compared the empirical performance using benchmark datasets
with rejection cost ranged over $c \in \{0.05, 0.1, 0.2, 0.3, 0.4\}$. 
In addition to APC+log, MPC+log, OVA+log and CE, we further implemented the existing method proposed in \citet{Ramaswamy2018} (OVA+hin), which uses OVA loss with non-smooth hinge loss in \eqref{eq:ova}.
We show the results of \emph{vehicle}, \emph{satimage}, \emph{yeast}, \emph{covtype} and \emph{letter} datasets from {\it UCI Machine Learning Repository}~\citep{uci2013}, which are the same datasets as those used in \citet{Ramaswamy2018}.
Table~\ref{table:dataset} summarizes the specification of the benchmark datasets we used.
For the classifier-rejector methods (APC+log, MPC+log), we have extra parameters $\alpha$ and $\beta$.
We set $\alpha = 1$ as in \citet{Cortes2016_2}. We chose $\beta$ by cross-validation, where the choices of $\beta$ that satisfies either of \eqref{eq:rejection_calibration_AB} were also included.
In the OVA+hin formulation, \citet{Ramaswamy2018} suggested that the threshold parameter $\tau\in(-1,1)$ in their methods is preferable at $0$.
Nevertheless, we observed that the performance is considerably affected by its choice and thus we decided to choose the best parameter from five candidates by cross-validation.
See Appendix~\ref{subsection:benchmark} for the detailed information on experimental setups.
Note that APC+log, MPC+log and OVA+hin must be re-trained for different rejection costs, while OVA+log and CE do not need re-training. The full experimental results including the performance of other methods, the full report of the $\zoc$ risk, the accuracy of the non-rejected data, and the rejection ratio can be found in Appendix~\ref{section:experiment}.
\begin{table*}[t]
  \caption{Mean and standard deviation of the accuracy (\%) of the non-rejected data samples for 10 trials. Best and equivalent methods (with 5\% t-test) with respect to the $\zoc$ risk are shown in bold face. % Note that $\zoc$ risk is the criterion to be minimized. 
	``--'' corresponds to the case where all the test data samples are rejected.
	% The full version is available in Appendix~\ref{section:experiment}
	}
  \begin{minipage}{0.5\hsize}
  \begin{center}
	\vspace{0.1in}
	\footnotesize
	\scalebox{0.8}{
	{\renewcommand\arraystretch{1.3}
	  \begin{tabular}{@{\hskip 0.01in}c@{\hskip 0.01in}|@{\hskip 0.02in}c@{\hskip 0.02in}|@{\hskip 0.06in}c@{\hskip 0.13in}c@{\hskip 0.13in}c@{\hskip 0.13in}c@{\hskip 0.06in}} \hline
		dataset & $c$ &  APC+log          &    MPC+log &  OVA+log         &        CE        \\ \hline

		\multirow{3}{*}{vehicle}
		&        0.05 &    -- ( -- )        & 96.6 (2.3) & 100  (0.0) & {\bf 100  (0.0)} \\
		&        0.2  &  98.4 (1.9)       & 92.4 (3.0) & 97.9 (0.7) & {\bf 97.4 (0.1)} \\
		&        0.4  &  {\bf 89.1 (2.9)} & 85.3 (4.2) & 90.2 (1.6) & {\bf 91.7 (0.9)} \\

		\hline

		\multirow{3}{*}{satimage}
		&        0.05 &  {\bf 99.1 (0.2)} & 97.2 (1.4) & {\bf 98.7 (0.1)} & {\bf 98.3 (0.1)} \\
		&        0.2  &  95.0 (1.0)       & 92.6 (1.2) & 96.2 (0.2) & {\bf 95.7 (0.1)} \\
		&        0.4  &  91.5 (0.7)       & 89.0 (1.1)  & 92.2 (0.3) & {\bf 91.8 (0.2)}\\

		\hline

		\multirow{3}{*}{yeast}
		&        0.05 & -- ( -- )     & -- ( -- )    &  -- ( -- )         &   -- ( -- )        \\
		&        0.2  & -- ( -- )     & -- ( -- )    &  -- ( -- )         & {\bf 80.6 (6.2)} \\
		&        0.4  & -- ( -- )     & -- ( -- )    & 75.0 (3.9) & {\bf 76.6 (1.7)} \\

		\hline

% 		\multirow{3}{*}{convtype}
% 		&        0.05 &  {\bf 79.5 (2.1)} & 79.8 (1.7) &  {\bf82.1 (2.7)} & {\bf 82.0 (3.2)} \\
% 		&        0.2  &  74.0 (1.8)       & 73.8 (1.0) & 74.9 (1.4) & {\bf 77.1 (0.3)} \\
% 		&        0.4  &  {\bf69.8 (1.3)}       & 64.9 (3.4)  & 68.7 (1.1) & {\bf 69.4 (1.8)}\\

% 		\hline

% 		\multirow{3}{*}{letter}
% 		&        0.05 & {\bf99.8 (0.1)}     & 98.6 (0.2)    & {\bf 99.6 ( 0.2 ) }        & {\bf99
% 		8 (0.0)   }     \\
% 		&        0.2  & 97.9 (0.3)     & 96.9 (0.5)    &  {\bf98.3 (0.2)  }       & {\bf 98.4 (0.1)} \\
% 		&        0.4  & {\bf 95.2 (0.5)}     & 94.6 (3.8)    & 94.6 (0.2) & {\bf 94.9 (0.3)} \\

% 		\hline
	  \end{tabular}
	  }
	  }
  \end{center}\vspace{-3mm}
  \end{minipage}
  \begin{minipage}{0.5\hsize}
\begin{center}
	\vspace{0.1in}
	\footnotesize
	\scalebox{0.8}{
	{\renewcommand\arraystretch{1.3}
	  \begin{tabular}{@{\hskip 0.01in}c@{\hskip 0.01in}|@{\hskip 0.02in}c@{\hskip 0.02in}|@{\hskip 0.06in}c@{\hskip 0.13in}c@{\hskip 0.13in}c@{\hskip 0.13in}c@{\hskip 0.06in}} \hline
		dataset & $c$ &  APC+log          &    MPC+log &  OVA+log         &        CE        \\ \hline

% 		\multirow{3}{*}{vehicle}
% 		&        0.05 &    - ( - )        & 96.6 (2.3) & 100  (0.0) & {\bf 100  (0.0)} \\
% 		&        0.2  &  98.4 (1.9)       & 92.4 (3.0) & 97.9 (0.7) & {\bf 97.4 (0.1)} \\
% 		&        0.4  &  {\bf 89.1 (2.9)} & 85.3 (4.2) & 90.2 (1.6) & {\bf 91.7 (0.9)} \\

% 		\hline

% 		\multirow{3}{*}{satimage}
% 		&        0.05 &  {\bf 99.1 (0.2)} & 97.2 (1.4) & {\bf 98.7 (0.1)} & {\bf 98.3 (0.1)} \\
% 		&        0.2  &  95.0 (1.0)       & 92.6 (1.2) & 96.2 (0.2) & {\bf 95.7 (0.1)} \\
% 		&        0.4  &  91.5 (0.7)       & 89.0 (1.1)  & 92.2 (0.3) & {\bf 91.8 (0.2)}\\

% 		\hline

% 		\multirow{3}{*}{yeast}
% 		&        0.05 & - ( - )     & - ( - )    &  - ( - )         &   - ( - )        \\
% 		&        0.2  & - ( - )     & - ( - )    &  - ( - )         & {\bf 80.6 (6.2)} \\
% 		&        0.4  & - ( - )     & - ( - )    & 75.0 (3.9) & {\bf 76.6 (1.7)} \\

% 		\hline

		\multirow{3}{*}{covtype}
		&        0.05 &  {\bf 79.5 (2.1)} & 79.8 (1.7) &  {\bf82.1 (2.7)} & {\bf 82.0 (3.2)} \\
		&        0.2  &  74.0 (1.8)       & 73.8 (1.0) & 74.9 (1.4) & {\bf 77.1 (0.3)} \\
		&        0.4  &  {\bf69.8 (1.3)}       & 64.9 (3.4)  & {\bf 68.7 (1.1)} & {\bf 69.4 (1.8)}\\

		\hline

		\multirow{3}{*}{letter}
		&        0.05 & {\bf99.8 (0.1)}     & 98.6 (0.2)    & {\bf 99.6 ( 0.2 ) }        & {\bf99
		8 (0.0)   }     \\
		&        0.2  & 97.9 (0.3)     & 96.9 (0.5)    &  {\bf98.3 (0.2)  }       & {\bf 98.4 (0.1)} \\
		&        0.4  & {\bf 95.2 (0.5)}     & 94.6 (3.8)    & 94.6 (0.2) & {\bf 94.9 (0.3)} \\

		\hline
	  \end{tabular}
	  }
	  }
  \end{center}\vspace{-3mm}
  \end{minipage}
  \label{table:icml_accuracy}
\end{table*}

Figure~\ref{fig:benchmark} illustrates the $\zoc$ risk as functions of the rejection cost. 
It can be observed that CE is either competitive or preferable in all datasets.
For OVA+log, despite its calibration guarantees, it is outperformed by CE for all datasets and it is even outperformed by MPC+log in \emph{letter} dataset.
The failure of the OVA methods in \emph{letter} might be due to their weakness for a large number of classes~\cite{bishop2006} and poor estimation of $\veceta(\vecx)$.
It is also worth noting that the standard deviations of MPC+log and OVA+hin are considerably large compared to those of OVA+log and CE, which might be caused by additional hyper-parameters $\beta$ and $\tau$.
Moreover, model fitting for a rejector and the non-convexity of the MPC loss function also make MPC+log unstable.
Table~\ref{table:icml_accuracy} shows the mean and standard deviation of the accuracy on non-rejected data.
As we can see clearly in \emph{yeast} datasets, unlike the confidence-based methods, the classifier-rejector methods reject all the test data even when the value of $c$ is large.
This implies that if the dataset is hard to learn, then classifier-rejector methods may fail to learn the rejector.

\section{Conclusion}
We presented a series of theoretical results on multiclass classification with rejection. 
First, we provided a necessary condition of rejection calibration for the classifier-rejector approach that suggested the difficulty of calibration for
this
approach in the multiclass case. 
Second, we investigated the confidence-based approach and established the calibration results for the OVA loss and the CE loss by deriving excess risk bounds.
Experimental results suggested that the CE loss is the most preferable and the classifier-rejector approach can no longer outperform the confidence-based methods unlike the binary case. 

\section*{Acknowledgements}
We thank Han Bao for fruitful discussions.  We also thank anonymous reviewers for providing insightful comments.
NC was supported by MEXT scholarship and JST AIP challenge. JH was supported by KAKENHI 18K17998. MS was supported by the International Research Center for Neurointelligence (WPI-IRCN) at The University of Tokyo Institutes for Advanced Study.

\bibliographystyle{abbrvnat}
\newpage

\appendix
\section{Proofs}
\begin{table}[ht]
  \caption{Cases we discuss in the proofs of Theorems~\ref{thm:erb_ova} and
  \ref{thm:erb_ce}.} \label{table:thm_erb}
  \vspace{0.1in}
  \begin{center}
	\begin{tabular}{|cc|cccc|}
	  \hline
	  \backslashbox[2.5em]{}{}&$\Psi_{f}$&\multicolumn{2}{c|}{$\Psi^{-1}_{f}(\vecg) > 1-c$}&\multicolumn{2}{c|}{$\Psi^{-1}_{f}(\vecg) \leq 1-c$}\\
	  $\eta_{y^*}$&\backslashbox[2.5em]{}{}&$f=y^*$& \multicolumn{1}{c|}{$f\neq y^*$}&$f=y^*$&$f\neq y^*$\\
	  \hline
	  \multicolumn{2}{|c|}{$\eta_{y^*} > 1-c$   }&(A) & (B) & (C) & (D) \\
	  \multicolumn{2}{|c|}{$\eta_{y^*} \leq 1-c$}&(E) & (F) & (G) & (H) \\
	  \hline
	\end{tabular}
  \end{center}
\end{table}

To begin with, define the excess pointwise $\zoc$ risk as
\begin{align}
    \Delta W_{\zoc}(r, f; \veceta) &= W_{\zoc}(r, f; \veceta) - \min \left\{ c, 1-\max_{y \in \domy} \eta_y \right\}.  \label{eq:er_zoc}
\end{align}

\subsection{Proof of Theorem~\ref{thm:erb_ova}} \label{subsection:ova}	  
The pointwise risk of OVA loss is expressed as
\begin{align}
    W_\ova(\vecg; \veceta) &= \sum_y \big[ \eta_y \phi(g_y) + (1-\eta_y) \phi(-g_y) \big]. \label{eq:pointwise_ova}
\end{align}
We write the
the excess pointwise risk 
of OVA loss
as
\begin{align*}
    \Delta W_\ova (f; \veceta) = W_\ova(f; \veceta) - \inf_{\vecg \in\R^K} W_\ova(f; \veceta).
\end{align*}
The main focus of this proof is to show the following inequality:
\begin{align}
  \Delta W_{\zoc}(r, f; \veceta) \leq 2C \Delta W_\ova(\vecg; \veceta)^{\frac{1}{s}}, \label{eq:cond_ova_erb}
\end{align}
since if the above inequality holds, then Ineq.~\eqref{eq:erb_ova} can be derived as follows:
\begin{align}
  \Delta R_{\zoc}(r_f, f)
  &= \E_{\vecx \sim p(\vecx)} \left[ \Delta W_{\zoc}\big(r_f(\vecx), f(\vecx); \veceta(\vecx) \big) \right] \notag\\
  &\leq \E_{\vecx \sim p(\vecx)} \left[ 2C \Delta W_\ova \big(\vecg(\vecx); \veceta(\vecx) \big)^{\frac{1}{s}} \right] \notag \\ 
  &\leq 2C \E_{\vecx \sim p(\vecx)} \left[ \Delta W_\ova \big(\vecg(\vecx); \veceta(\vecx) \big) \right]^{\frac{1}{s}}  \label{eq:jensen}\\
  &\leq 2C R_{\ova}(\vecg)^{\frac{1}{s}}, \notag
\end{align}
where we used Jensen's inequality in \eqref{eq:jensen}.
To prove Ineq.~\eqref{eq:cond_ova_erb}, we need to consider the different cases with respect to $\veceta$ and $\vecPsi(\vecg)$, which is summarized in Table~\ref{table:thm_erb}.
Again, we will abbreviate $\vecx$ for brevity in the rest of this proof.
Note that $\Psi_f(\vecx) = \max_{y \in \domy} \Psi_y(\vecx)$ if we use proper composite loss.

\begin{description}
  \item[Cases (A)(G)(H):] \mbox{}\\
	In this case, we can confirm that $\Delta W_{\zoc}(r_f, f; \veceta) = 0$ by \eqref{eq:er_zoc} since $(r_f, f)$ makes a correct prediction.
	Thus, it holds that
    \begin{align*}
	  \Delta W_{\zoc}(r_f, f; \veceta) = 0 \leq2C \Delta W_\ova(\vecg; \veceta)^{\frac{1}{s}}.
	\end{align*}

  \item[Cases (C)(D):] \mbox{}\\
	In this case, we can confirm that
	\begin{align}
	  \Delta W_{\zoc}(r_f, f; \veceta) = c - (1-\eta_{y^*}) \label{eq:true_diff1}
	\end{align}
	by \eqref{eq:er_zoc}.
	Let $\vecg^\dagger = \argmin_{g'_{y^*} = \theta}W_\ova (\vecg'; \veceta)$.
	Using the Lagrangian multiplier method we see that $\dagg$ has to satisfy
	\begin{align}
      \dfrac{\phi'(-g_y^\dagger)}{\phi'(-g_y^\dagger) + \phi'(g_y^\dagger)}
      =	\begin{cases}
		  \eta_y & (y \neq y^*),\\
		  1-c    & (y = y^*),
		\end{cases} \label{eq:opt_g}
    \end{align}
	and the LHS of which actually exists by the assumption \eqref{eq:phi_theta}.
	We now prove
	\begin{align}
	  W_\ova (\vecg; \veceta) - W_\ova (\vecg^\dagger; \veceta) \geq 0. \label{eq:diff1}
	\end{align}
	Recall that $W_\ova$~\eqref{eq:pointwise_ova} is a convex combination of $\phi$, which is a convex function.
	Thus, $W_\ova$ is also a convex function.
	Therefore,
	\begin{align*}
	  W_\ova(\vecg; \veceta) - W_\ova(\vecg^\dagger; \veceta)
      &\geq \nabla_{\vecg} \left. W_\ova (\vecg; \veceta) \right|_{\vecg = \vecdagg}^\top (\vecg - \vecg^\dagger) \\
      &= \sum_{y \neq y^*} \underbrace{\left[ \eta_y \phi'(g_y^\dagger) - (1-\eta_y) \phi'(-g_y^\dagger) \right]}_{=0\ \text{(using \eqref{eq:opt_g})}} (g_y - g_y^\dagger) \\
      &\hspace{1cm}+ \left[ \eta_{y^*} \phi'(g_{y^*}^\dagger) - (1-\eta_{y^*}) \phi'(-g_{y^*}^\dagger) \right] (g_{y^*} - g_{y^*}^\dagger) \\
      &= \left[ \eta_{y^*} \phi'(g_{y^*}^\dagger) - (1-\eta_{y^*}) \phi'(-g_{y^*}^\dagger) \right] (g_{y^*} - g_{y^*}^\dagger).
    \end{align*}
	The condition $\eta_{y^*} > 1-c$ together with the assumption $\phi'(- g_{y^*}^\dagger) \leq \phi'(g_{y^*}^\dagger) = \phi'(\theta) < 0$ gives
	\begin{align*}
	  \eta_{y^*} \phi'(\dagg_{y^*}) - (1-\eta_{y^*}) \phi'(-g_{y^*}^\dagger)
	  < (1-c) \phi'(\dagg_{y^*}) - c \phi'(-g_{y^*}^\dagger)
	  = 0.
	\end{align*}
    Here, we used \eqref{eq:opt_g} in the last equality.
	In addition, since we have $\Psi^{-1}_{y^*} (\vecg) \leq \Psi^{-1}_f (\vecg) \leq 1-c$ in cases (C)(D), together with $\Psi^{-1}_{y^*} (\vecdagg) = 1-c$, we get the inequality $\Psi^{-1}_{y^*} (\vecg) < \Psi^{-1}_{y^*} (\vecdagg)$.
	Note that $\Psi^{-1}_{y^*}$ is a non-decreasing function with respect to $g_{y^*}$, so it holds that
	\begin{align*}
	  g_{y^*} \leq \dagg_{y^*}.
	\end{align*}
	Therefore, we can conclude that \eqref{eq:diff1} holds, which gives the following result:
	\begin{align*}
	  \Delta W_\ova (\vecg; \veceta)
	  &\geq \inf_{g'_{y^*} = \theta} \Delta W_\ova(\vecg'; \veceta) \notag \\
	  &\geq C^{-s} |\eta_{y^*} - (1-c)|^s \tag{by assumption \eqref{eq:ova1}} \\
	  &= C^{-s} \Delta W_{\zoc}(r_f, f; \veceta)^s. \tag{by using \eqref{eq:true_diff1}}
	\end{align*}

  \item[Cases (E)(F):] \mbox{}\\
	In this case, we can confirm that
	\begin{align}
	  \Delta W_{\zoc}(r_f, f; \veceta) = |(1-\eta_f) - c| \label{eq:true_diff2}
	\end{align}
	by \eqref{eq:er_zoc}.
	Let $\vecg^\sharp= \argmin_{g'_f = \theta}W_\ova (\vecg'; \veceta)$.
	Similarly to the above case, the optimal solution $\vecg^\sharp$ satisfies the following:
	\begin{align}
	  \dfrac{\phi'(-g_y^\sharp)}{\phi'(-g_y^\sharp) + \phi'(g_y^\sharp)}
	  = \begin{cases}
		  \eta_y & (y \neq f),\\
		  1-c    & (y = f),
		\end{cases} \label{eq:opt_g2}
	\end{align}
	and the LHS of which actually exists by the assumption \eqref{eq:phi_theta}.
	We now prove
	\begin{align}
	  W_\ova (\vecg; \veceta) - W_\ova (\vecg^\sharp; \veceta) \geq 0. \label{eq:diff2}
	\end{align}
	Recall that $W_\ova$~\eqref{eq:pointwise_ova} is a convex combination of $\phi$, which is a convex function.
	Thus, $W_\ova$ is also a convex function.
	Therefore,
	\begin{align*}
	  W_\ova(\vecg; \veceta) - W_\ova(\vecg^\sharp; \veceta)
	  &\geq \nabla_{\vecg} W_\ova (\vecg^\sharp; \veceta)^\top (\vecg - \vecg^\sharp) \\
	  &= \sum_{y \neq f} \underbrace{\left[ \eta_y \phi'(g_y^\sharp) - (1-\eta_y) \phi'(-g_y^\sharp) \right]}_{=0\ \text{(using \eqref{eq:opt_g2})}} (g_y - g_y^\sharp) \\
	  &\hspace{1cm}+ \left[ \eta_{f} \phi'(g_{f}^\sharp) - (1-\eta_{f}) \phi'(-g_{f}^\sharp) \right] (g_{f} - g_{f}^\sharp) \\
	  &= \left[ \eta_{f} \phi'(g_{f}^\sharp) - (1-\eta_{f}) \phi'(-g_{f}^\sharp) \right] (g_{f} - g_{f}^\sharp).
	\end{align*}
	The condition $\eta_f \leq \eta_{y^*} \leq 1-c$ together with the assumption $\phi'(-g^\sharp_f) \leq \phi'(-g^\sharp_f) = \phi'(\theta) < 0$ gives
	\begin{align*}
	  \eta_f \phi'(g_{f}^\dagger) - (1-\eta_{f}) \phi'(-g_{f}^\sharp)
	  \geq (1-c) \phi'(g_{f}^\dagger) - c \phi'(-g_{f}^\sharp)
	  = 0.
	\end{align*}
	Here, we used \eqref{eq:opt_g2} in the last equality.
	In addition, since we have $\Psi_f^{-1}(\vecg) > 1-c$ in cases (E)(F), together with $\Psi^{-1}_{f} (\vecg^\sharp) = 1-c$, we get the inequality $\Psi^{-1}_{f} (\vecg) > \Psi^{-1}_{f} (\vecg^\sharp)$.
	Note that $\Psi^{-1}_f$ is a non-decreasing function with respect to $g_f$, so it holds that
	\begin{align*}
	  g_f \geq g_f^\sharp.
	\end{align*}
	Therefore, we can conclude that \eqref{eq:diff2} holds, which gives the following result:
	\begin{align}
	  \Delta W_\ova (\vecg; \veceta)
	  &\geq \inf_{g'_f = \theta} \Delta W_\ova(\vecg'; \veceta) \notag \\
	  &\geq C^{-s} |\eta_{f} - (1-c)|^s \tag{by assumption \eqref{eq:ova1}}\\
	  &\geq C^{-s} \Delta W_{\zoc}(r_f, f; \veceta)^s. \tag{by using \eqref{eq:true_diff2}}
	\end{align}

  \item[Case (B):] \mbox{}\\
	In this case, we can confirm that
	\begin{align}
	  \Delta W_{\zoc}(r_f, f; \veceta) = \eta_{y^*} - \eta_f \label{eq:true_diff3}
	\end{align}
	by \eqref{eq:er_zoc}.
	Again, we will prove \eqref{eq:diff2} and utilize the property \eqref{eq:opt_g2} of optimal solution $\vecg^\sharp$.
	By assumption $c < \frac{1}{2}$ and property $\sum_y \eta_y = 1$, we have $\eta_{y^*} > 1-c > \eta_f$.
	This inequality together with the assumption $\phi'(-g_f^\sharp) \leq \phi'(g_f^\sharp) = \phi'(\theta) < 0$ gives
	\begin{align*}
	  \eta_{f} \phi'(g_f^\sharp) - (1-\eta_{f}) \phi'(-g_{f}^\sharp)
	  > (1-c) \phi'(g_f^\sharp) - c \phi'(-g_{f}^\sharp)
	  = 0.
	\end{align*}
	Here, we used \eqref{eq:opt_g2} in the last equality.
	In addition, since we have $\Psi^{-1}_{f} (\vecg) > 1-c$ in case (B), together with $\Psi^{-1}_{f} (\vecg^\sharp) = 1-c$, we get the inequality $\Psi^{-1}_{f} (\vecg) > \Psi^{-1}_{f} (\vecg^\sharp)$.
	Note that $\Psi^{-1}_f$ is a non-decreasing function with respect to $g_f$, so it holds that
	\begin{align*}
	  g_f \geq g_f^\sharp.
	\end{align*}
	Therefore, following the similar arguments as before, we can conclude that \eqref{eq:diff2} holds, which gives the following result:
	\begin{align}
	  \Delta W_\ova (\vecg; \veceta)
	  &\geq \inf_{g'_f = \theta} \Delta W_\ova(\vecg; \veceta) \notag \\
	  &\geq C^{-s} |\eta_{f} - (1-c)|^s \tag{by assumption \eqref{eq:ova1}} \\
	  &\geq (2C)^{-s} |\eta_{y^*} - \eta_f|^s \tag{by assumption $c < \frac{1}{2}$} \\
	  &\geq (2C)^{-s} \Delta W_{\zoc}(r_f, f; \veceta)^s. \tag{by using \eqref{eq:true_diff3}}
	\end{align}
\end{description}
Therefore the proof is completed. \qed

\subsection{Derivation of $\theta$, $C$ and $s$ in Table~\ref{table:ova_binary_losses}} \label{subsection:ova_binary_loss}
We now derive $\theta$, $C$ and $s$ in Table~\ref{table:ova_binary_losses}.
The derivation goes along a similar discussion as \citet{zhang2004binary, yuan2010}, where they derived $C$ and $s$ in binary classification with rejection.

Define $\vecdagg = \argmin_{g_y = \theta} W_\ova(\vecg; \veceta)$, and $\vecg^* = \argmin_{\vecg} W_\ova(\vecg; \veceta)$.
Similarly to \eqref{eq:opt_g}, we have
\begin{align}
  \dfrac{\phi'(-g_{y'}^\dagger)}{\phi'(-g_{y'}^\dagger) + \phi'(g_{y'}^\dagger)}
  &=
  \begin{cases}
	\eta_{y'} & (y' \neq y),\\
	1-c       & (y' = y),
  \end{cases} \label{eq:opt_gp} \\
  \dfrac{\phi'(-g_{y'}^*)}{\phi'(-g_{y'}^*) + \phi'(g_{y'}^*)}
  &= \eta_{y'} \hspace{9.5mm} (y' \in \domy). \label{eq:opt_gstar}
\end{align}
By recalling the expression of $\Delta W_\ova$, we see
\begin{align} \label{eq:delta_ova}
  \inf_{g_y = \theta} \Delta W_\ova(\vecg; \veceta)
  &= W_\ova(\vecdagg; \veceta) - W_\ova(\vecg^*; \veceta) \notag\\
  &= \eta_y \phi(g_y^\dagger) + (1-\eta_y) \phi(-g_y^\dagger) - \eta_y \phi(g_y^*) - (1-\eta_y) \phi(-g_y^*).
\end{align}

\subsubsection{Logistic loss}
Observe that when we use logistic loss $\phi(z) = \log \left( 1+ \exp(-z) \right)$,
\begin{align*}
  \dfrac{\phi'(-\theta)}{\phi'(-\theta) + \phi'(\theta)}
  = \dfrac{- \frac{1}{1 + \exp(-\theta)}}{- \frac{1}{1 + \exp(-\theta)} - \frac{1}{1 + \exp(\theta)}}
  = \frac{1}{1 + \exp(-\theta)}.
\end{align*}
Thus, we have \eqref{eq:phi_theta} by letting $\theta = \log \frac{1-c}{c}$, which means that the requirement for $\phi$ in Theorem~\ref{thm:erb_ova} is satisfied.
By \eqref{eq:opt_gp} and \eqref{eq:opt_gstar}, we have $\dagg_y = \theta = \log \frac{1-c}{c}$ and $g_y^* = \log \frac{\eta_y}{1 - \eta_y}$.
Define the function $Q(t) = - t \log t - (1-t) \log (1-t)$.
Then it holds that
\begin{align*}
  \inf_{g_y = \theta} \Delta W_\ova(\vecg; \veceta)
  &= - \eta_y \log (1-c) - (1-\eta_y) \log c + \eta_y \log \eta_y + (1-\eta_y) \log (1-\eta_y)\\
  &= -(1-c) \log (1-c) - c \log c + \eta_y \log \eta_y + (1-\eta_y) \log (1-\eta_y) \\
  &\hspace{4.5cm}+ \big( \log c - \log(1-c) \big)\big( \eta_y - (1-c)\big) \\
  &= Q(1-c) - Q(\eta_y) + Q'(1-c)\big( \eta_y - (1-c)\big).
\end{align*}
By applying Taylor expansion to $Q$ at $t=1-c$, we know that there exists $\eta'$ between $\eta_y$ and $1-c$ so that
\begin{align*}
  \inf_{g_y = \theta} \Delta W_\ova(\vecg; \veceta)
  &= Q(1-c) - Q(\eta_y) + Q'(1-c)\big(\eta_y - (1-c)\big) \\
  &= - \frac{1}{2} Q''(\eta') \big(\eta_y - (1-c)\big)^2 \\
  &= \frac{1}{2\eta'(1-\eta')} \big(\eta_y - (1-c)\big)^2 \\
  &\geq 2 \big(\eta_y - (1-c)\big)^2,
\end{align*}
where the inequality follows from $\eta' (1-\eta') \leq \frac{1}{4}$.

\subsubsection{Exponential loss}
Observe that when we use exponential loss $\phi(z) = \exp(-z)$,
\begin{align*}
  \dfrac{\phi'(-\theta)}{\phi'(-\theta) + \phi'(\theta)}
  = \dfrac{- \exp(\theta)}{-\exp(\theta) - \exp(-\theta)}
  = \frac{1}{1 + \exp(-2\theta)}.
\end{align*}
Thus, we have \eqref{eq:phi_theta} by letting $\theta = \frac{1}{2} \log \frac{1-c}{c}$, which means that the requirement for $\phi$ in Theorem~\ref{thm:erb_ova} is satisfied.
By \eqref{eq:opt_gp} and \eqref{eq:opt_gstar}, we have $g_y^\dagger = \theta = \frac{1}{2} \log \frac{1-c}{c}$ and $g_y^* = \frac{1}{2} \log \frac{\eta_y}{1 - \eta_y}$.
Define the function $Q(t) = 2 \sqrt{t(1-t)}$, then it holds that
\begin{align*}
  \inf_{g_y = \theta} \Delta W_\ova(\vecg; \veceta)
  &= \eta_y \sqrt{\frac{c}{1-c}} + (1 - \eta_y) \sqrt{\frac{1-c}{c}} - 2 \sqrt{\eta_y (1-\eta_y)} \\
  &= 2 \sqrt{(1-c)c} - 2 \sqrt{\eta_y(1-\eta_y)} + \frac{1-2(1-c)}{\sqrt{(1-c)c}}\big( \eta_y - (1-c)\big) \\
  &= Q(1-c) - Q(\eta_y) + Q'(1-c)\big( \eta_y - (1-c) \big).
\end{align*}
Applying Taylor expansion to $Q$ at $t=1-c$, we know that there exists $\eta'$ between $\eta_y$ and $1-c$ so that
\begin{align*}
  \inf_{g_y = \theta} \Delta W_\ova(\vecg; \veceta)
  &= Q(1-c) - Q(\eta_y) + Q'(1-c)\big(\eta_y - (1-c)\big) \\
  &= - \frac{1}{2} Q''(\eta') \big(\eta_y - (1-c)\big)^2 \\
  &= \frac{1}{4} [\eta'(1-\eta')]^{- \frac{3}{2}} \big(\eta_y - (1-c)\big)^2 \\
  &\geq 2 \big(\eta_y - (1-c)\big)^2,
\end{align*}
where the inequality follows from $\eta' (1-\eta') \leq \frac{1}{4}$.

\subsubsection{Squared loss}
Observe that when we use squared loss $\phi(z) = (1-z)^2$, it holds that
\begin{align*}
  \dfrac{\phi'(-\theta)}{\phi'(-\theta) + \phi'(\theta)}
  = \dfrac{2(-\theta-1)}{2(-\theta-1) + 2(\theta-1)}
  = \frac{1}{2}(\theta + 1).
\end{align*}
Thus, we have \eqref{eq:phi_theta} by letting $\theta = 1-2c$, which means that the requirement for $\phi$ in Theorem~\ref{thm:erb_ova} is satisfied.
By \eqref{eq:opt_gp} and \eqref{eq:opt_gstar}, we have $\dagg_y = \theta = 1 - 2c$ and $g_{y}^* = 2 \eta_{y} - 1$.
Substitute these values into \eqref{eq:delta_ova} gives
\begin{align*}
  \inf_{g_y = \theta} \Delta W_\ova(\vecg; \veceta)
  = 4 \big(\eta_y - (1 - c) \big)^2.
\end{align*}

\subsubsection{Squared hinge loss}
This is similar to the derivation of squared loss.
Observe that when we use squared loss $\phi(z) = (1-z)^2_+$, it holds that
\begin{align*}
  \dfrac{\phi'(-\theta)}{\phi'(-\theta) + \phi'(\theta)}
  = \dfrac{-2(1+\theta)_+}{-2(1+\theta)_+ - 2(1-\theta)_+}
  = \min \left\{ 1, \left( \frac{1}{2}(\theta + 1) \right)_+ \right\}.
\end{align*}
Thus, we have \eqref{eq:phi_theta} by letting $\theta = 1-2c$, which means that the requirement for $\phi$ in Theorem~\ref{thm:erb_ova} is satisfied.
By \eqref{eq:opt_gp} and \eqref{eq:opt_gstar}, we have $\dagg_y = \theta = 1 - 2c$ and $g_{y}^* = 2 \eta_{y} - 1$.
Together with the assumption $c < \frac{1}{2}$, it holds that
\begin{align*}
  &\hspace{-10mm}\inf_{g_y = \theta} \Delta W_\ova(\vecg; \veceta) \\
  &= \eta_y(1-\dagg_y)_+^2 + (1-\eta_y) (1+\dagg_y)_+^2 - \eta_y(1-g_y^*)_+^2 - (1-\eta_y)(1+g^*_y)_+^2 \\
  &= 4c^2 \eta_y + 4(1-c)^2(1-\eta_y) - 4\eta_y(1-\eta_y)^2 - 4\eta_y^2(1-\eta_y) \\
  &= 4(1-c)^2 + 4\eta_y(2c-1) - 4\eta_y(1-\eta_y) \\
  &= 4 \big(\eta_y - (1 - c) \big)^2.
\end{align*}

\subsection{Proof of the estimation error bound of OVA loss}
\label{subsection:estimation_ova}
%Proposition~\ref{prop:estimation_ova}} 
For the case where only finite samples are available, we can bound the estimation error with respect to $\zoc$ risk in the following way.

Let $\domg$ be a family of functions $g_y: \domx \to \R$.
We denote by $\hatf$ the minimizer over $\domg$ of the empirical OVA risk $\hatR(f) = \frac{1}{n}\sum_{i=1}^n \loss_\ova(f; \vecx_i, y_i)$, and its corresponding rejector \eqref{eq:conf_rejector} by $r_{\hatf}$.
\begin{proposition}[Estimation error bound for OVA loss] \label{prop:estimation_ova}
  Assume $\phi$ is Lipschitz-continuous with constant $L_\phi$, and assume all functions in the model class $\domg$ are bounded.
  Define $M_\phi = \sup_{\vecx \in \domx, g \in \domg} \phi \big( g(\vecx) \big)$ and let $\rad_n(\domg)$ be the Rademacher complexity of $\domg$ for data of size $n$ drawn from $p(\vecx)$.
  Then under the assumption of Theorem~\ref{thm:erb_ova}, for any $\delta > 0$, it holds with probability at least $1-\delta$ that
  \begin{align*}
	\Delta R_{\zoc} (r_{\hatf}, \hatf)
	&\leq 2C \Bigg( \inf_{g_1, \ldots, g_K \in \domg} \Delta R_\ova(f) + 8KL_\phi \rad_n(\domg) + 4M_\phi \sqrt{\frac{2 \log \frac{2}{\delta}}{n}} \Bigg)^{\frac{1}{s}}.
  \end{align*}
\end{proposition}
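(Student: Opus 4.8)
The plan is to reduce the statement to the expected excess risk bound of Theorem~\ref{thm:erb_ova} and then control the surrogate excess risk of the empirical minimizer by a standard Rademacher-complexity argument. Since Theorem~\ref{thm:erb_ova} holds for every $f$ (not only for measurable minimizers), I would first apply it to $\hatf$ to get
\[
  \Delta R_{\zoc}(r_{\hatf}, \hatf) \leq 2C\, \Delta R_\ova(\hatf)^{1/s}.
\]
Because $t \mapsto t^{1/s}$ is nondecreasing for $s \geq 1$, it then suffices to upper bound $\Delta R_\ova(\hatf)$ by the bracketed quantity and substitute; no term needs to be pulled out of the power.

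Next I would split $\Delta R_\ova(\hatf)$ into an approximation part and an estimation part. Writing $\tilde f \in \argmin_{f \in \domg} R_\ova(f)$, I have $\Delta R_\ova(\hatf) = \big(R_\ova(\hatf) - R_\ova(\tilde f)\big) + \inf_{g_1,\ldots,g_K \in \domg} \Delta R_\ova(f)$, where the second summand is exactly the approximation error appearing in the claim. For the first summand, the usual optimality argument — inserting and removing the empirical risks $\hatR(\hatf)$ and $\hatR(\tilde f)$ and using that $\hatf$ minimizes $\hatR$ — gives $R_\ova(\hatf) - R_\ova(\tilde f) \leq 2 \sup_{f \in \domg} |R_\ova(f) - \hatR(f)|$.

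The core step is then to bound this uniform deviation with high probability. Here I would invoke McDiarmid's bounded-differences inequality (valid since $\domg$ is bounded, so $\loss_\ova$ is bounded in terms of $M_\phi$) to concentrate the supremum around its mean, symmetrization to replace the mean by the Rademacher complexity $\rad_n(\loss_\ova \circ \domg)$, and Talagrand's contraction lemma together with the additive structure $\loss_\ova(f;\vecx,y) = \phi(g_y) + \sum_{y'\neq y}\phi(-g_{y'})$ to peel off $\phi$ at the cost of its Lipschitz constant $L_\phi$ and a factor $K$ from the $K$ coordinates, yielding $\rad_n(\loss_\ova \circ \domg) \leq K L_\phi \rad_n(\domg)$. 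A union bound over the two one-sided deviations accounts for the $\log\frac{2}{\delta}$. Carefully tracking the constants through these three reductions (the factor $2$ from the optimality step, the factors from symmetrization and contraction, and the bounded-differences range) produces the stated coefficients $8KL_\phi$ and $4M_\phi$.

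The main obstacle is the contraction and constant-bookkeeping step: the OVA loss is a sum of $K$ univariate compositions of $\phi$, so I must apply the per-coordinate contraction correctly and keep the boundedness constant consistent with the definition $M_\phi = \sup_{\vecx,g}\phi(g(\vecx))$; everything else is routine. Substituting the resulting high-probability bound on $\Delta R_\ova(\hatf)$ into the displayed inequality derived from Theorem~\ref{thm:erb_ova} and using the monotonicity of $(\cdot)^{1/s}$ completes the proof.
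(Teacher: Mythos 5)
Your proposal is correct and takes essentially the same route as the paper's proof: apply Theorem~\ref{thm:erb_ova} to $\hatf$, decompose $\Delta R_\ova(\hatf)$ into approximation and estimation errors, bound the estimation error by $2\sup_{\gs \in \domg}\left|\hatR_\ova(\vecg)-R_\ova(\vecg)\right|$ via optimality of $\hatf$, and control this uniform deviation by McDiarmid's inequality, symmetrization, and contraction. The only wrinkle is constant bookkeeping: the paper's Lemma~\ref{lem:rad} establishes $\rad_n(\domh_\ova)\le 2KL_\phi\rad_n(\domg)$ (which is where the $8KL_\phi$ comes from), whereas your per-coordinate contraction claim of $KL_\phi\rad_n(\domg)$ would propagate to a coefficient of $4KL_\phi$ rather than the $8KL_\phi$ you assert it yields---a tighter constant that still implies the stated bound.
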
This proposition is straightforward from Theorem~\ref{thm:erb_ova} and the standard argument regarding Rademacher complexity~\citep{mohri2012}.

\begin{definition}[Rademacher Complexity~\citep{mohri2012}]{\rm
  Let $Z_1, \ldots, Z_n$ be random variables drawn i.i.d. from a probability distribution $D$, and $\domh = \{h: \domz \to \R\}$ be a family of measurable function.
  Then the Rademacher complexity of $\domh$ is defined as
  \begin{align*}
	\rad_n(\domh)
	= \E_{Z_1, \ldots, Z_n \sim D} \E_{\sigma_1, \ldots \sigma_n} \left[ \sup_{h \in \domh} \frac{1}{n} \sum_{i=1}^n \sigma_i h(Z_i) \right],
  \end{align*}
  where $\sigma_1, \ldots, \sigma_n$ are Rademacher variables, which are independent uniform random variables taking values in $\{-1, +1\}$.
  }
\end{definition}
We first show the following lemmas.
Define $\domh_\ova$ as
\begin{align*}
  \domh_\ova = \left\{ (\vecx, y) \mapsto \loss_\ova(\vecg; \vecx, y) \ \middle| \ \gs \in \domg \right\}.
\end{align*}

\begin{lemma} \label{lem:rad}
  Let $\rad_n(\domh_\ova)$ be the Rademacher complexity of $\domh_\ova$ for $\data = \{(\vecx_i, y_i)\}_{i=1}^n$ from $p(\vecx, y)$, and $\rad_n(\domg)$ be the Rademacher complexity of $\domg$ for data of size $n$ drawn from $p(\vecx)$.
  Then we have $\rad_n(\domh_\ova) \leq 2K L_\phi \rad_n(\domg)$.
\end{lemma}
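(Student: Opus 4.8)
The plan is to exploit the additive structure of the OVA loss, the sub-additivity of Rademacher complexity, and a contraction argument. First I would rewrite the loss as a sum of $K$ terms, one per score function, as $\loss_\ova(\vecg;\vecx,y)=\sum_{k=1}^{K}\phi\bigl(\epsilon_k(y)\,g_k(\vecx)\bigr)$, where $\epsilon_k(y)=+1$ if $k=y$ and $\epsilon_k(y)=-1$ otherwise. Plugging this into the definition of $\rad_n(\domh_\ova)$ and moving the supremum over $\vecg=(g_1,\dots,g_K)$ inside the sum over $k$ — valid because the coordinates $g_k$ range independently over $\domg$ — gives $\rad_n(\domh_\ova)\le\sum_{k=1}^{K}\rad_n(\domh_k)$, where $\domh_k=\{(\vecx,y)\mapsto\phi(\epsilon_k(y)g(\vecx))\mid g\in\domg\}$. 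It then suffices to show $\rad_n(\domh_k)\le 2L_\phi\,\rad_n(\domg)$ for every $k$.

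The difficulty in the per-class bound is that the argument of $\phi$ carries the data-dependent sign $\epsilon_k(y_i)$, so the composition is not a single fixed Lipschitz map applied to $g$, and the plain contraction lemma does not apply directly. I would resolve this by splitting the sample indices into the two groups $\{i:y_i=k\}$ and $\{i:y_i\neq k\}$, on which the sign is the constant $+1$ and $-1$ respectively. Bounding the supremum of the combined sum by the sum of the two suprema, each resulting Rademacher average involves a single $L_\phi$-Lipschitz map ($\phi$ or $\phi(-\cdot)$), so Talagrand's contraction lemma peels off $\phi$ at the cost of the factor $L_\phi$. Each of the two averages is over a subset $T$ of the data, and I would bound it by the full complexity $\rad_n(\domg)$ via a standard Jensen argument: reinserting the missing indices and using $\E[\sigma_i]=0$ shows that the Rademacher average over $T$ never exceeds the one over the whole sample. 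Combining the two groups yields $\rad_n(\domh_k)\le 2L_\phi\,\rad_n(\domg)$, and the factor $2$ is precisely the price of separating the two sign groups.

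Summing over $k$ completes the proof, giving $\rad_n(\domh_\ova)\le 2KL_\phi\,\rad_n(\domg)$. I expect the sign-dependence inside $\phi$ to be the only real obstacle: everything else is routine once the loss is decomposed per score function, since the contraction and subset-monotonicity steps are standard. The two points to state carefully are the independence of the coordinates $g_k$ that justifies exchanging the supremum with the sum over $k$, and the direction of the inequality in the Jensen step that controls the subset averages.
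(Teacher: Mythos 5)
Your proof is correct, and it reaches the paper's bound by a genuinely different route. You decompose the loss \emph{per score function}, $\loss_\ova(\vecg;\vecx,y)=\sum_{k=1}^K\phi\bigl(\epsilon_k(y)g_k(\vecx)\bigr)$, push the supremum through this sum (valid since the coordinates range independently over $\domg$), and then neutralize the data-dependent sign $\epsilon_k(y_i)$ by partitioning the sample into $\{i:y_i=k\}$ and $\{i:y_i\neq k\}$, applying contraction on each group and controlling each subset Rademacher average by the full-sample one via the Jensen/zero-mean argument ($\E[\sigma_i]=0$, with the inequality in the correct direction since the averages are normalized by $1/n$); this yields $2L_\phi\rad_n(\domg)$ per coordinate and hence $2KL_\phi\rad_n(\domg)$ overall. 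The paper instead splits the loss into the true-class term $\phi\bigl(g_{y_i}(\vecx_i)\bigr)$ and the other-class terms $\sum_{y'\neq y_i}\phi\bigl(-g_{y'}(\vecx_i)\bigr)$, and removes the label dependence by a distributional trick: writing the indicator $\vecone_{[y=y_i]}$ as $(\alpha_i+1)/2$ with signs $\alpha_i\in\{-1,+1\}$ and using that $\alpha_i\sigma_i$ has the same distribution as $\sigma_i$, which bounds each of the two terms by $K\rad_n(\phi\circ\domg)$, with Talagrand's contraction applied once at the very end. The two arguments handle the same obstacle (data-dependent signs inside $\phi$) by different mechanisms---your index partition plus subset monotonicity versus the paper's sign-randomization identity---and both produce exactly the constant $2KL_\phi$, so neither is tighter. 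Your version makes the bookkeeping more transparent (the factor $2$ visibly comes from the two sign groups and the factor $K$ from the coordinates), and it sidesteps the slight notational awkwardness in the paper's proof where $\alpha_i$ implicitly depends on the summation index $y$; the paper's version keeps all sample indices in play throughout and is closer in spirit to a symmetrization argument.
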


\begin{proof}[Proof of Lemma~\ref{lem:rad}]
  By definition, we can bound $\rad_n(\domh_\ova)$ as follows:
  \begin{align*}
	\rad_n(\domh_\ova)
	&= \E_\data \E_{\sigmas} \left[
       \sup_{\gs \in \domg} \frac{1}{n} \sum_{(\vecx_i, y_i) \in \data} \sigma_i
	   \left( \phi \big( g_{y_i}(\vecx_i) \big) + \sum_{y' \neq y_i} \phi \big( -g_{y'}(\vecx_i) \big)\right)
       \right] \\
	&\leq \underbrace{\E_\data \E_{\sigmas} \left[
	   \sup_{\gs \in \domg} \frac{1}{n} \sum_{(\vecx_i, y_i) \in \data} \sigma_i \phi \big( g_{y_i}(\vecx_i) \big)
	 \right]}_{\mathrm{(A)}} \\
	&\hspace{2cm} + \underbrace{\E_\data \E_{\sigmas} \left[
	   \sup_{\gs \in \domg} \frac{1}{n} \sum_{(\vecx_i, y_i) \in \data} \sigma_i \sum_{y' \neq y_i} \phi \big( - g_{y'}(\vecx_i) \big)
	 \right]}_{\mathrm{(B)}}.
  \end{align*}
  where we utilized the sub-additivity of supremum.
  We shall bound the both terms above.
  By letting $\alpha_i = 2 \vecone_{[y = y_i]} - 1$, we can bound the first term (A) as follows:
  \begin{align}
	\mathrm{(A)}
	&= \E_\data \E_{\sigmas} \left[
	  \sup_{\gs \in \domg} \frac{1}{n} \sum_{(\vecx_i, y_i) \in \data} \sigma_i \phi \big( g_{y_i}(\vecx_i) \big)
	\right] \notag \\
	&= \E_\data \E_{\sigmas} \left[
	  \sup_{\gs \in \domg} \frac{1}{2n} \sum_{(\vecx_i, y_i) \in \data} \sigma_i \sum_{y} \phi \big( g_{y}(\vecx_i) \big) (\alpha_i + 1)
	\right] \notag \\
	&\leq \E_\data \E_{\sigmas} \left[
	  \sup_{\gs \in \domg} \frac{1}{2n} \sum_{(\vecx_i, y_i) \in \data} \alpha_i \sigma_i \sum_{y} \phi \big( g_{y}(\vecx_i) \big)
	\right] \notag \\
	&\hspace{2cm}+ \E_\data \E_{\sigmas} \left[
	  \sup_{\gs \in \domg} \frac{1}{2n} \sum_{(\vecx_i, y_i) \in \data} \sigma_i \sum_{y} \phi \big( g_{y}(\vecx_i) \big)
	\right] \notag \\
	&= \E_\data \E_{\sigmas} \left[
	  \sup_{\gs \in \domg} \frac{1}{n} \sum_{(\vecx_i, y_i) \in \data} \sigma_i \sum_{y} \phi \big( g_{y}(\vecx_i) \big)
	\right] \label{eq:same_dist} \\
	&\leq \sum_y \E_\data \E_{\sigmas} \left[
	  \sup_{\gs \in \domg} \frac{1}{n} \sum_{(\vecx_i, y_i) \in \data} \sigma_i \phi \big( g_{y}(\vecx_i) \big)
	\right] \notag \\
	&= K \rad_n(\phi \circ \domg). \notag
  \end{align}
  We utilized the sub-additivity again and in \eqref{eq:same_dist}, we used the fact that $\alpha_i \sigma_i$ has exactly the same distribution as $\sigma_i$.

  Similarly to term (A), the second term (B) can be bounded by $K \rad_n(\phi \circ \domg)$.

  Consequently, we can bound the Rademacher complexity of $\domh_\ova$ as follows:
  \begin{align*}
	\rad_n(\domh_\ova)
	\leq 2 K \rad_n(\phi \circ \domg)
	\leq 2 K L_\phi \rad_n(\domg)
  \end{align*}
  due to Talagrand's contraction lemma.
\end{proof}

\begin{lemma} \label{lem:uniform_dev}
  For any $\delta$, with probability at least $1-\delta$,
  \begin{align*}
	\sup_{\gs \in \domg} \left| \hatR_\ova(\vecg) - R_\ova(\vecg) \right|
	\leq 4K L_\phi \rad(\domg) + M_\phi \sqrt{\frac{8 \log \frac{2}{\delta}}{n}}.
  \end{align*}
\end{lemma}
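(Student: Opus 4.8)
The plan is to establish this two-sided uniform deviation bound by the standard combination of McDiarmid's bounded-difference inequality (to concentrate the supremum around its mean) with a symmetrization argument (to relate that mean to a Rademacher complexity), and then to invoke Lemma~\ref{lem:rad} to pass from the complexity of the loss class $\domh_\ova$ to that of the score class $\domg$.

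Concretely, write $\Phi(\data) = \sup_{\gs \in \domg} | \hatR_\ova(\vecg) - R_\ova(\vecg) |$. First I would verify the bounded-difference property: replacing a single example $(\vecx_i, y_i)$ alters $\hatR_\ova(\vecg)$ only through the summand $\frac{1}{n} \loss_\ova(\vecg; \vecx_i, y_i)$, and since each of the $K$ terms $\phi(\pm g_y(\vecx))$ lies in $[0, M_\phi]$, the range of $\loss_\ova$ — hence the change in $\Phi$ — is controlled by a constant multiple of $M_\phi / n$. McDiarmid's inequality then yields, with probability at least $1 - \delta$, a bound of the form $\Phi(\data) \le \E_\data[\Phi(\data)] + M_\phi \sqrt{8 \log(2/\delta)/n}$, where the appearance of $2/\delta$ rather than $1/\delta$ reflects a union bound controlling both directions of deviation inside the absolute value.

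It then remains to bound the expectation $\E_\data[\Phi]$. Introducing a ghost sample and Rademacher variables $\sigmas$, the usual symmetrization step gives $\E_\data[\Phi] \le 2\, \rad_n(\domh_\ova)$. Applying Lemma~\ref{lem:rad}, which already bounds $\rad_n(\domh_\ova) \le 2 K L_\phi \rad_n(\domg)$, we obtain $\E_\data[\Phi] \le 4 K L_\phi \rad_n(\domg)$. Substituting this into the McDiarmid estimate produces exactly the claimed inequality.

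I do not expect a genuine obstacle here: the conceptual core of the argument — controlling the Rademacher complexity of the composite OVA loss class in terms of $\rad_n(\domg)$ — has already been carried out in Lemma~\ref{lem:rad} via sub-additivity of the supremum and Talagrand's contraction lemma, which supplies the factor $2 K L_\phi$ that combines with the symmetrization constant to yield $4 K L_\phi$. The only points requiring care are bookkeeping: pinning down the correct bounded-difference constant given that $\loss_\ova$ is a sum of $K$ terms each bounded by $M_\phi$, and correctly tracking the factor from the two-sided union bound so as to land on the stated concentration term $M_\phi \sqrt{8 \log(2/\delta)/n}$.
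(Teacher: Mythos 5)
Your plan follows exactly the same route as the paper's proof: a bounded-difference (McDiarmid) step applied to each one-sided supremum at level $\delta/2$, a union bound over the two directions, symmetrization to bound the expectation by $2\rad_n(\domh_\ova)$, and Lemma~\ref{lem:rad} to convert this into $4KL_\phi\rad_n(\domg)$. Structurally there is nothing to distinguish the two.

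However, the item you dismiss as ``bookkeeping'' --- pinning down the bounded-difference constant --- is in fact the crux, and it does not work out to a $K$-free ``constant multiple of $M_\phi/n$'' as your argument requires. When $(\vecx_i,y_i)$ is replaced by $(\vecx_i',y_i')$, the change in the empirical risk is
$\frac{1}{n}\bigl[\phi\bigl(g_{y_i}(\vecx_i)\bigr)+\sum_{y'\neq y_i}\phi\bigl(-g_{y'}(\vecx_i)\bigr)-\phi\bigl(g_{y_i'}(\vecx_i')\bigr)-\sum_{y''\neq y_i'}\phi\bigl(-g_{y''}(\vecx_i')\bigr)\bigr]$,
and none of the $2K$ terms cancel against each other, because the two groups are evaluated at different inputs $\vecx_i$ and $\vecx_i'$: one can choose $\vecg$ so that every term at $\vecx_i$ is close to $M_\phi$ while every term at $\vecx_i'$ is close to $0$. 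The honest per-coordinate bounded difference is therefore $KM_\phi/n$, and McDiarmid then yields a concentration term of order $KM_\phi\sqrt{\log(2/\delta)/(2n)}$, which matches the stated $M_\phi\sqrt{8\log(2/\delta)/n}$ only when $K\leq 4$. You also cannot repair this by consulting the paper: its own proof asserts, as an equality, that the terms with labels $y'\notin\{y_i,y_i'\}$ cancel between the two samples, reducing the difference to four terms and the constant to $4M_\phi/n$ --- but that cancellation is invalid for precisely the reason above. So your proposal (and the paper's proof) establishes the lemma only with a $K$-dependent concentration term, e.g.\ $\max\{4,K\}\,M_\phi\sqrt{\log(2/\delta)/(2n)}$, unless a genuinely different argument (for instance, one exploiting Lipschitzness of $\phi$ together with boundedness of $\domg$, rather than raw term counting) is supplied for the bounded-difference step.
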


\begin{proof}[Proof of Lemma~\ref{lem:uniform_dev}]
  We will only discuss a one-sided bound on $\sup_{\gs \in \domg} \big(\hatR_\ova(\vecg) - R_\ova(\vecg) \big)$ that holds with probability at least $1 - \frac{\delta}{2}$.
  The other side can be derived in a similar way.

  To begin with, we first bound the change of $\sup_{\gs \in \domg} \big(\hatR_\ova(\vecg) - R_\ova(\vecg) \big)$ when a single entry $z_i = (\vecx_i, y_i)$ of $(z_i, \ldots, z_n)$ is replaced with $z_i' = (\vecx_i', y_i')$.
  Define $A(z_1, \ldots, z_n) = \sup_{\gs \in \domg} \big(\hatR_\ova(\vecg) - R_\ova(\vecg) \big)$.
  Then it holds that
  \begin{align*}
	&A(z_1, \ldots, z_i, \ldots, z_n) - A(z_1, \ldots, z_i', \ldots, z_n) \\
	&\hspace{1cm}= \sup_{\gs \in \domg} \inf_{g_1', \ldots, g_K' \in \domg}
	\left[
	  \frac{1}{n} \sum_{j=1}^n \loss_\ova(\vecg; \vecx_j, y_j) - \E_{p(\vecx, y)} \left[ \loss_\ova(\vecg; \vecx, y) \right]
	\right. \\
	&\hspace{2.5cm}\left.
	- \frac{1}{n} \sum_{j \in \{1, \ldots, n\} \setminus \{i\} } \loss_\ova(\vecg'; \vecx_j, y_j) - \frac{1}{n} \loss_\ova(\vecg'; \vecx_i', y_i') + \E_{p(\vecx, y)} \left[ \loss_\ova(\vecg'; \vecx, y) \right]
	\right] \\
	&\hspace{1cm}\leq \sup_{\gs \in \domg}
	\left[
	  \frac{1}{n} \sum_{j=1}^n \loss_\ova(\vecg; \vecx_j, y_j) - \E_{p(\vecx, y)} \left[ \loss_\ova(\vecg; \vecx, y) \right]
	\right. \\
	&\hspace{2.5cm}\left.
	  - \frac{1}{n} \sum_{j \in \{1, \ldots, n\} \setminus \{i\} } \loss_\ova(\vecg; \vecx_j, y_j) - \frac{1}{n} \loss_\ova(\vecg; \vecx_i', y_i') + \E_{p(\vecx, y)} \left[ \loss_\ova(\vecg; \vecx, y) \right]
	\right] \\
	&\hspace{1cm}= \sup_{\gs \in \domg}
	\left[
	  \frac{1}{n} \loss_\ova(\vecg; \vecx_i, y_i) - \frac{1}{n} \loss_\ova(\vecg; \vecx_i', y_i')
	\right] \\
	&\hspace{1cm}= \frac{1}{n} \sup_{\gs \in \domg}
	\left[
	  \phi \big( g_{y_i}(\vecx_i) \big) + \sum_{y' \neq y_i} \phi \big( - g_{y'}(\vecx_i) \big)
	  - \phi \big( g_{y_i'}(\vecx_i') \big) - \sum_{y'' \neq y_i'} \phi \big( - g_{y''}(\vecx_i') \big)
	\right] \\
	&\hspace{1cm}= \frac{1}{n} \sup_{\gs \in \domg}
	\left[
	  \phi \big( g_{y_i}(\vecx_i) \big) + \phi \big( - g_{y_i'}(\vecx_i) \big)
	  - \phi \big( g_{y_i'}(\vecx_i') \big) - \phi \big( - g_{y_i}(\vecx_i') \big)
	\right]
	\leq \frac{4M_\phi}{n},
  \end{align*}
  where we used the boundedness of the loss function.

  Thus, we can apply McDiarmid's inequality to get that with probability at least $1- \frac{\delta}{2}$,
  \begin{align*}
	\sup_{\gs \in \domg} \big(\hatR_\ova(\vecg) - R_\ova(\vecg) \big)
	\leq \E \left[ \sup_{\gs \in \domg} \big(\hatR_\ova(\vecg) - R_\ova(\vecg) \big) \right] + M_\phi \sqrt{\frac{8 \log \frac{2}{\delta}}{n}}.
  \end{align*}
  Since $R_\ova(\vecg) = \E \big[ \hatR_\ova (\vecg) \big]$, thus by applying symmetrization~\citep{mohri2012}, we get
  \begin{align*}
	\E \left[ \sup_{\gs \in \domg} \big(\hatR_\ova(\vecg) - R_\ova(\vecg) \big) \right]
	\leq 2 \rad_n(\domh_\ova) \leq 2 K L_\phi \rad_n(\domg),
  \end{align*}
  where the last line inequality follows from Lemma~\ref{lem:rad}.
\end{proof}

Lastly, we present the proof for Proposition~\ref{prop:estimation_ova}.
\begin{proof}[Proof of Proposition~\ref{prop:estimation_ova}]
  To begin with, we split the excess risk into two parts: the estimation error term and the approximation term as follow:
  \begin{align*}
	R(\hatvecg) - R^*
	= \underbrace{\left( R(\hatvecg) - \inf_{\gs \in \domg} R(\vecg) \right)}_{\mathclap{\text{estimation error}}}
	+ \underbrace{\left( \inf_{\gs \in \domg} R(\vecg) - R^* \right)}_{\mathclap{\text{approximation error}}}.
  \end{align*}
  We focus on the estimation error.
  Therefore, we assume that for any $\varepsilon > 0$, there exist $g_1^\circ, \ldots, g_K^\circ \in \domg$ such that
  \begin{align*}
	R(\vecg^\circ) = \inf_{\gs \in \domg} R(\vecg) - \varepsilon.
  \end{align*}
  Using these notations, we can upper bound the estimation error as follows:
  \begin{align}
	R(\hatvecg) &- \inf_{\gs \in \domg} R(\vecg) \notag\\
	&= R(\hatvecg) - R(\vecg^\circ) \notag\\
	&= \left( R (\hatvecg) - \hatR(\hatvecg) \right)
	+ \left( \hatR (\vecg^\circ) - R(\vecg^\circ) \right)
	+ \left( \hatR (\hatvecg) - \hatR(\vecg^\circ) \right) \notag\\
	&\leq 2\sup_{\gs \in \domg} \left| \hatR (\vecg) - R(\vecg) \right|
	+ \left( \hatR (\hatvecg) - \hatR(\vecg^\circ) \right) \notag\\
	&\leq 2\sup_{\gs \in \domg} \left| \hatR (\vecg) - R(\vecg) \right| \label{eq:circ_g}\\
	&\leq 4K L_\phi \rad(\domg) + M_\phi \sqrt{\frac{8 \log \frac{2}{\delta}}{n}}, \label{eq:raderade}
  \end{align}
  where we used that $\hatR (\hatvecg) \leq \hatR(\vecg^\circ)$ by the definition of $\hatvecg$ in \eqref{eq:circ_g}, and we used the result of Lemma~\ref{lem:rad} in \eqref{eq:raderade}.
  The above result together with Theorem~\ref{thm:erb_ova} gives Proposition~\ref{prop:estimation_ova}.
\end{proof}

\subsection{Proof of Theorem~\ref{thm:erb_ce}} \label{subsection:ce_erb}
To begin with, we will use the following theorem, which is proved in \citet{Pires2016}.

\begin{theorem}[\citet{Pires2016}] \label{thm:erb_ce_old}
  Define the function $\xi_\ce: \R \to \R_{\geq 0}$ as
  \begin{align*}
	\xi_\ce (z) = \frac{1}{2} \big[ (1+z) \log (1+z) + (1-z) \log (1-z) \big].
  \end{align*}
  Then for all $f$, we have
  \begin{align*}
	\xi_\ce \big( \Delta R_{\zo}(f) \big) \leq \Delta R_\ce(\vecg).
  \end{align*}
\end{theorem}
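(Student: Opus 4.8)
The plan is to reduce the claim to a pointwise calibration inequality and then lift it by convexity. First I would record two pointwise identities. For the zero-one loss, $W_{\zo}(f;\veceta) = 1-\eta_f$ with pointwise minimum $1-\eta_{y^*}$ (where $y^* = \argmax_y \eta_y$), so $\Delta W_{\zo}(f;\veceta) = \eta_{y^*} - \eta_f$. For the cross-entropy loss, writing $\widehat{\veceta} = [\widehat{\eta}_1,\ldots,\widehat{\eta}_K]^\top$ for the softmax output $\widehat{\eta}_y = \Psi^{-1}_{y,\,\ce}(\vecg)$ in \eqref{eq:inverse_link}, a direct computation gives $W_\ce(\vecg;\veceta) = H(\veceta) + \kl(\veceta \,\|\, \widehat{\veceta})$, where $H$ is the Shannon entropy; the minimum $H(\veceta)$ is attained exactly when $\widehat{\veceta} = \veceta$, so $\Delta W_\ce(\vecg;\veceta) = \kl(\veceta \,\|\, \widehat{\veceta})$.

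With these identities the core task is the pointwise bound $\xi_\ce\big(\Delta W_{\zo}(f;\veceta)\big) \le \Delta W_\ce(\vecg;\veceta)$. Since $f = \argmax_y g_y = \argmax_y \widehat{\eta}_y$, the predicted class obeys $\widehat{\eta}_f \ge \widehat{\eta}_{y^*}$, so I would lower-bound $\kl(\veceta \,\|\, \widehat{\veceta})$ by minimizing it over all $\widehat{\veceta}$ subject to the single constraint $\widehat{\eta}_f \ge \widehat{\eta}_{y^*}$. A Lagrange-multiplier argument shows the minimizer leaves the other coordinates fixed ($\widehat{\eta}_y = \eta_y$ for $y \notin \{f, y^*\}$) and equalizes the two tied coordinates at $\widehat{\eta}_f = \widehat{\eta}_{y^*} = (\eta_f + \eta_{y^*})/2$. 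Writing $S = \eta_{y^*} + \eta_f$ and $\delta = \eta_{y^*} - \eta_f = \Delta W_{\zo}(f;\veceta)$, the minimum value collapses to the perspective form $S\,\xi_\ce(\delta / S)$.

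It then remains to show $S\,\xi_\ce(\delta/S) \ge \xi_\ce(\delta)$ whenever $S \le 1$ and $0 \le \delta \le S$. I would prove this by monotonicity in the total mass: setting $\rho(S) = S\,\xi_\ce(\delta/S)$ and $u = \delta/S$, one computes $\rho'(S) = \xi_\ce(u) - u\,\xi_\ce'(u)$, and the identity $\xi_\ce(u) - u\,\xi_\ce'(u) = \tfrac12\log(1-u^2) \le 0$ shows $\rho$ is nonincreasing, hence $\rho(S) \ge \rho(1) = \xi_\ce(\delta)$. Chaining the bounds gives the pointwise inequality (the trivial case $f = y^*$ yields $\delta = 0$ and $\xi_\ce(0) = 0$). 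Finally, since $\xi_\ce''(u) = (1-u^2)^{-1} > 0$ makes $\xi_\ce$ convex, Jensen's inequality yields $\xi_\ce\big(\Delta R_{\zo}(f)\big) = \xi_\ce\big(\E_{\vecx}[\Delta W_{\zo}]\big) \le \E_{\vecx}\big[\xi_\ce(\Delta W_{\zo})\big] \le \E_{\vecx}[\Delta W_\ce] = \Delta R_\ce(\vecg)$, as claimed.

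The step I expect to be the main obstacle is the passage from the perspective-form minimum $S\,\xi_\ce(\delta/S)$ back to $\xi_\ce(\delta)$: recognizing that the constrained KL minimization produces a perspective function of $\xi_\ce$, and that the clean identity $\xi_\ce(u) - u\,\xi_\ce'(u) = \tfrac12\log(1-u^2)$ controls its monotonicity in $S$, is the non-obvious ingredient. The KL minimization itself (verifying that only the two swapped coordinates move and that the constraint is active at the boundary $\widehat{\eta}_f = \widehat{\eta}_{y^*}$) is routine once set up.
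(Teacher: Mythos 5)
Your proof is correct, but note that the paper itself offers no proof of this statement: it is imported verbatim from \citet{Pires2016} and used as a black box in the proof of Theorem~\ref{thm:erb_ce}, so there is no in-paper argument to compare against line by line. Your route is the natural (and essentially the standard) one for computing this calibration function: the identity $\Delta W_\ce(\vecg;\veceta)=\diver_\kl(\veceta \parallel \widehat{\veceta})$ matches the reduction the paper does use in Appendix~\ref{subsection:ce_erb}, and your constrained minimization is sound in every step I checked. Relaxing $f=\argmax_y \widehat{\eta}_y$ to the single constraint $\widehat{\eta}_f \geq \widehat{\eta}_{y^*}$ only lowers the minimum, which is the right direction; the KKT computation correctly yields $\widehat{\eta}_y=\eta_y$ off $\{f,y^*\}$ and $\widehat{\eta}_f=\widehat{\eta}_{y^*}=(\eta_f+\eta_{y^*})/2$ (the constraint is active since $\veceta$ itself violates it when $\eta_{y^*}>\eta_f$, and convexity of $\diver_\kl$ in its second argument makes the stationary point a global minimum); the perspective-form value $S\,\xi_\ce(\delta/S)$ is right, and your identity $\xi_\ce(u)-u\,\xi_\ce'(u)=\tfrac12\log(1-u^2)\leq 0$ correctly gives monotonicity in $S$, hence $S\,\xi_\ce(\delta/S)\geq \xi_\ce(\delta)$ for $S\leq 1$. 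The final lift via Jensen is valid because $\xi_\ce''(u)=(1-u^2)^{-1}>0$ and both excess risks are pointwise expectations ($\Delta R_{\zo}(f)=\E_{\vecx}[\Delta W_{\zo}]$, $\Delta R_\ce(\vecg)=\E_{\vecx}[\Delta W_\ce]$, since the Bayes minimizers are attained pointwise). One observation worth recording: your argument is strictly sharper than the Pinsker-based bound the paper applies in cases (C)--(F) of its Theorem~\ref{thm:erb_ce} proof, since $\xi_\ce(z)\geq \tfrac12 z^2$; indeed, your pointwise inequality recovers exactly the case-(B) bound the paper could only obtain by citing \citet{Pires2016}, so you have in effect supplied the missing ingredient of that appendix in self-contained form. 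The only cosmetic gaps are boundary cases ($\eta_y=0$ for some coordinate, or $u=\delta/S=1$), which you can dispatch by continuity as you implicitly do.
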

To compare Theorem~\ref{thm:erb_ce} with Theorem~\ref{thm:erb_ce_old}, let us apply Taylor expansion to calibration function $\xi_\ce$:
\begin{align*}
  \xi_\ce(z)
  = \frac{1}{2} z^2 + \frac{1}{12} z^4 + \frac{1}{30}z^6 + \mathrm{O}(z^8) > \frac{1}{2} z^2.
\end{align*}
As we can see from the above, Theorem~\ref{thm:erb_ce} provides a loosened excess risk bound compared to Theorem~\ref{thm:erb_ce_old}.
Yet, we can observe that the behavior of $\xi_\ce$ is similar to that of $\frac{1}{2} z^2$ when $z \in [0,1]$ (see also Figure~\ref{fig:ce_calibration_function}), thus gives similar calibration performance.
\begin{figure}[t]
  \centering
  \includegraphics[width=8cm]{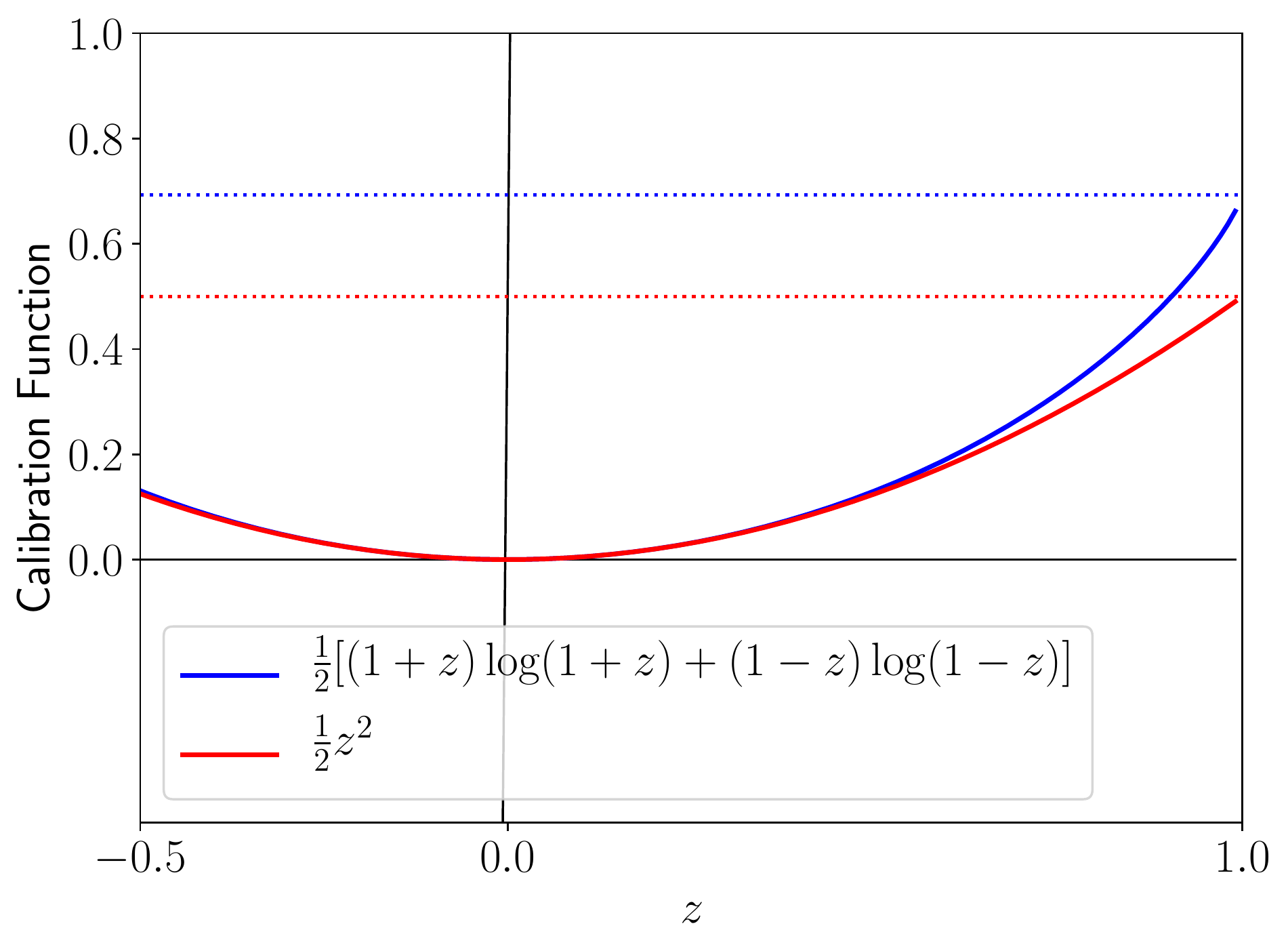}
  \caption{The comparison of the calibration function using the Cross Entropy loss. The blue line corresponds to the case of ordinary classification, and the red line corresponds to the case of learning with rejection, which is our framework. As we can see from the graph, both functions show similar behavior near $z \simeq 0$.}
  \label{fig:ce_calibration_function}
\end{figure}

Now we will prove Theorem~\ref{thm:erb_ce}.
\begin{proof}
The pointwise risk of CE loss is expressed as
\begin{align}
    W_\ce(\vecg; \veceta) &= - \sum_y \eta_y g_y + \log \left( \sum_y \exp(g_y) \right). \label{eq:pointwise_ce}
\end{align}
Similarly to the proof of Theorem~\ref{thm:erb_ova}, the main focus in this proof is to show the following inequality:
\begin{align*}
  \frac{1}{2} \Delta W_{\zoc}(r_f, f; \veceta)^2 \leq \Delta W_\ce(\vecg; \veceta). %\label{eq:cond_ce_erb}
\end{align*}
Note that when we use cross entropy loss, we can rewrite the surrogate excess risk using KL divergence $\diver_\kl (\cdot \parallel \cdot)$
\begin{align}
  \Delta W_\ce(\vecg; \veceta)
  &= - \sum_y \eta_y g_y + \log \left( \sum_y \exp(g_y) \right) + \sum_y \eta_y \log \eta_y \tag{by definition \eqref{eq:pointwise_ce}}\\
  &= - \sum_y \eta_y \log \Psi_y^{-1}(\vecg) + \sum_y \eta_y \log \eta_y \notag\\
  &= \sum_y \eta_y \log \frac{\eta_y}{\Psi_y^{-1}(\vecg)}
  = \diver_\kl (\veceta \parallel \vecPsi^{-1}(\vecg)) \notag
\end{align}
Now we us Table~\ref{table:thm_erb} again.
Note that $\Psi_f(\vecx) = \max_{y \in \domy} \Psi_y(\vecx)$ if we use proper composite loss.

\begin{description}
  \item[Cases (A)(G)(H):] \mbox{}\\
	In this case, we can confirm that $\Delta W_{\zoc}(r_f, f; \veceta) = 0$ by \eqref{eq:er_zoc} since $(r_f, f)$ makes a correct prediction.
	Thus, it holds that
	\begin{align*}
	  \Delta W_{\zoc}(r_f, f; \veceta) = 0 \leq \sqrt{2 \Delta W_\ce(\vecg; \veceta)}.
	\end{align*}

  \item[Cases (C)(D):] \mbox{}\\
	In this case, we can confirm that $\Delta W_{\zoc}(r_f, f; \veceta) = c - (1-\eta_{y^*})$ by \eqref{eq:er_zoc}, thus, we can lower bound surrogate excess risk as follows:
	\begin{align}
	  \Delta W_\ce(\vecg; \veceta)
	  &= \diver_\kl (\veceta \parallel \vecPsi^{-1}(\vecg)) \notag\\
	  &\geq \frac{1}{2} \left( \sum_y \left|\eta_y - \Psi_y^{-1}(\vecg)\right| \right)^2 \tag{Pinsker's inequality}\\
	  &\geq \frac{1}{2} \left|\eta_{y^*} - \Psi_{y^*}^{-1}(\vecg)\right|^2 \notag\\
	  &\geq \frac{1}{2} \left|\eta_{y^*} - (1-c)\right|^2 \label{eq:hoge1} \\
	  &= \frac{1}{2} \Delta W_{\zoc}(r_f, f; \veceta)^2, \notag
	\end{align}
	where in \eqref{eq:hoge1}, we used the fact that $\eta_{y^*} > 1-c$ in cases (C)(D), and $\Psi_{y^*}^{-1}(\vecg) \leq \Psi_{y^*}^{-1}(\vecg)$ by $\max_y \Psi_y^{-1}(\vecg) = \Psi_f^{-1}(\vecg) \leq 1-c$.

  \item[Cases (E)(F):] \mbox{}\\
	In this case, we can confirm that $\Delta W_{\zoc}(r_f, f; \veceta) = |(1-\eta_f) - c|$ by \eqref{eq:er_zoc}, thus, similar to the case above, we can lower bound surrogate excess risk as follows:
	\begin{align}
	  \Delta W_\ce(\vecg; \veceta)
	  &= \diver_\kl (\veceta \parallel \vecPsi^{-1}(\vecg)) \notag\\
	  &\geq \frac{1}{2} \left( \sum_y \left|\eta_y - \Psi_y^{-1}(\vecg)\right| \right)^2 \tag{Pinsker's inequality}\\
	  &\geq \frac{1}{2} \left|\eta_f - \Psi_f^{-1}(\vecg)\right|^2 \notag\\
	&\geq \frac{1}{2} \left|\eta_{f} - (1-c)\right|^2 \tag{by $\eta_f \leq 1-c < \Psi_f^{-1}(\vecg)$ in cases (E)(F)}\\
	  &= \frac{1}{2} \Delta W_{\zoc}(r_f, f; \veceta)^2. \notag
	\end{align}

  \item[Case (B):] \mbox{}\\
	This case reduces to the excess risk bound in the ordinary classification, which enables us to utilize the result of Theorem~\ref{thm:erb_ce_old}.
	Note that $\Delta W_{\zoc} (r_f, f; \veceta) = \Delta W_{\zo} (f; \veceta) = \eta_{y^*} - \eta_f$ by \eqref{eq:er_zoc}.
	Thus, we can lower bound $\Delta W_\ce(\vecg; \veceta)$ as follows:
	\begin{align*}
	  \Delta W_\ce(\vecg; \veceta)
	  \geq \xi_\ce \big( \Delta W_{\zo}(f; \veceta) \big)
	  = \xi_\ce \big( \Delta W_{\zoc}(r_f, f; \veceta) \big)
	  \geq \frac{1}{2} \Delta W_{\zoc}(r_f, f; \veceta)^2,
	\end{align*}
	where in the last inequality, we used the property $\xi_\ce(z) \geq \frac{1}{2} z^2$.
\end{description}
Combining the above analysis completes the proof.
\end{proof}

\section{Analysis for classifier-rejector approach} \label{section:sep}
\subsection{Proof of Theorem~\ref{thm:rejection_calibration_iff}}
\label{append_iff}
% \begin{proof}
   Define $h_{\veceta}(r) = \pdiff{W(r, \dagf_{\veceta}; \veceta)}{r}$.
   Note that the minimizer $\dagr_{\veceta}$ in \eqref{def_rf}
   satisfies
   the first-order optimality condition
   \begin{align}
     h_{\veceta}(\dagr_{\veceta}) = 0. \label{eq:first_order}
   \end{align}
   We first consider the case $\max_y \eta_y \leq 1-c$, i.e., $\sign[r^*] = -1$.
   Recall that $W$ is defined as
   \begin{align*}
         W \big(r(\vecx), f(\vecx); \veceta(\vecx) \big) &= \sum_{y \in \domy} \eta_y(\vecx) \loss\big( r, f; \vecx, y \big)
   \end{align*}
   is a convex combination of $\loss$, which is a convex function of class $C^1$.
   Thus, $h_{\veceta}(r)$ is a non-decreasing function with respect to $r$.
   Since we assumed that $\dagr$ is rejection-calibrated, we need
   $\sign[\dagr_{\veceta}]=\sign[r^*]=-1$, which implies $\dagr_{\veceta}<0$.
   Therefore we have $h_{\veceta}(0) \ge 0$ by the monotonicity and \eqref{eq:first_order}.
   Since this argument holds for any $\veceta$ such that $\max_y \eta_y \leq 1-c$, we have $\sup_{\veceta:\,\max_y \eta_y \leq 1-c} h_{\veceta}(0) \geq 0$.
   For the case $\max_y \eta_y\ge 1-c$ we have $\inf_{\veceta:\,\max_y \eta_y \geq 1-c} h_{\veceta}(0) \leq 0$.
   %in the same way as the above case.
  
   Combining the above analysis, we can conclude that $\dagr$ is rejection-calibrated only if
   \begin{align}
       \sup_{\veceta:\,\max_y \eta_y \geq 1-c} h_{\veceta}(0) \leq 0 \leq \inf_{\veceta:\,\max_y \eta_y \leq 1-c} h_{\veceta}(0). \label{eq:ifonlyif}
   \end{align}
   The necessary conditions~\eqref{eq:rejection_calibration_AB} (left) and~\eqref{eq:rejection_calibration_AB} (right) are then straightforward, since restricting constraints does not make the supremum larger and the infimum smaller. We further show in Appendix~\ref{subsec:case_study} that \eqref{eq:ifonlyif} is also the sufficient condition for rejection calibration.
   \qed
% \end{proof}

\subsection{Upper bounds of $\zoc$ loss} \label{subsec:upper_bounds}
We first present general upper bounds of $\zoc$ loss in the multiclass setting.
\begin{lemma}[Upper bounds for $\zoc$ loss] \label{lem:upper_bound}
  Let $\phi(z), \psi(z), \psi_1(z), \psi_2(z)$ be convex functions that bound $\vecone_{[z\leq0]}$ from above, and $\alpha, \beta$ be the positive constants.
  Then, Additive Pairwise Comparison Loss (APC loss) $\loss_\apc$ and Multiplicative Pairwise Comparison Loss (MPC loss) $\loss_\mpc$ given below are upper bounds of $\zoc$ loss, where
	\begin{align*}
	  \loss_{\apc}(r, f; \vecx, y) &= \sum_{y' \neq y} \phi \Bigl( \alpha \bigl( g_y(\vecx) - g_{y'}(\vecx) - r(\vecx) \bigr) \Bigr) + c \psi \bigl(\beta r(\vecx)\bigr), %\label{eq:apc} 
	  \\
	  \loss_{\mpc}(r, f; \vecx, y) &= \sum_{y' \neq y} \phi \Bigl( \alpha \bigl( g_y(\vecx) - g_{y'}(\vecx) \bigr) \Bigr) \psi_1 (-\alpha r(\vecx)) + c \psi_2 \bigl(\beta r(\vecx)\bigr). %\label{eq:mpc}
	\end{align*}
\end{lemma}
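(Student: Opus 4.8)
The plan is to establish both bounds $\loss_\apc \ge \loss_\zoc$ and $\loss_\mpc \ge \loss_\zoc$ pointwise, via a case analysis on the sign of $r(\vecx)$ (reject versus accept) and, within the acceptance case, on whether $f(\vecx)=y$. Throughout I would first record the elementary but crucial fact that each of $\phi,\psi,\psi_1,\psi_2$ is nonnegative: a function bounding $\vecone_{[z\le 0]}$ from above is in particular bounded below by $0$. Consequently every summand appearing in $\loss_\apc$ and in $\loss_\mpc$ is nonnegative, which lets me discard all ``extra'' terms in each case and argue from a single well-chosen term. (Note that convexity of the four functions plays no role in this particular claim; only the upper-bound-on-indicator property is used.)

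First I would treat the rejection case $r(\vecx)\le 0$, where $\loss_\zoc = c$. Since $\beta>0$ we have $\beta r(\vecx)\le 0$, so the defining property of $\psi$ (resp.\ $\psi_2$) yields $\psi\bigl(\beta r(\vecx)\bigr)\ge \vecone_{[\beta r(\vecx)\le 0]}=1$; hence the rejection term $c\psi(\beta r)$ alone contributes at least $c$, and the nonnegative pairwise-comparison sum only increases the total, giving $\loss_\apc\ge c$ and $\loss_\mpc\ge c$.

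Next, in the acceptance case $r(\vecx)>0$ we have $\loss_\zoc=\vecone_{[f(\vecx)\ne y]}$. If $f(\vecx)=y$ the bound is immediate by nonnegativity, so the substantive subcase is $f(\vecx)\ne y$, where $\loss_\zoc=1$. The key step is to single out the index $y'=f(\vecx)$: by definition $f(\vecx)=\argmax_{y''\in\domy} g_{y''}(\vecx)$, so $g_y(\vecx)-g_{f(\vecx)}(\vecx)\le 0$. For $\loss_\apc$ this makes the argument $\alpha\bigl(g_y-g_{f(\vecx)}-r\bigr)<0$, whence $\phi(\cdot)\ge 1$ and therefore $\loss_\apc\ge 1$. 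For $\loss_\mpc$ the same index gives $\phi\bigl(\alpha(g_y-g_{f(\vecx)})\bigr)\ge 1$, while $-\alpha r<0$ gives $\psi_1(-\alpha r)\ge 1$; since both factors are nonnegative and at least $1$, their product is at least $1$, and again $\loss_\mpc\ge 1$.

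I expect no serious obstacle here, as the argument is a routine case check. The one point requiring care is the witness argument in the misclassification subcase: choosing $y'=f(\vecx)$ rather than an arbitrary $y'\ne y$ is exactly what guarantees the sign $g_y-g_{y'}\le 0$ that activates the indicator lower bound on $\phi$. For $\loss_\mpc$ one must additionally use that a product of two nonnegative factors each at least $1$ is itself at least $1$, which is precisely where the nonnegativity of $\psi_1$ enters. Collecting the four cases then completes the pointwise comparison, and hence the lemma.
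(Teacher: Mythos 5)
Your proof is correct, and it takes a somewhat different route from the paper's. The paper avoids an explicit case split: it introduces the margin function $m(f(\vecx),y) = g_y(\vecx) - \max_{y' \neq y} g_{y'}(\vecx)$ and runs a uniform chain of pointwise inequalities, $\loss_{\zoc} \le \vecone_{[m < 0]}\vecone_{[r > 0]} + c\,\vecone_{[r \le 0]} \le \dots \le \loss_\mpc$ (resp.\ $\loss_\apc$), in which each indicator is dominated by the corresponding surrogate factor and the single margin term $\phi(\alpha m)$ (resp.\ $\phi(\alpha(m-r))$) is then expanded into the pairwise sum over $y' \neq y$ using nonnegativity of $\phi$; for the APC bound the paper additionally needs the trick that $\max\bigl(m, -r\bigr) \le 0$ implies $\tfrac{1}{2}(m - r) \le 0$, so as to merge the misclassification and acceptance indicators into a single argument of $\phi$. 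Your explicit case analysis with the witness index $y' = f(\vecx)$ rests on exactly the same ingredients --- nonnegativity, indicator domination, and the sign $g_y(\vecx) - g_{f(\vecx)}(\vecx) \le 0$ under misclassification (your witness term is precisely the term attaining the max in the paper's margin) --- but it dispenses with both the margin function and the max-versus-average trick, making the argument more elementary. Your version also buys two small points of rigor: you observe that convexity of $\phi, \psi, \psi_1, \psi_2$ is never used (true; it matters elsewhere in the paper, not here), and you handle ties correctly, whereas the paper's intermediate claim $\vecone_{[f(\vecx) \neq y]} \le \vecone_{[m(f(\vecx),y) < 0]}$ fails at points where $g_y(\vecx) = g_{f(\vecx)}(\vecx)$ and the argmax breaks the tie away from $y$ (it should read $\vecone_{[m \le 0]}$); your argument, which only uses $\phi(z) \ge \vecone_{[z \le 0]}$ with the nonstrict inequality $g_y - g_{f(\vecx)} \le 0$ (strictness in the APC case being supplied by $r(\vecx) > 0$), is unaffected by this issue.
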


\begin{proof}
Define the margin function: $m(f(\vecx), y) = g_y(\vecx) - \max_{y' \neq y} g_{y'}(\vecx)$.
Note that the margin function satisfies
\begin{align*}
  \vecone_{[f(\vecx) \neq y]} &\leq \vecone_{[m(f(\vecx), y) < 0]}, \\
  \phi \Big( \alpha m(f(\vecx), y) \Big) &\leq \sum_{y' \neq y} \phi \Big( \alpha (g_y(\vecx) - g_{y'}(\vecx)) \Big).
\end{align*}
Using these properties, we can bound $\loss_{\zoc}$ from above as follows:

\begin{align*}
  \loss_{\zoc} (r, f; \vecx, y)
  &\leq \vecone_{[m\left(f(\vecx), y\right) < 0]} \vecone_{[r(\vecx) > 0]} + c \vecone_{[r(\vecx) \leq 0]} \\
  &= \vecone_{[ \alpha m\left(f(\vecx), y\right) < 0]} \vecone_{[\alpha r(\vecx) > 0]} + c \vecone_{[ \beta r(\vecx) \leq 0]} \\
  &\leq \phi \Big( \alpha m\left(f(\vecx), y\right) \Big) \psi_1 ( \alpha r(\vecx)) + c \psi_2 ( \beta r(\vecx)) \\
  &\leq \sum_{y' \neq y} \phi \Big( \alpha (g_y(\vecx) - g_{y'}(\vecx)) \Big) \psi_1 ( \alpha r(\vecx)) + c \psi_2 ( \beta r(\vecx)) \\
  &= \loss_\mpc(r, f; \vecx, y).
\end{align*}
\begin{align*}
  \loss_{\zoc} (r, f; \vecx, y)
  &\leq \vecone_{[m\left(f(\vecx), y\right) < 0]} \vecone_{[r(\vecx) > 0]} + c \vecone_{[r(\vecx) \leq 0]} \\
  &\leq \vecone_{[m\left(f(\vecx), y\right) < 0]} \vecone_{[-r(\vecx) \leq 0]} + c \vecone_{[r(\vecx) \leq 0]} \\
  &= \vecone_{[ \max \left( m(f(\vecx), y), -r(\vecx) \right) \leq 0]} + c \vecone_{[r(\vecx) \leq 0]} \\
  &\leq \vecone_{\left[ \frac{1}{2} \left( m(f(\vecx), y) - r(\vecx) \right) \leq 0\right]} + c \vecone_{[r(\vecx) \leq 0]} \\
  &= \vecone_{\left[ \alpha \left( m(f(\vecx), y) - r(\vecx) \right) \leq 0\right]} + c \vecone_{[ \beta r(\vecx) \leq 0]} \\
  &\leq \phi \Big( \alpha \left( m(f(\vecx), y) - r(\vecx) \right) \Big) + c \psi (\beta r(\vecx)) \\
  &\leq \loss_\apc(r, f; \vecx, y).
\end{align*}
\end{proof}
In terms of optimization, $\loss_\apc$ is convex with respect to $(r, f)$, while $\loss_\mpc$ is non-convex since it contains the multiplication of two convex functions, which is not necessarily convex.
However, we can easily confirm that $\loss_\mpc$ is biconvex, that is, if we fix either $r$ or $f$, then this function is convex with respect to the other.

\subsection{Order-preserving property} \label{subsec:order}
We first show that these losses in Lemma~\ref{lem:upper_bound} have the order-preserving property which is defined as follows.
\begin{definition}[Order-preserving property~\citep{Zhang2004}] \label{def:order_preserving}
  A loss $\loss$ is order-preserving if, for all fixed $\veceta$, its pointwise risk $W$ has a minimizer $\vecg^* \in \Omega$ such that $\eta_i < \eta_j \Rightarrow g_i^* \leq g_j^*$.
  Moreover, the loss $\loss$ is strictly order-preserving if the minimizer $\vecg^*$ satisfies $\eta_i < \eta_j \Rightarrow g_i^* < g_j^*$.
\end{definition}
It is known that the order-preserving property is a sufficient condition for classification calibration~\citep{Pires2016}.
Therefore, showing the order-preserving property
of a loss function guarantees classification calibration.

Again, we denote by $(\dagr, \dagf)$ the minimizer of the above risks over all measurable functions, and $(\dagr_{\veceta}, \dagf_{\veceta})$ the minimizer of the corresponding pointwise risks over real space:
\begin{align*}
  (\dagr, \dagf) &= \argmin_{r,f: \text{measurable}} R(r, f), \\
  (\dagr_{\veceta}, \dagf_{\veceta}) &\coloneq \argmin_{r \in \R,\ \vecg \in \R^K} W(r, f; \veceta),
\end{align*}
where we consider APC loss and MPC loss for the pointwise risk $W$, which are expressed as
\begin{align*}
  W_\apc(r, f; \veceta) &= \sum_y \eta_y \left( \sum_{y' \neq y} \phi \Bigl( \alpha \bigl( g_y - g_{y'} - r \bigr) \Bigr) \right) + c \psi \bigl(\beta r\bigr), %\label{eq:apc_pointwise} 
  \\
  W_\mpc(r, f; \veceta) &= \sum_y \eta_y \left( \sum_{y' \neq y} \phi \Bigl( \alpha \bigl( g_y - g_{y'} \bigr) \Bigr) \psi_1 (-\alpha r) \right) + c \psi_2 \bigl(\beta r\bigr). %\label{eq:mpc_pointwise}
\end{align*}

The following theorems show that APC loss an MPC loss have order-preserving property. 

\begin{theorem}[Order-preserving property for $\loss_\apc$] \label{thm:order_apc}
  $\loss_\apc$ is order-preserving if $\phi$ is a non-increasing function such that $\phi (z - \alpha \dagr_{\veceta}) < \phi (-z - \alpha \dagr_{\veceta})\ (z > 0)$ holds for all $\veceta \in \Lambda_K$.
  Moreover, $\loss_\apc$ is strictly order-preserving if $\phi$ is differentiable and $\phi'(-\alpha \dagr_{\veceta}) < 0$ holds for all $\veceta \in \Lambda_K$.
\end{theorem}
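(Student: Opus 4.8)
The plan is to adapt the classical transposition argument for order-preserving losses~\citep{Zhang2004} to the pairwise structure of $\loss_\apc$. Fix $\veceta \in \Lambda_K$ and let $(\dagr_\veceta, \dagf_\veceta)$ be the pointwise minimizer; write $r = \dagr_\veceta$ and let $\vecg$ be the score vector of $\dagf_\veceta$. Since the rejection term $c\psi(\beta r)$ does not depend on $\vecg$, only the pairwise part $\sum_y \eta_y \sum_{y' \neq y} \phi(\alpha(g_y - g_{y'} - r))$ is affected by permuting the coordinates of $\vecg$. For the first (non-strict) claim I would argue by contradiction: suppose there are indices $i, j$ with $\eta_i < \eta_j$ but $g_i > g_j$, and form $\tilde\vecg$ by swapping the $i$-th and $j$-th coordinates of $\vecg$.

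The core step is to expand $W_\apc(r, \tilde\vecg; \veceta) - W_\apc(r, \vecg; \veceta)$ by splitting the double sum into three groups: terms touching neither $i$ nor $j$ (unchanged under the swap), the single pure $(i,j)$ term, and the cross terms that pair one of $\{i,j\}$ with a third index $k$. A direct check shows that the cross terms weighted by $\eta_k$ (those with true label $k$) are invariant, while the cross terms weighted by $\eta_i$ and $\eta_j$ contribute $(\eta_i - \eta_j)\bigl[\phi(\alpha(g_j - g_k - r)) - \phi(\alpha(g_i - g_k - r))\bigr]$ for each $k$, which is $\le 0$ because $\phi$ is non-increasing and $g_i > g_j$. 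The pure $(i,j)$ term contributes $(\eta_i - \eta_j)\bigl[\phi(-w - \alpha r) - \phi(w - \alpha r)\bigr]$ with $w = \alpha(g_i - g_j) > 0$, which is strictly negative by the hypothesis $\phi(z - \alpha \dagr_\veceta) < \phi(-z - \alpha \dagr_\veceta)$ for $z > 0$. Summing, the swap strictly decreases $W_\apc$, contradicting minimality; hence every minimizer satisfies $g_i \le g_j$, which is exactly order preservation.

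For strict order preservation I would start from the weak inequality $g_i \le g_j$ just established and rule out equality. Assuming $g_i = g_j =: g$ at the minimizer, I would use the first-order stationarity conditions $\pdiff{W_\apc(r, \vecg; \veceta)}{g_i} = \pdiff{W_\apc(r, \vecg; \veceta)}{g_j} = 0$ (valid since $\phi$ is differentiable). Subtracting them and evaluating at $g_i = g_j$ with $r = \dagr_\veceta$ collapses all third-index contributions symmetrically and yields
\begin{align*}
  \alpha(\eta_j - \eta_i)\left[\, 2\phi'(-\alpha \dagr_\veceta) + \sum_{k \neq i, j} \phi'\bigl(\alpha(g - g_k - \dagr_\veceta)\bigr)\right] = 0.
\end{align*}
Because $\eta_j > \eta_i$ and $\alpha > 0$, the bracket must vanish; but $\phi' \le 0$ everywhere (as $\phi$ is non-increasing) and $\phi'(-\alpha \dagr_\veceta) < 0$ by hypothesis, so the bracket is strictly negative --- a contradiction. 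Therefore $g_i < g_j$, giving strict order preservation.

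I expect the main obstacle to be the careful bookkeeping of the transposition inside the pairwise double sum: one must correctly verify that the cross terms with true label $k$ cancel while cleanly isolating the sign-determining $(i,j)$- and $(\eta_i,\eta_j)$-weighted pieces. A secondary technical point is ensuring the pointwise minimizer has finite components, so that $\phi$ and $\phi'$ are always evaluated at finite arguments and the stationarity conditions indeed hold.
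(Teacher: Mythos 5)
Your proposal is correct and follows essentially the same route as the paper's own proof: the identical transposition (coordinate-swap) argument with the same decomposition of the pairwise double sum — invariant terms, $\eta_k$-weighted cross terms canceling, and the $(\eta_i-\eta_j)$-weighted pieces carrying the sign — for the non-strict claim, and the same subtraction of the two first-order stationarity conditions, yielding the same strictly negative bracket $2\phi'(-\alpha \dagr_{\veceta}) + \sum_{k\neq i,j}\phi'\bigl(\alpha(g-g_k-\dagr_{\veceta})\bigr)$, for the strict claim. The only caveat you raise (finiteness/existence of the pointwise minimizer) is likewise left implicit in the paper, so there is nothing missing relative to its argument.
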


\begin{theorem}[Order-preserving property for $\loss_\mpc$] \label{thm:order_mpc}
  $\loss_\mpc$ is order-preserving if $\phi$ is a non-increasing function such that $\phi (z) < \phi (-z)$ holds for all $z > 0$.
  Moreover, $\loss_\mpc$ is strictly order-preserving if $\phi$ is differentiable and $\phi'(0) < 0$.
\end{theorem}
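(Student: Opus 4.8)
The plan is to reduce the joint minimization over $(r,f)$ to a minimization over $\vecg$ alone, and then pin down the ordering of the minimizing score vector by an exchange argument. Writing $S(\vecg) = \sum_y \eta_y \sum_{y' \neq y} \phi\big(\alpha(g_y - g_{y'})\big)$, the pointwise MPC risk factorizes as $W_\mpc(r,f;\veceta) = \psi_1(-\alpha r)\,S(\vecg) + c\,\psi_2(\beta r)$. Since $\psi_1 \ge 0$ (it upper bounds an indicator) and $S \ge 0$, any $\vecg^*$ that minimizes $S$ satisfies $W_\mpc(r,\vecg^*;\veceta) \le W_\mpc(r,\vecg;\veceta)$ for \emph{every} $r$ and $\vecg$; hence $(\dagr,\vecg^*)$ is a joint minimizer for a suitable $\dagr$, and it suffices to exhibit a minimizer of $S$ respecting the order of $\veceta$.

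First I would establish the (non-strict) order-preserving property. Take any minimizer $\vecg^*$ of $S$ and suppose $\eta_i < \eta_j$ while $g_i^* > g_j^*$; let $\tilde\vecg$ be obtained by swapping the $i$-th and $j$-th coordinates. Pairwise terms not involving $\{i,j\}$ are unchanged. The terms pairing a third index $k$ with $\{i,j\}$ split into a group that is invariant under the transposition and a group that changes by $(\eta_i-\eta_j)\big[\phi(\alpha(g_j^*-g_k^*))-\phi(\alpha(g_i^*-g_k^*))\big]\le 0$, using $\eta_i<\eta_j$ together with $\phi$ non-increasing. The direct $(i,j)$ contribution changes by $(\eta_i-\eta_j)\big[\phi(-z)-\phi(z)\big]\le 0$ with $z=\alpha(g_i^*-g_j^*)>0$, using the hypothesis $\phi(z)<\phi(-z)$. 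Thus $S(\tilde\vecg)\le S(\vecg^*)$, so $\tilde\vecg$ is again a minimizer with this pair in correct order; sorting all coordinates consistently with $\veceta$ by repeated (e.g.\ adjacent) such swaps produces an order-preserving minimizer.

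For the strict claim I would rule out the remaining case $g_i^*=g_j^*$ with $\eta_i<\eta_j$ by a first-order argument. Set $c_k:=\alpha(g_i^*-g_k^*)$ and differentiate $S$ along the direction $e_j-e_i$ (raising $g_j$, lowering $g_i$) at a point with $g_i=g_j$. The cross terms of the form $\eta_k\phi'(-c_k)$ cancel in pairs, leaving the directional derivative $\pdiff{S}{g_j}-\pdiff{S}{g_i}=2\alpha(\eta_j-\eta_i)\phi'(0)+\alpha(\eta_j-\eta_i)\sum_{k\neq i,j}\phi'(c_k)$. Since $\eta_j>\eta_i$, $\phi'(0)<0$, and $\phi'\le 0$ by monotonicity, this quantity is strictly negative, contradicting minimality of $\vecg^*$; hence $g_i^*<g_j^*$ whenever $\eta_i<\eta_j$.

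The main obstacle I anticipate is the bookkeeping in the exchange step: one must carefully separate the $O(K)$ pairwise summands into those fixed by the transposition, those that cancel in pairs, and the single direct term that carries the sign hypothesis on $\phi$, and then verify that every surviving contribution has the correct sign. The companion statement for $\loss_\apc$ (Theorem~\ref{thm:order_apc}) follows by the identical template with $r$ held at $\dagr_{\veceta}$ inside the margins, so that the direct-pair contribution becomes $(\eta_i-\eta_j)\big[\phi(-z-\alpha\dagr_{\veceta})-\phi(z-\alpha\dagr_{\veceta})\big]$, controlled by the corresponding hypothesis $\phi(z-\alpha\dagr_{\veceta})<\phi(-z-\alpha\dagr_{\veceta})$, and the strict case replacing $\phi'(0)$ by $\phi'(-\alpha\dagr_{\veceta})$.
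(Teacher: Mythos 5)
Your argument is correct, and it actually fills a gap the paper leaves open: the paper only proves the companion result for $\loss_\apc$ (Theorem~\ref{thm:order_apc}) and dismisses $\loss_\mpc$ with ``proceeds along the same line.'' Your two core devices are the same as the paper's for APC --- a coordinate-swap argument for the non-strict claim and a first-order optimality argument for the strict claim (your directional derivative along $e_j-e_i$ is exactly the difference of the paper's two stationarity conditions \eqref{eq:first_order_1}--\eqref{eq:first_order_2}, specialized to MPC) --- but your setup is genuinely different: you factor $W_\mpc(r,f;\veceta)=\psi_1(-\alpha r)\,S(\vecg)+c\,\psi_2(\beta r)$ and reduce the joint minimization over $(r,\vecg)$ to the $r$-free problem of minimizing $S$, whereas the paper works at the joint minimizer $(\dagr_{\veceta},\dagf_{\veceta})$ with the rejector frozen. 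This decoupling is what makes the MPC case airtight: if one transplants the APC swap argument literally, every change in the pointwise risk is multiplied by $\psi_1(-\alpha \dagr_{\veceta})$, which can vanish (e.g.\ for a hinge-type $\psi_1$ when $\dagr_{\veceta}$ is sufficiently negative), destroying the strict decrease needed for the contradiction; your route never needs $\psi_1(-\alpha\dagr_{\veceta})>0$, since you first order a minimizer of $S$ and only then attach an optimal $r$, and the resulting pair is a joint minimizer because $\psi_1\geq 0$. Two minor remarks: (i) under the hypothesis $\phi(z)<\phi(-z)$ the direct $(i,j)$ term in your swap decreases \emph{strictly}, so every minimizer of $S$ is already correctly ordered and the bubble-sort step is unnecessary --- you can conclude by contradiction exactly as the paper does for APC; (ii) your final paragraph transferring the template back to $\loss_\apc$ (with $r$ held at $\dagr_{\veceta}$ inside the margins and $\phi'(0)$ replaced by $\phi'(-\alpha\dagr_{\veceta})$) reproduces the paper's actual proof, confirming the two approaches agree where they overlap.
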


\begin{proof}
We will only prove Theorem~\ref{thm:order_apc}.
The proof of Theorem~\ref{thm:order_mpc} proceeds along the same line as the proof of Theorem~\ref{thm:order_apc} and is thus omitted.

We can fix $i=1, j=2$ without loss of generality.
Define $g'_{\veceta, k}$ as
\begin{align*}
  g'_{\veceta, k} =
  \begin{cases}
	\dagg_{\veceta, 2} & (k = 1), \\
	\dagg_{\veceta, 1} & (k = 2), \\
	\dagg_{\veceta, k} & (\text{otherwise}).
  \end{cases}
\end{align*}
We now prove the first part by contradiction.
Assume $\dagg_{\veceta, 1} > \dagg_{\veceta, 2}$.
Then we have
\begin{align*}
  &\hspace{-5mm}W_\apc (\dagr_{\veceta}, f'_{\veceta}; \veceta) - W_\apc (\dagr_{\veceta}, \dagf_{\veceta}; \veceta) \\
  &= \sum_y \eta_y \left( \sum_{y' \neq y} \phi \Bigl( \alpha \bigl( g'_{\veceta, y} - g'_{\veceta, y'} - \dagr_{\veceta} \bigr) \Bigr) \right) - \sum_y \eta_y \left( \sum_{y' \neq y} \phi \Bigl( \alpha \bigl( \dagg_{\veceta, y} - \dagg_{\veceta, y'} - \dagr_{\veceta} \bigr) \Bigr) \right)\\
  &= \sum_{y=1,2} \eta_y \left( \sum_{y' \neq y} \phi \Bigl( \alpha \bigl( g'_{\veceta, y} - g'_{\veceta, y'} - \dagr_{\veceta} \bigr) \Bigr) \right) - \sum_{y=1,2} \eta_y \left( \sum_{y' \neq y} \phi \Bigl( \alpha \bigl( \dagg_{\veceta, y} - \dagg_{\veceta, y'} - \dagr_{\veceta} \bigr) \Bigr) \right)\\
  &= (\eta_2 - \eta_1)
  \Big[
	\phi \bigl( \alpha(\dagg_{\veceta, 1} - \dagg_{\veceta, 2} - \dagr_{\veceta}) \bigr)
	- \phi \bigl( \alpha(\dagg_{\veceta, 2} - \dagg_{\veceta, 1} - \dagr_{\veceta}) \bigr)
  \Big.\\
  & \Big. \hspace{3cm}
	+ \sum_{y'>2}
	\Bigl(
	  \phi \bigl( \alpha(\dagg_{\veceta, 1} - \dagg_{\veceta, y'} - \dagr_{\veceta}) \bigr)
	  - \phi \bigl( \alpha(\dagg_{\veceta, 2} - \dagg_{\veceta, y'} - \dagr_{\veceta}) \bigr)
	\Bigr)
  \Big]\\
  &< (\eta_2 - \eta_1) [0 + 0] = 0,
\end{align*}
which contradicts the optimality of $\dagf_{\veceta}$.
Therefore we must have $\dagg_{\veceta, 1} \leq \dagg_{\veceta, 2}$, which proves the first part.

Next we assume $\phi$ is differentiable.
In this case, the first-order optimality condition gives
\begin{align}
  \eta_1 \sum_{y' \in \domy} \phi' \bigl( \alpha (\dagg_{\veceta, 1} - \dagg_{\veceta, y'} - \dagr_{\veceta}) \bigr)
  = \sum_{y' \in \domy} \eta_{y'} \phi' \bigl( \alpha (\dagg_{\veceta, y'} - \dagg_{\veceta, 1} - \dagr_{\veceta}) \bigr) \label{eq:first_order_1} \\
  \eta_2 \sum_{y' \in \domy} \phi' \bigl( \alpha (\dagg_{\veceta, 2} - \dagg_{\veceta, y'} - \dagr_{\veceta}) \bigr)
  = \sum_{y' \in \domy} \eta_{y'} \phi' \bigl( \alpha (\dagg_{\veceta, y'} - \dagg_{\veceta, 2} - \dagr_{\veceta}) \bigr) \label{eq:first_order_2}
\end{align}
by taking the derivative of $W_\apc(r, f; \veceta)$ with respect to $g_{1}$ and $g_{2}$, and then substituting $(\dagr_{\veceta}, \dagf_{\veceta})$ for $(r, f)$.
We again prove the second part by contradiction.
Assume $\dagg_{\veceta, 1} = \dagg_{\veceta, 2} = \dagg_{\veceta}$.
In this case, the RHSs of \eqref{eq:first_order_1} and \eqref{eq:first_order_2} are the same, which gives
\begin{align*}
  \eta_1 \sum_{y' \in \domy} \phi' \bigl( \alpha (\dagg_{\veceta} - \dagg_{\veceta, y'} - \dagr_{\veceta}) \bigr)
  = \eta_2 \sum_{y' \in \domy} \phi' \bigl( \alpha (\dagg_{\veceta} - \dagg_{\veceta, y'} - \dagr_{\veceta}) \bigr),
\end{align*}
or equivalently,
\begin{align} \label{eq:etas}
  (\eta_1 - \eta_2) \sum_{y' \in \domy} \phi' \bigl( \alpha (\dagg_{\veceta} - \dagg_{\veceta, y'} - \dagr_{\veceta}) \bigr)
  = 0.
\end{align}
Note that $\sum_{y' \in \domy} \phi' \bigl( \alpha (\dagg_{\veceta} - \dagg_{\veceta, y'} - \dagr_{\veceta}) \bigr) \leq 2 \phi'(- \alpha \dagr_{\veceta}) < 0$ holds since $\phi$ is a non-increasing function.
Therefore we must have $\eta_1 = \eta_2$ for \eqref{eq:etas} to hold.
However, this contradicts the assumption $\eta_1 < \eta_2$, therefore, we have $\dagg_{\veceta, 1} \neq \dagg_{\veceta, 2}$.
Together with the fact of the first part, we have $\dagg_{\veceta, 1} < \dagg_{\veceta, 2}$ in this case.
\end{proof}
To compute $\dagr_{\veceta}$, we need true class distribution $\veceta$, which is unknown to the learner.
Thus, it is difficult to verify the requirement $\phi (z - \alpha \dagr_{\veceta}) < \phi (-z - \alpha \dagr_{\veceta})\ (z > 0)$ for $\loss_\apc$.
However, the following corollary, which immediately follows from Theorems~\ref{thm:order_apc} and \ref{thm:order_mpc}, implies that logistic loss and exponential loss are good candidates for $\phi$ in $\loss_\apc$ and $\loss_\mpc$, respectively.

\begin{corollary}[Strictly order-preserving property for $\loss_\apc, \loss_\mpc$] \label{cor:order}
  $\loss_\apc$ and $\loss_\mpc$ are strictly order-preserving if $\phi$ is a differentiable function such that $\phi'(z) < 0$ holds for all $z \in \R$.
\end{corollary}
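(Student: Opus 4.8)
The plan is to observe that the single blanket hypothesis ``$\phi'(z)<0$ for all $z\in\R$'' is strong enough to simultaneously trigger the \emph{strictly} order-preserving conclusions of both Theorem~\ref{thm:order_apc} and Theorem~\ref{thm:order_mpc}, so that Corollary~\ref{cor:order} follows by direct hypothesis-checking with no new argument. First I would record the two elementary consequences of the assumption: since $\phi$ is differentiable and $\phi'<0$ everywhere, $\phi$ is \emph{strictly} decreasing on $\R$ (hence in particular non-increasing), and for any $a<b$ we have $\phi(b)<\phi(a)$.

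Next I would dispatch $\loss_\mpc$. Theorem~\ref{thm:order_mpc} requires, for the strict conclusion, that $\phi$ be differentiable with $\phi'(0)<0$, together with the order-preserving prerequisites that $\phi$ be non-increasing and satisfy $\phi(z)<\phi(-z)$ for $z>0$. Differentiability is assumed; $\phi'(0)<0$ is the instance $z=0$ of the hypothesis; non-increasingness follows from strict monotonicity; and $\phi(z)<\phi(-z)$ for $z>0$ is strict monotonicity applied to $-z<z$. Thus every hypothesis of Theorem~\ref{thm:order_mpc} holds and $\loss_\mpc$ is strictly order-preserving.

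Then I would treat $\loss_\apc$. Theorem~\ref{thm:order_apc} requires, for the strict conclusion, differentiability of $\phi$ and $\phi'(-\alpha\dagr_{\veceta})<0$ for all $\veceta\in\Lambda_K$, together with the prerequisites that $\phi$ be non-increasing and satisfy $\phi(z-\alpha\dagr_{\veceta})<\phi(-z-\alpha\dagr_{\veceta})$ for $z>0$. The shifted strict inequality is again just strict monotonicity applied to the pair $-z-\alpha\dagr_{\veceta}<z-\alpha\dagr_{\veceta}$ (valid because $z>0$), and non-increasingness follows as before. The crucial point — and the reason this corollary is worth isolating — is that although $\dagr_{\veceta}$ depends on the unknown class distribution $\veceta$ and is therefore not directly accessible to the learner, the requirement $\phi'(z)<0$ at \emph{every} $z$ forces $\phi'(-\alpha\dagr_{\veceta})<0$ automatically, whatever value $\dagr_{\veceta}$ happens to take. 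Hence Theorem~\ref{thm:order_apc} applies and $\loss_\apc$ is strictly order-preserving.

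There is no genuine obstacle in the derivation; the only subtlety deserving explicit mention is this uniformity in the $\loss_\apc$ case. Imposing $\phi'<0$ on all of $\R$ — rather than merely at the single, data-dependent point $-\alpha\dagr_{\veceta}$ as Theorem~\ref{thm:order_apc} literally demands — is precisely what eliminates the dependence on the unknown minimizer $\dagr_{\veceta}$ and converts the theorem's abstract condition into a condition that can be checked directly on a candidate loss (as in the logistic and exponential examples). With both theorems' hypotheses verified, the corollary is immediate.
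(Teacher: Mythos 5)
Your proposal is correct and takes essentially the same route as the paper: the paper simply asserts that the corollary ``immediately follows'' from Theorems~\ref{thm:order_apc} and~\ref{thm:order_mpc}, and your hypothesis-by-hypothesis verification---using that $\phi'<0$ everywhere gives strict monotonicity, hence the shifted inequalities $\phi(z-\alpha\dagr_{\veceta})<\phi(-z-\alpha\dagr_{\veceta})$ and the derivative conditions $\phi'(0)<0$ and $\phi'(-\alpha\dagr_{\veceta})<0$ regardless of the unknown value of $\dagr_{\veceta}$---is exactly the check the paper leaves implicit. Your remark on the uniformity of the condition over the data-dependent point $-\alpha\dagr_{\veceta}$ correctly identifies why the corollary is stated at all, matching the paper's own motivation in the surrounding text.
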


\subsection{Rejection calibration} \label{subsec:case_study}

In the following, we give a simple example to illustrate the intuition of Theorem~\ref{thm:rejection_calibration_iff} and Corollary~\ref{thm:rejection_calibration}.
Throughout this section, we consider APC loss with exponential loss for $\phi$ and $\psi$.
\begin{align*}
  \loss_\apc(r, f; \vecx, y) &= \sum_{y' \neq y} \exp \Bigl( \alpha \bigl( r(\vecx) + g_{y'}(\vecx) - g_y(\vecx) \bigr) \Bigr) + c \exp \bigl(- \beta r(\vecx) \bigr),\\
  W_\apc(r, f; \veceta) &= \sum_y \eta_y \left( \sum_{y' \neq y} \exp \Bigl( \alpha \bigl( r + g_{y'} - g_y \bigr) \Bigr) \right) + c \exp \bigl(- \beta r\bigr).
\end{align*}
Note that $\loss_\mpc = \loss_\apc$ when we use exponential loss for binary losses.

\paragraph{Binary Case}
  In the binary case, $\loss_\apc$ and $W_\apc$ are expressed as
  \begin{align*}
	\loss_\apc(r, f; \vecx, y) &= \exp \big[ \alpha(r(\vecx) - yf(\vecx)) \big] + c \exp \big[ - \beta r(\vecx) \big], \\
	W_\apc(r, f; \veceta) &= \eta_+ \exp \big[ \alpha(r - f) \big] + \eta_- \exp \big[ \alpha(r + f) \big] + c \exp (- \beta r ),
  \end{align*}
  which coincide with the losses defined in \citet{Cortes2016_2}.
  Since $\dagf_{\veceta}$ is the minimizer of $W_\apc$, by taking the derivative of $W_\apc$ with respect to $f$ and setting it to zero, we get $\dagf_{\veceta} = \frac{1}{2\alpha} \log \frac{\eta_+}{\eta_-}$.
  Thus, $\frac{\partial W_\apc(r, \dagf_{\veceta}; \veceta)}{\partial r}$ can be expressed as follows:
  \begin{align*}
	\pdiff{W_\apc(r, \dagf_{\veceta}; \veceta)}{r}
	&= \alpha \eta_+ \exp \big[ \alpha(r - \dagf_{\veceta}) \big] + \alpha \eta_- \exp \big[ \alpha(r + \dagf_{\veceta}) \big] - c \beta \exp (- \beta r) \\
	&= 2 \alpha \sqrt{\eta_+ \eta_-} \exp (\alpha r) - c \beta \exp (- \beta r).
  \end{align*}
  Hence, we have
  \begin{align*}
	\sup_{\max_y \eta_y \geq 1-c} \left. \pdiff{W_\apc(r, \dagf_{\veceta}; \veceta)}{r} \right|_{r=0}
	&= \sup_{\max_y \eta_y = 1-c} 2 \alpha \sqrt{\eta_+ \eta_-} - c \beta
	= 2 \alpha \sqrt{c(1-c)} - c \beta, \\
	\inf_{\max_y \eta_y = 1-c} \left. \pdiff{W_\apc(r, \dagf_{\veceta}; \veceta)}{r} \right|_{r=0}
	&= \inf_{\max_y \eta_y = 1-c} 2 \alpha \sqrt{\eta_+ \eta_-} - c \beta
	= 2 \alpha \sqrt{c(1-c)} - c \beta.
  \end{align*}
  Using the result of Theorem~\ref{thm:rejection_calibration_iff}, we can confirm that rejection calibration holds if and only if
  \begin{align} \label{eq:binary_constants}
	2 \alpha \sqrt{c(1-c)} - c \beta = 0\ \  \Leftrightarrow \ \ 
	\frac{\beta}{\alpha} = 2 \sqrt{\frac{1-c}{c}},
  \end{align}
  which coincides with the result of Theorem 1 of \citet{Cortes2016_2}.
  This suggests that Theorem~\ref{thm:rejection_calibration_iff} is a general extension of their result.

\paragraph{Multiclass case}
  Next, we consider the multiclass case, i.e., the case where $K > 2$.
  We assume that $c < \frac{1}{2}$, otherwise, even data points with low confidence will also be accepted.
  Since $\vecdagg_{\veceta}$ is the minimizer of $W_\apc$, by taking the derivative of $W_\apc$ with respect to $\vecg$ and setting it to zero, we get $\dagg_{\veceta, y} - \dagg_{\veceta, y'} = \frac{1}{2\alpha} \log \frac{\eta_y}{\eta_{y'}}$.
  Thus, $\frac{\partial W_\apc(r, \dagf_{\veceta}; \veceta)}{\partial r}$ can be calculated as follows:
  \begin{align*}
	\pdiff{W_\apc(r, \dagf_{\veceta}; \veceta)}{r}
	&= \alpha \exp(\alpha r) \sum_y \eta_y \sum_{y \neq y'} \exp \big[ \alpha (\dagg_{\veceta, y'} - \dagg_{\veceta, y}) \big] - c \beta \exp (- \beta r) \\
	&= \alpha \exp (\alpha r) \sum_y \sum_{y' \neq y} \sqrt{\eta_y \eta_{y'}} - c \beta \exp (- \beta r) \\
	&= \alpha \exp (\alpha r) \sum_y \left( \sum_{y'} \sqrt{\eta_y \eta_{y'}} - \eta_y \right) - c \beta \exp (- \beta r) \\
	&= \alpha \exp (\alpha r) \left( \left( \sum_y \sqrt{\eta_y} \right)^2 - 1 \right) - c \beta \exp (- \beta r),
  \end{align*}
  where in the last line we used the condition $\sum_y \eta_y = 1$.
  We next see how Eqs.~\eqref{eq:rejection_calibration_AB} (left) and \eqref{eq:rejection_calibration_AB} (right) behave.
  As for the Eq.~\eqref{eq:rejection_calibration_AB} (left), we have
  \begin{align*}
	\sup_{\max_y \eta_y = 1-c} \left. \pdiff{W_\apc(r, \dagf_{\veceta}; \veceta)}{r} \right|_{r=0}
	&= \sup_{\max_y \eta_y = 1-c} \alpha \left( \left( \sum_y \sqrt{\eta_y} \right)^2 - 1 \right) - c \beta \\
	&= \alpha \left( \left( \sqrt{1-c} + \sqrt{\frac{c}{K-1}} (K-1) \right)^2 - 1 \right) - c \beta \\
	&= \alpha \left( (K-2)c + 2 \sqrt{(K-1)c(1-c)} \right) - c\beta.
  \end{align*}
  Note that since $c < \frac{1}{2}$, the supremum is satisfied when $\max_y \eta_y = 1-c$, and $\eta_{y'} = \frac{c}{K-1}$ for the others.
  The above calculation gives the condition
  \begin{align} \label{eq:example_A}
	\frac{\beta}{\alpha} \geq (K-2) + 2 \sqrt{(K-1) \frac{1-c}{c}}.
  \end{align}
  When $K=2$, the RHS of \eqref{eq:example_A} is the same as RHS of \eqref{eq:binary_constants}.
  As for Eq.~\eqref{eq:rejection_calibration_AB} (right), we get
  \begin{align*}
	\inf_{\max_y \eta_y \leq 1-c} \pdiff{W_\apc(0, f^\dagger; \veceta)}{r}
	&= \inf_{\max_y \eta_y \leq 1-c} \alpha \left( \left( \sum_y \sqrt{\eta_y} \right)^2 - 1 \right) - c \beta \\
	&= 2 \alpha \sqrt{c(1-c)} - c \beta.
  \end{align*}
  Note that since $c < \frac{1}{2}$, the infimum is satisfied when $\max_y \eta_y = 1-c$, and $\eta_{y'} = c$, and $\eta_{y''} = 0$ for the others.
  The above calculation gives the condition:
  \begin{align} \label{eq:example_B}
	\frac{\beta}{\alpha} \leq 2 \sqrt{\frac{1-c}{c}}.
  \end{align}
  Again, when $K=2$, the RHS of \eqref{eq:example_B} is the same as RHS of \eqref{eq:binary_constants}.

  However, when we deal with multiclass classification, we can easily confirm that \eqref{eq:example_A} and \eqref{eq:example_B} cannot simultaneously be satisfied, since
  \begin{align*}
    (K-2) + 2 \sqrt{(K-1) \frac{1-c}{c}} > 2 \sqrt{\frac{1-c}{c}}.
  \end{align*}
  The intuition of this result is that we cannot achieve rejection calibration in multiclass setting, using classifier-rejector approach.
  More precisely, if we set hyper-parameters $\alpha$ and $\beta$ to satisfy \eqref{eq:example_A}, we can make $\FR$ to zero, but we cannot make $\FA$ to zero.
  Conversely, if we set hyper-parameters $\alpha$ and $\beta$ to satisfy \eqref{eq:example_B}, we can make $\FA$ to zero, but we cannot make $\FR$ to zero.
  
  For the logistic loss, we get  $\dagg_{\veceta, y} - \dagg_{\veceta, y'} = \frac{1}{\alpha} \log \frac{\eta_y}{\eta_{y'}}$. After applying the same procedure as we did for the proof of the exponential loss, the failure result of the logistic loss can be obtained.

\section{Experiment details} \label{section:experiment}
\subsection{Synthetic datasets} \label{subsection:synthetic}
\begin{itemize}
    \item Goal: To see the calibration result of proposed method.
    \item Datasets: \mbox{}
        \begin{itemize}
            \item We randomly select 8 two-dimensional vectors $\bm{\mu}_1, \ldots, \bm{\mu}_8 \in \R^2$. These 8 vectors correspond to 8 classes.
            \item Each sample $(\vecx, y)$ is sampled from $p(y)p(\vecx|y)$, where $p(y)$ is a uniform distribution $p(y) = \frac{1}{8}$, and $p(\vecx|y)$ is a Gaussian distribution $\mathcal{N}(\bm{\mu}_y, 0.2 I_2)$. Here, $I_2$ is a $2 \times 2$ identity matrix.
            \item (\# training data): $\{ 20, 50, 100, 200, 500, 1000, 1500, 2000, 5000, 10000 \}$ for each class.
        \end{itemize}
    \item Rejection Cost: $c \in \{ 0.05, 0.1, 0.2, 0.3, 0.4\}$.
    \item Methods: \mbox{}
        \begin{itemize}
            \item APC loss~\eqref{eq:loss_apc} with logistic loss and exponential loss. We set $\alpha = 1$ for simplicity and $\beta$ is set to satisfy \eqref{eq:rejection_calibration_AB} (left) and \eqref{eq:rejection_calibration_AB} (right) respectively (APC+log+acc, APC+log+rej, APC+exp+acc, APC+exp+rej).
            \item MPC loss~\eqref{eq:loss_mpc} with logistic loss (MPC+log). Note that MPC loss with exponential loss reduces to APC+exp. We set $\alpha = 1$ for simplicity and $\beta$ is set to satisfy \eqref{eq:rejection_calibration_AB} (left) and \eqref{eq:rejection_calibration_AB} (right) respectively (MPC+log+acc, MPC+log+rej).
            \item OVA loss with logistic loss and exponential loss (OVA+log, OVA+exp)
            \item CE loss (CE)
        \end{itemize}
    \item Hyper-parameter Selection: \mbox{}
        \begin{itemize}
            \item $\ell_2$ regularization, where weight decays are chosen from $\{10^{-7}, 10^{-4}. 10^{-1} \}$.
            \item We did 80-20 split for each training data for validation for hyper-parameter tuning.
            \item Using a different random partition, we repeated the experiments three times.
        \end{itemize}
    \item Optimization: \mbox{}
        \begin{itemize}
            \item AMSGRAD with 100 epochs.
        \end{itemize}
    \item Model: \mbox{}
        \begin{itemize}
            \item one-hidden-layer neural network (d-3-1) with rectified linear units (ReLU) as activation functions.
        \end{itemize}
\end{itemize}

\subsection{Benchmark datasets} \label{subsection:benchmark}
\begin{itemize}
    \item Goal: To see the empirical performance including the existing method.
    \item Datasets: see Table~\ref{table:dataset}. They can all be downloaded from \url{ https://archive.ics.uci.edu/ml/} or \url{ https://www.csie.ntu.edu.tw/~cjlin/libsvmtools/datasets/multiclass.html}.
    \item Rejection Cost: $c \in \{ 0.05, 0.1, 0.2, 0.3, 0.4\}$.
    \item Methods: \mbox{}
        \begin{itemize}
            \item APC loss~\eqref{eq:loss_apc} with logistic loss and exponential loss (APC+log, APC+exp).
            \item MPC loss~\eqref{eq:loss_mpc} with logistic loss (MPC+log). Note that MPC loss with exponential loss reduces to APC+exp.
            \item OVA loss with logistic loss and exponential loss (OVA+log, OVA+exp)
            \item CE loss (CE)
            \item existing method in \citet{Ramaswamy2018} (OVA+hin)
        \end{itemize}
    \item Hyper-parameter Selection: \mbox{}
        \begin{itemize}
            \item $\ell_2$ regularization, where weight decays are chosen from $\{10^{-7}, 10^{-4}. 10^{-1} \}$.
            \item For APC+log, APC+exp, MPC+log, we need to decide the parameter $\alpha$ and $\beta$. We set $\alpha = 1$. For $\beta$, three candidates are chosen from \eqref{eq:rejection_calibration_AB} (left), \eqref{eq:rejection_calibration_AB} (right) and the mean value of these values.
            \item For OVA+hin, five candidates of threshold parameter are chosen from $\{ -0.95, -0.5, 0, 0.5, 0.95\}$.
            \item We did 80-20 split for each training data for validation for hyper-parameter tuning.
            \item Using a different random partition, we repeated the experiments ten times.
        \end{itemize}
    \item Optimization: \mbox{}
        \begin{itemize}
            \item AMSGRAD with 150 epochs.
        \end{itemize}
    \item Model: \mbox{}
        \begin{itemize}
            \item one-hidden-layer neural network (d-50-1) with rectified linear units (ReLU) as activation functions.
        \end{itemize}
\end{itemize}
\begin{figure}[t]
  \begin{minipage}{1\hsize}
	\centering
	\includegraphics[width=7.9cm]{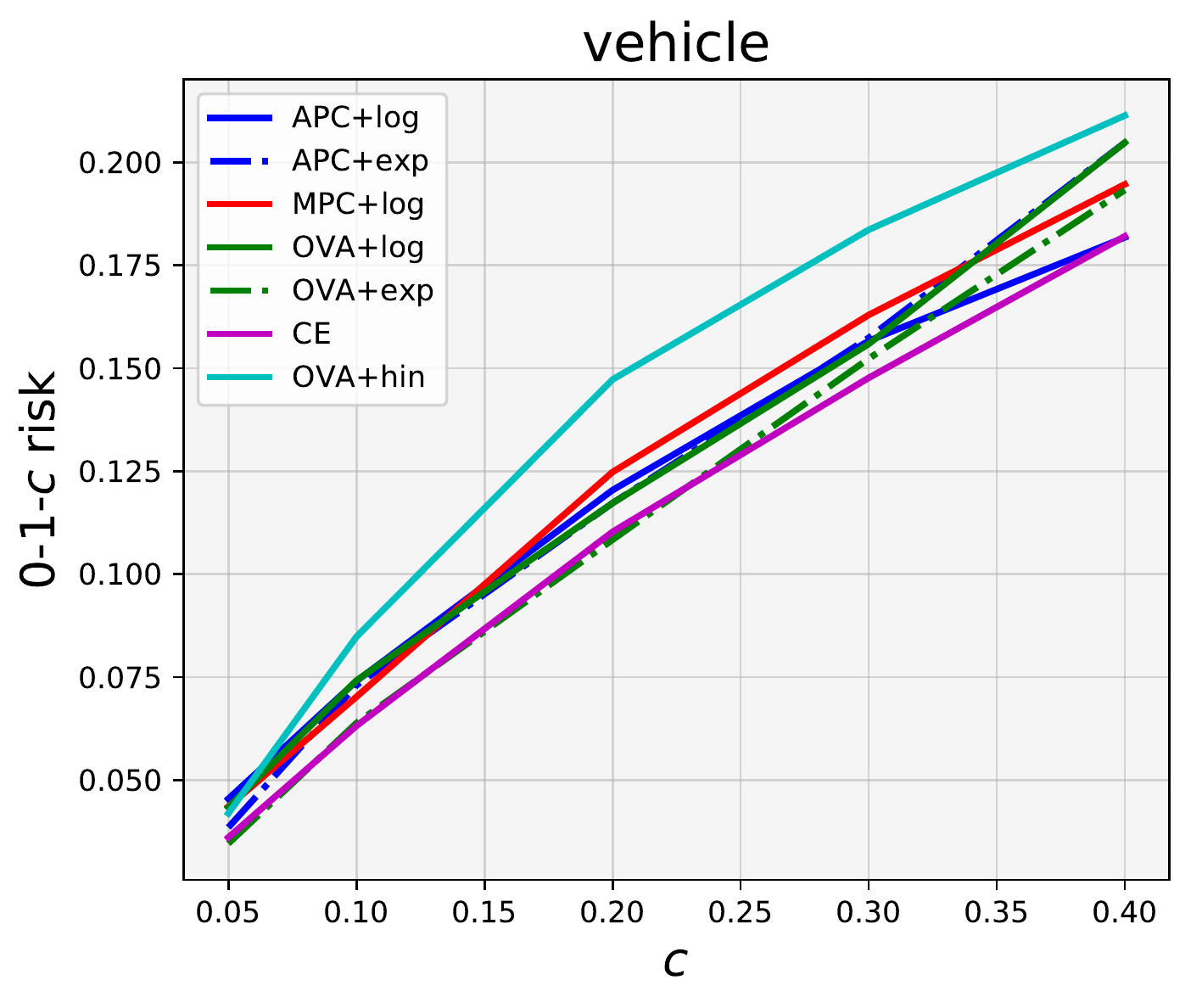}
  \end{minipage}

  \begin{minipage}{0.5\hsize}
	\centering
	\includegraphics[width=7.9cm]{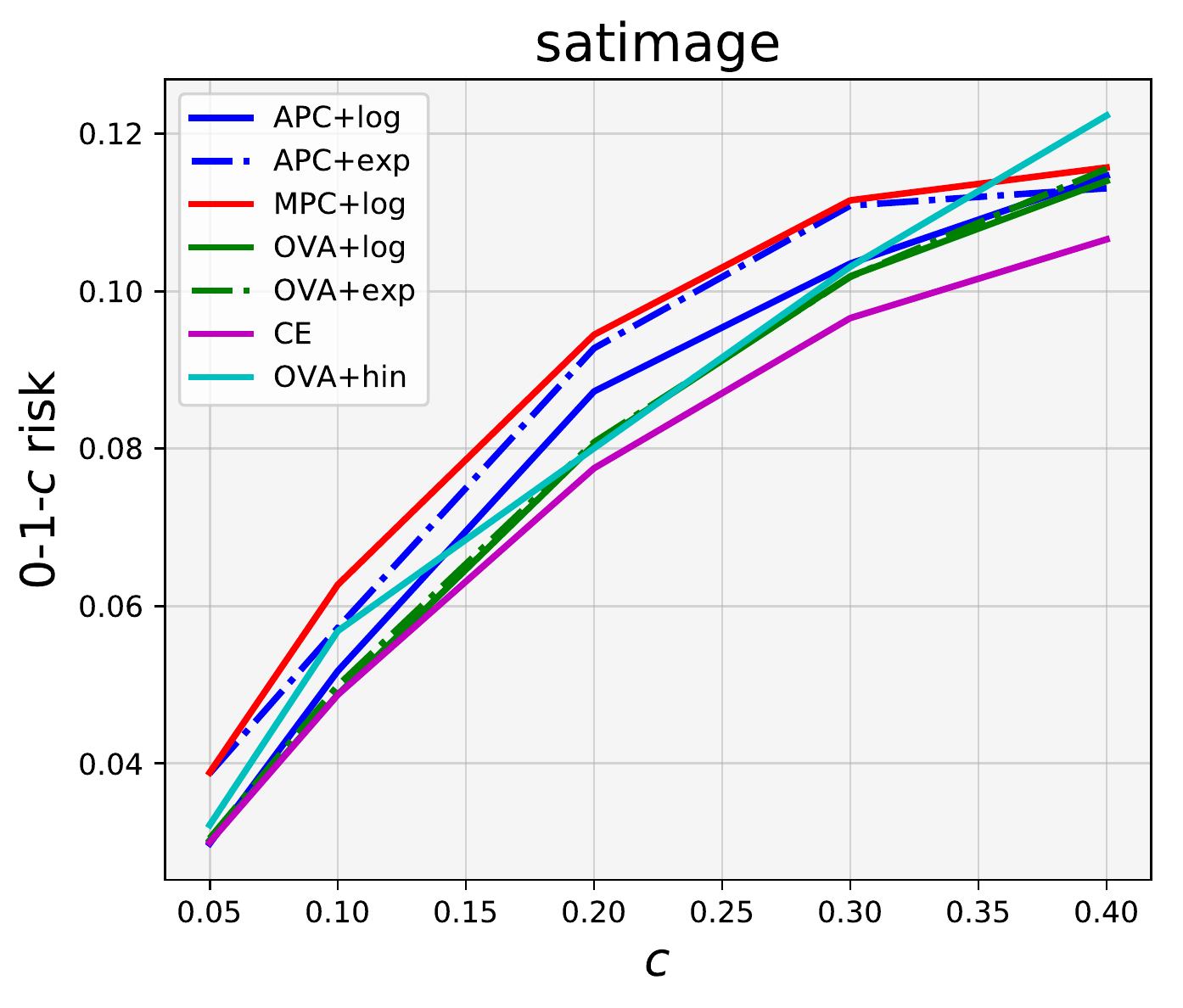}
  \end{minipage}
  \begin{minipage}{0.5\hsize}
	\centering
	\includegraphics[width=7.9cm]{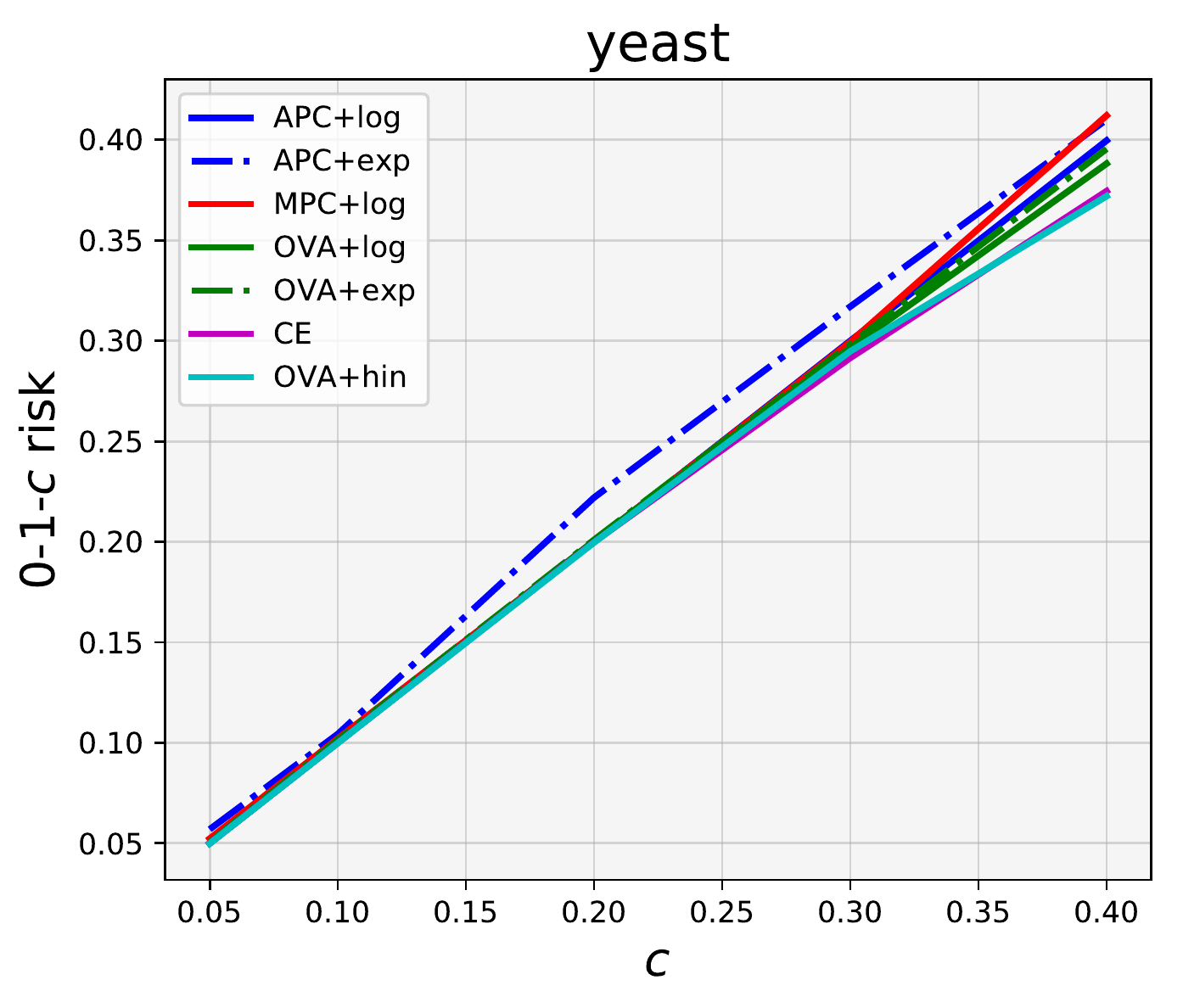}
  \end{minipage}

  \begin{minipage}{0.5\hsize}
	\centering
	\includegraphics[width=7.9cm]{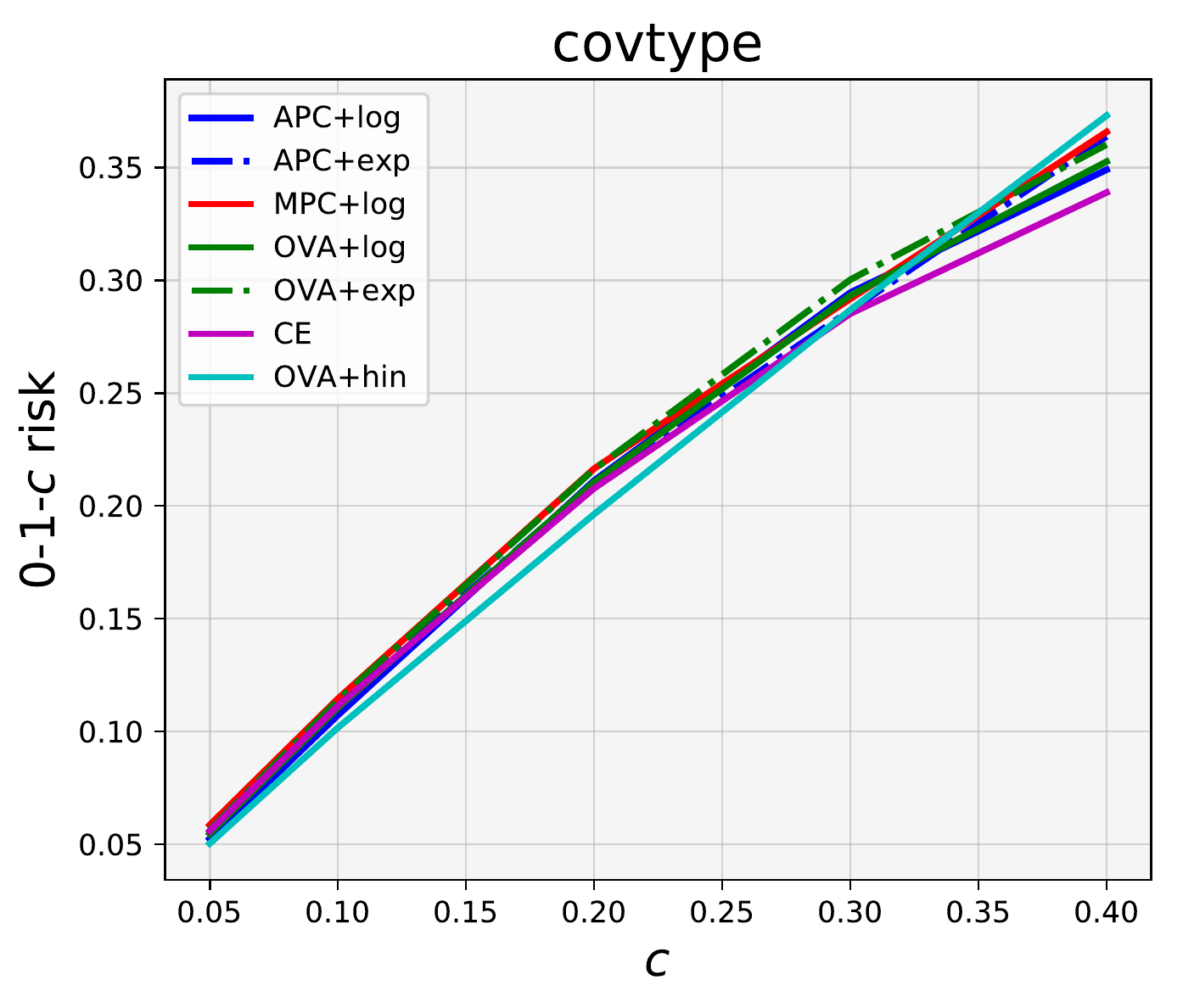}
  \end{minipage}
  \begin{minipage}{0.5\hsize}
	\centering
	\includegraphics[width=7.9cm]{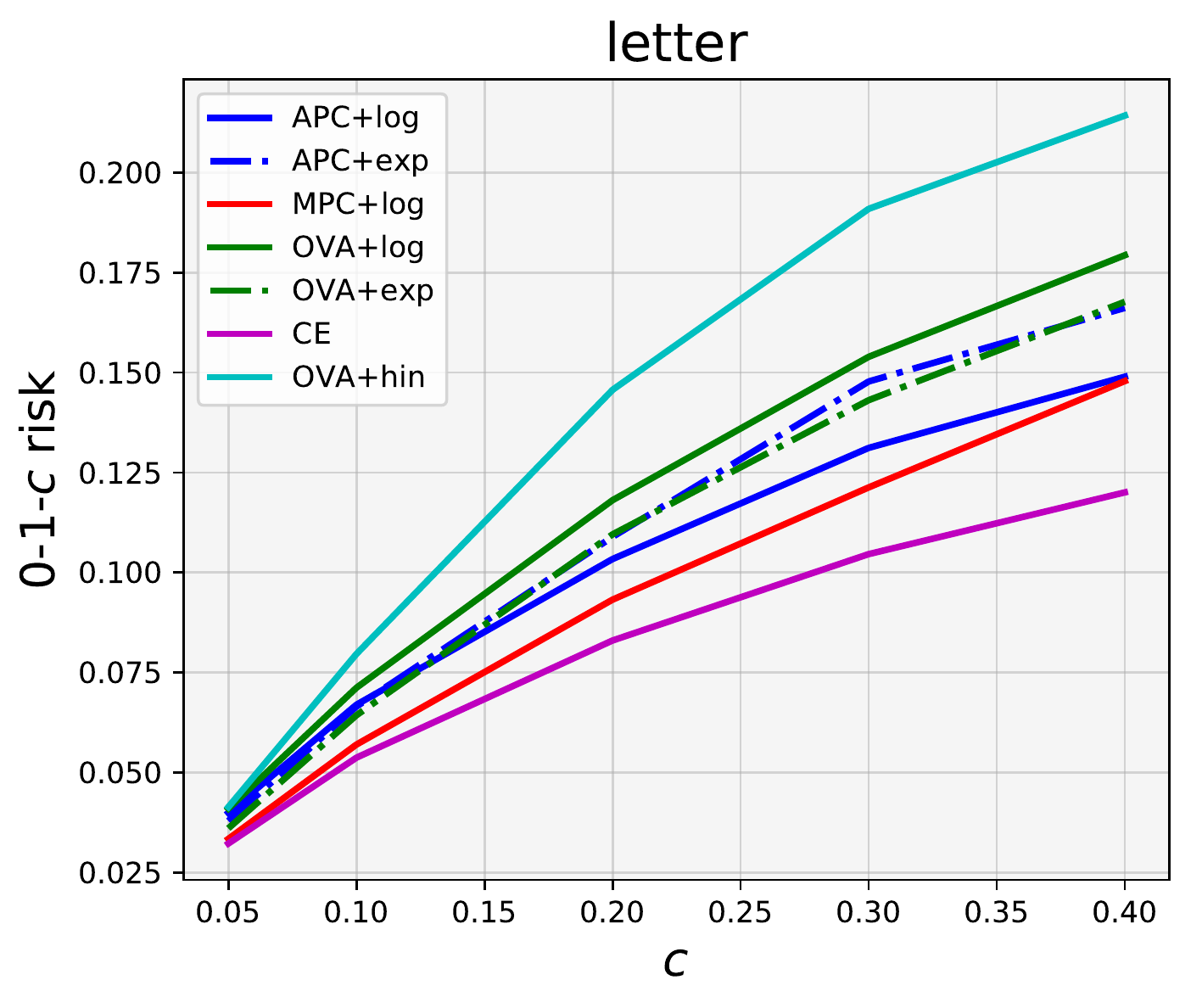}
  \end{minipage}
  \caption{Average $\zoc$ risk on the test set as a function of the rejection cost on benchmark datasets (full version).}
  \label{fig:zoc_risk_benchmark}
\end{figure}

\begin{table}[t]
  \begin{center}
	\caption{Mean and standard deviation of $\zoc$ risks for 10 trials. Best and equivalent methods (with 5\% t-test) are shown in bold face.} \label{table:zoc_risk_benchmark}
	\vspace{0.1in}
	\footnotesize
	\begin{tabular}{c|l|P{12mm}P{12mm}P{12mm}P{12mm}P{12mm}P{12mm}P{12mm}} \hline
	  dataset & $c$  &  APC+log &  APC+exp &   MPC+log &  OVA+log &  OVA+exp &        CE & OVA+hin\\ \hline \hline

	  \multirow{10}{*}{vehicle}
	  &  \multirow{2}{*}{0.05} &          0.045 &    {\bf 0.038} &          0.044 &          0.043 &    {\bf 0.035} &    {\bf 0.036} &          0.042 \\
	  &                        &       (0.0023) & {\bf (0.0081)} &       (0.0101) &       (0.0008) & {\bf (0.0030)} & {\bf (0.0007)} &       (0.0049) \\ \cline{2-9}
	  &   \multirow{2}{*}{0.1} &          0.074 &    {\bf 0.073} &           0.07 &          0.074 &    {\bf 0.064} &    {\bf 0.063} &          0.085 \\
	  &                        &       (0.0038) & {\bf (0.0171)} &       (0.0092) &       (0.0036) & {\bf (0.0033)} & {\bf (0.0032)} &       (0.0060) \\ \cline{2-9}
	  &   \multirow{2}{*}{0.2} &           0.12 &          0.117 &          0.125 &          0.117 &    {\bf 0.108} &    {\bf 0.110} &          0.147 \\
	  &                        &       (0.0044) &       (0.0108) &       (0.0142) &       (0.0039) & {\bf (0.0037)} & {\bf (0.0020)} &       (0.0089) \\ \cline{2-9}
	  &   \multirow{2}{*}{0.3} &          0.157 &          0.157 &          0.163 &          0.156 &    {\bf 0.152} &    {\bf 0.148} &          0.184 \\
	  &                        &       (0.0068) &       (0.0076) &       (0.0150) &       (0.0058) & {\bf (0.0106)} & {\bf (0.0046)} &       (0.0088) \\ \cline{2-9}
	  &   \multirow{2}{*}{0.4} &    {\bf 0.182} &          0.205 &          0.195 &          0.205 &          0.193 &    {\bf 0.182} &          0.211 \\
	  &                        & {\bf (0.0130)} &       (0.0182) &       (0.0118) &       (0.0092) &       (0.0058) & {\bf (0.0057)} &       (0.0073) \\ \hline

	  \hline

	  \multirow{10}{*}{satimage}
	  &  \multirow{2}{*}{0.05} &    {\bf 0.030} &          0.039 &          0.039 &    {\bf 0.030} &    {\bf 0.030} &    {\bf 0.030} &          0.032 \\
	  &                        & {\bf (0.0013)} &       (0.0052) &       (0.0070) & {\bf (0.0011)} & {\bf (0.0011)} & {\bf (0.0006)} &       (0.0003) \\ \cline{2-9}
	  &   \multirow{2}{*}{0.1} &          0.052 &          0.057 &          0.063 &    {\bf 0.049} &          0.050 &    {\bf 0.049} &          0.057 \\
	  &                        &       (0.0013) &       (0.0041) &       (0.0069) & {\bf (0.0014)} &       (0.0008) & {\bf (0.0009)} &        (0.002) \\ \cline{2-9}
	  &   \multirow{2}{*}{0.2} &          0.087 &          0.093 &          0.094 &          0.081 &          0.081 &    {\bf 0.078} &          0.080 \\
	  &                        &       (0.0027) &       (0.0055) &       (0.0048) &       (0.0016) &       (0.0029) & {\bf (0.0009)} &       (0.0012) \\ \cline{2-9}
	  &   \multirow{2}{*}{0.3} &          0.104 &          0.111 &          0.112 &          0.102 &          0.102 &    {\bf 0.097} &          0.103 \\
	  &                        &       (0.0037) &       (0.0048) &       (0.0033) &       (0.0026) &       (0.0028) & {\bf (0.0013)} &       (0.0010) \\ \cline{2-9}
	  &   \multirow{2}{*}{0.4} &          0.115 &          0.113 &          0.116 &          0.114 &          0.116 &    {\bf 0.107} &          0.122 \\
	  &                        &       (0.0036) &       (0.0030) &       (0.0026) &       (0.0036) &       (0.0033) & {\bf (0.0025)} &       (0.0019) \\ \hline

	  \hline

	  \multirow{10}{*}{yeast}
	  &  \multirow{2}{*}{0.05} &          0.050 &          0.057 &          0.052 &          0.050 &          0.051 &          0.050 &    {\bf 0.050} \\
	  &                        &       (0.0000) &       (0.0109) &       (0.0023) &       (0.0000) &       (0.0009) &       (0.0000) & {\bf (0.0002)} \\ \cline{2-9}
	  &   \multirow{2}{*}{0.1} &    {\bf 0.100} &          0.104 &          0.102 &    {\bf 0.100} &          0.102 &    {\bf 0.100} &    {\bf 0.100} \\
	  &                        & {\bf (0.0000)} &       (0.0071) &       (0.0035) & {\bf (0.0006)} &       (0.0011) & {\bf (0.0006)} & {\bf (0.0002)} \\ \cline{2-9}
	  &   \multirow{2}{*}{0.2} &    {\bf 0.200} &          0.222 &    {\bf 0.200} &          0.201 &    {\bf 0.201} &    {\bf 0.200} &    {\bf 0.200} \\
	  &                        & {\bf (0.0000)} &       (0.0297) & {\bf (0.0001)} &       (0.0009) & {\bf (0.0023)} & {\bf (0.0013)} & {\bf (0.0007)} \\ \cline{2-9}
	  &   \multirow{2}{*}{0.3} &          0.300 &          0.317 &          0.299 &          0.297 &          0.298 &    {\bf 0.292} &          0.295 \\
	  &                        &       (0.0000) &       (0.0214) &       (0.0009) &       (0.0020) &       (0.0033) & {\bf (0.0020)} &       (0.0036) \\ \cline{2-9}
	  &   \multirow{2}{*}{0.4} &          0.400 &          0.410 &          0.412 &          0.388 &          0.395 &    {\bf 0.374} &    {\bf 0.372} \\
	  &                        &       (0.0009) &       (0.0104) &       (0.0117) &       (0.0031) &       (0.0050) & {\bf (0.0029)} & {\bf (0.0046)} \\ \hline

	  \hline

	  \multirow{10}{*}{covtype}
	  &  \multirow{2}{*}{0.05} &          0.052 &          0.057 &          0.059 &          0.055 &          0.056 &          0.056 &    {\bf 0.050} \\
	  &                        &       (0.0007) &       (0.0016) &       (0.0012) &       (0.0012) &       (0.0015) &       (0.0018) & {\bf (0.0001)} \\ \cline{2-9}
	  &   \multirow{2}{*}{0.1} &          0.107 &          0.112 &          0.114 &          0.110 &          0.114 &          0.111 &    {\bf 0.102} \\
	  &                        &       (0.0014) &       (0.0046) &       (0.0034) &       (0.0019) &       (0.0035) &       (0.0034) & {\bf (0.0005)} \\ \cline{2-9}
	  &   \multirow{2}{*}{0.2} &          0.211 &          0.210 &          0.216 &          0.210 &          0.216 &          0.208 &    {\bf 0.196} \\
	  &                        &       (0.0039) &       (0.0059) &       (0.0061) &       (0.0028) &       (0.0070) &       (0.0064) & {\bf (0.0011)} \\ \cline{2-9}
	  &   \multirow{2}{*}{0.3} &          0.295 &          0.287 &          0.292 &          0.293 &          0.300 &    {\bf 0.285} &    {\bf 0.287} \\
	  &                        &       (0.0024) &       (0.0046) &       (0.0041) &       (0.0046) &       (0.0090) & {\bf (0.0090)} & {\bf (0.0015)} \\ \cline{2-9}
	  &   \multirow{2}{*}{0.4} &          0.349 &          0.364 &          0.366 &          0.353 &          0.360 &    {\bf 0.339} &          0.373 \\
	  &                        &       (0.0047) &       (0.0123) &       (0.0147) &       (0.0063) &       (0.0113) & {\bf (0.0117)} &        (0.002) \\ \hline

	  \hline

	  \multirow{10}{*}{letter}
	  &  \multirow{2}{*}{0.05} &          0.040 &          0.038 &          0.033 &          0.041 &          0.036 &    {\bf 0.032} &          0.041 \\
	  &                        &       (0.0013) &       (0.0015) &       (0.0013) &       (0.0007) &       (0.0010) & {\bf (0.0008)} &       (0.0007) \\ \cline{2-9}
	  &   \multirow{2}{*}{0.1} &          0.067 &          0.066 &          0.057 &          0.071 &          0.064 &    {\bf 0.054} &          0.080 \\
	  &                        &       (0.0024) &       (0.0019) &       (0.0028) &       (0.0011) &       (0.0015) & {\bf (0.0019)} &       (0.0018) \\ \cline{2-9}
	  &   \multirow{2}{*}{0.2} &          0.103 &          0.109 &          0.093 &          0.118 &          0.110 &    {\bf 0.083} &          0.146 \\
	  &                        &       (0.0035) &       (0.0025) &       (0.0045) &       (0.0018) &       (0.0028) & {\bf (0.0018)} &       (0.0046) \\ \cline{2-9}
	  &   \multirow{2}{*}{0.3} &          0.131 &          0.148 &          0.121 &          0.154 &          0.143 &    {\bf 0.105} &          0.191 \\
	  &                        &       (0.0094) &       (0.0064) &       (0.0041) &       (0.0024) &       (0.0032) & {\bf (0.0016)} &       (0.0078) \\ \cline{2-9}
	  &   \multirow{2}{*}{0.4} &          0.149 &          0.166 &          0.148 &          0.179 &          0.168 &    {\bf 0.120} &          0.214 \\
	  &                        &       (0.0080) &       (0.0076) &       (0.0097) &       (0.0033) &       (0.0036) & {\bf (0.0021)} &       (0.0094) \\ \hline

	  \hline
	\end{tabular}
  \end{center}
\end{table}

\begin{table}[t]
  \begin{center}
	\caption{Mean and standard deviation of accuracy on non rejected data for 10 trials. ``-" corresponds to the case where all the test data samples are rejected.} \label{table:accepted_accuracy_benchmark}
	\vspace{0.1in}
	\footnotesize
	  \begin{tabular}{c|l|P{12mm}P{12mm}P{12mm}P{12mm}P{12mm}P{12mm}P{12mm}} \hline
		dataset & $c$  &  APC+log&  APC+exp &  MPC+log &  OVA+log &  OVA+exp &        CE & OVA+hin \\ \hline \hline

		\multirow{10}{*}{vehicle}
		&  \multirow{2}{*}{0.05} &         -  &      0.981 &      0.966 &      1.000 &      0.991 &      1.000 &      0.996 \\
		&                        &      ( - ) &   (0.0204) &   (0.0231) &   (0.0000) &   (0.0089) &   (0.0000) &   (0.0111) \\ \cline{2-9}
		&   \multirow{2}{*}{0.1} &      1.000 &      0.962 &      0.958 &      0.989 &      0.981 &      0.990 &      0.947 \\
		&                        &   (0.0000) &   (0.0348) &   (0.0209) &   (0.0115) &   (0.0064) &   (0.0081) &   (0.0282) \\ \cline{2-9}
		&   \multirow{2}{*}{0.2} &      0.984 &      0.937 &      0.924 &      0.979 &      0.972 &      0.974 &      0.964 \\
		&                        &   (0.0188) &   (0.0289) &   (0.0301) &   (0.0072) &   (0.0044) &   (0.0005) &   (0.0535) \\ \cline{2-9}
		&   \multirow{2}{*}{0.3} &      0.946 &      0.905 &      0.894 &      0.960 &      0.945 &      0.959 &      0.941 \\
		&                        &   (0.0250) &   (0.0195) &   (0.0384) &   (0.0093) &   (0.0198) &   (0.0073) &   (0.0265) \\ \cline{2-9}
		&   \multirow{2}{*}{0.4} &      0.891 &      0.831 &      0.853 &      0.902 &      0.904 &      0.917 &      0.887 \\
		&                        &   (0.0288) &   (0.0543) &   (0.0418) &   (0.0164) &   (0.0107) &   (0.0087) &   (0.0375) \\ \hline
		\hline

		\multirow{10}{*}{satimage}
		&  \multirow{2}{*}{0.05} &      0.991 &      0.973 &      0.972 &      0.987 &      0.982 &      0.983 &      0.995 \\
		&                        &   (0.0025) &   (0.0148) &   (0.0141) &   (0.0014) &   (0.0020) &   (0.0011) &   (0.0010) \\ \cline{2-9}
		&   \multirow{2}{*}{0.1} &      0.975 &      0.966 &      0.957 &      0.980 &      0.973 &      0.975 &      0.974 \\
		&                        &   (0.0034) &   (0.0095) &   (0.0170) &   (0.0023) &   (0.0010) &   (0.0015) &   (0.0093) \\ \cline{2-9}
		&   \multirow{2}{*}{0.2} &      0.950 &      0.930 &      0.926 &      0.962 &      0.954 &      0.957 &      0.965 \\
		&                        &   (0.0102) &   (0.0117) &   (0.0119) &   (0.0022) &   (0.0043) &   (0.0013) &   (0.0018) \\ \cline{2-9}
		&   \multirow{2}{*}{0.3} &      0.929 &      0.904 &      0.905 &      0.944 &      0.935 &      0.938 &      0.952 \\
		&                        &   (0.0055) &   (0.0196) &   (0.0153) &   (0.0031) &   (0.0023) &   (0.0020) &   (0.0013) \\ \cline{2-9}
		&   \multirow{2}{*}{0.4} &      0.915 &      0.890 &      0.890 &      0.922 &      0.915 &      0.918 &      0.933 \\
		&                        &   (0.0066) &   (0.0049) &   (0.0109) &   (0.0034) &   (0.0029) &   (0.0023) &   (0.0028) \\ \hline
		\hline

		\multirow{10}{*}{yeast}
		&  \multirow{2}{*}{0.05} &      -  &      -  &      -  &         -  &         -  &         -  &         -  \\
		&                        &   ( - ) &   ( - ) &   ( - ) &      ( - ) &      ( - ) &      ( - ) &      ( - ) \\ \cline{2-9}
		&   \multirow{2}{*}{0.1} &      -  &      -  &      -  &         -  &      0.593 &         -  &         -  \\
		&                        &   ( - ) &   ( - ) &   ( - ) &      ( - ) &   (0.2699) &      ( - ) &      ( - ) \\ \cline{2-9}
		&   \multirow{2}{*}{0.2} &      -  &      -  &      -  &         -  &      0.742 &      0.806 &         -  \\
		&                        &   ( - ) &   ( - ) &   ( - ) &      ( - ) &   (0.1538) &   (0.0615) &      ( - ) \\ \cline{2-9}
		&   \multirow{2}{*}{0.3} &      -  &      -  &      -  &      0.822 &      0.733 &      0.805 &         -  \\
		&                        &   ( - ) &   ( - ) &   ( - ) &   (0.0780) &   (0.0577) &   (0.0301) &      ( - ) \\ \cline{2-9}
		&   \multirow{2}{*}{0.4} &      -  &      -  &      -  &      0.750 &      0.630 &      0.766 &      0.760 \\
		&                        &   ( - ) &   ( - ) &   ( - ) &   (0.0393) &   (0.0353) &   (0.0169) &   (0.0394) \\ \hline

		\hline

		\multirow{10}{*}{covtype}
		&  \multirow{2}{*}{0.05} &      0.795 &      0.797 &      0.798 &      0.821 &      0.803 &      0.820 &      0.886 \\
		&                        &   (0.0205) &   (0.0764) &   (0.0169) &   (0.0267) &   (0.0313) &   (0.0321) &   (0.0314) \\ \cline{2-9}
		&   \multirow{2}{*}{0.1} &      0.765 &      0.806 &      0.793 &      0.781 &      0.767 &      0.796 &      0.812 \\
		&                        &   (0.0176) &   (0.0177) &   (0.0185) &   (0.0200) &   (0.0375) &   (0.0285) &   (0.0216) \\ \cline{2-9}
		&   \multirow{2}{*}{0.2} &      0.740 &      0.759 &      0.738 &      0.749 &      0.732 &      0.771 &      0.850 \\
		&                        &   (0.0181) &   (0.0185) &   (0.0102) &   (0.0135) &   (0.0317) &   (0.0235) &   (0.0161) \\ \cline{2-9}
		&   \multirow{2}{*}{0.3} &      0.719 &      0.743 &      0.722 &      0.719 &      0.699 &      0.733 &      0.795 \\
		&                        &   (0.0115) &   (0.0156) &   (0.0116) &   (0.0126) &   (0.0231) &   (0.0204) &   (0.0133) \\ \cline{2-9}
		&   \multirow{2}{*}{0.4} &      0.698 &      0.638 &      0.649 &      0.687 &      0.669 &      0.694 &      0.768 \\
		&                        &   (0.0134) &   (0.0128) &   (0.0340) &   (0.0109) &   (0.0183) &   (0.0180) &   (0.0135) \\ \hline

		\hline

		\multirow{10}{*}{letter}
		&  \multirow{2}{*}{0.05} &      0.998 &      0.986 &      0.986 &      0.996 &      0.994 &      0.998 &      0.996 \\
		&                        &   (0.0010) &   (0.0046) &   (0.0021) &   (0.0015) &   (0.0019) &   (0.0008) &   (0.0021) \\ \cline{2-9}
		&   \multirow{2}{*}{0.1} &      0.993 &      0.978 &      0.980 &      0.994 &      0.986 &      0.994 &      0.963 \\
		&                        &   (0.0013) &   (0.0045) &   (0.0034) &   (0.0014) &   (0.0019) &   (0.0015) &   (0.0044) \\ \cline{2-9}
		&   \multirow{2}{*}{0.2} &      0.979 &      0.966 &      0.969 &      0.983 &      0.968 &      0.984 &      0.913 \\
		&                        &   (0.0027) &   (0.0049) &   (0.0046) &   (0.0015) &   (0.0019) &   (0.0014) &   (0.0227) \\ \cline{2-9}
		&   \multirow{2}{*}{0.3} &      0.969 &      0.948 &      0.961 &      0.966 &      0.950 &      0.969 &      0.942 \\
		&                        &   (0.0054) &   (0.0355) &   (0.0038) &   (0.0025) &   (0.0023) &   (0.0024) &   (0.0446) \\ \cline{2-9}
		&   \multirow{2}{*}{0.4} &      0.952 &      0.852 &      0.946 &      0.946 &      0.930 &      0.949 &      0.892 \\
		&                        &   (0.0051) &   (0.0379) &   (0.0383) &   (0.0023) &   (0.0027) &   (0.0026) &   (0.0484) \\ \hline
		\hline
	  \end{tabular}
  \end{center}
\end{table}

\begin{table}[t]
  \begin{center}
	\caption{Mean and standard deviation of rejection ratio (the ratio of rejected data samples over whole test data) for 10 trials.} \label{table:rejected_ratio_benchmark}
	\vspace{0.1in}
	\footnotesize
	\begin{tabular}{c|l|P{12mm}P{12mm}P{12mm}P{12mm}P{12mm}P{12mm}P{12mm}} \hline
	  dataset & $c$  &  APC+log &  APC+exp  &    MPC+log &  OVA+log &  OVA+exp &        CE & OVA+hin \\ \hline \hline

	  \multirow{10}{*}{vehicle}
	  &  \multirow{2}{*}{0.05} &      0.909 &      0.605 &      0.570 &      0.868 &      0.623 &      0.721 &      0.825 \\
	  &                        &   (0.0460) &   (0.0486) &   (0.0400) &   (0.0156) &   (0.0179) &   (0.0137) &   (0.0914) \\ \cline{2-9}
	  &   \multirow{2}{*}{0.1} &      0.740 &      0.525 &      0.469 &      0.708 &      0.556 &      0.590 &      0.567 \\
	  &                        &   (0.0383) &   (0.0812) &   (0.0465) &   (0.0271) &   (0.0129) &   (0.0105) &   (0.1859) \\ \cline{2-9}
	  &   \multirow{2}{*}{0.2} &      0.564 &      0.381 &      0.374 &      0.538 &      0.466 &      0.483 &      0.620 \\
	  &                        &   (0.0365) &   (0.0842) &   (0.0852) &   (0.0131) &   (0.0224) &   (0.0098) &   (0.1821) \\ \cline{2-9}
	  &   \multirow{2}{*}{0.3} &      0.410 &      0.299 &      0.276 &      0.447 &      0.396 &      0.412 &      0.512 \\
	  &                        &   (0.0633) &   (0.0616) &   (0.0841) &   (0.0080) &   (0.0200) &   (0.0149) &   (0.0535) \\ \cline{2-9}
	  &   \multirow{2}{*}{0.4} &      0.247 &      0.123 &      0.173 &      0.353 &      0.321 &      0.313 &      0.332 \\
	  &                        &   (0.0450) &   (0.1347) &   (0.1082) &   (0.0127) &   (0.0126) &   (0.0092) &   (0.0839) \\ \hline

	  \hline

	  \multirow{10}{*}{satimage}
	  &  \multirow{2}{*}{0.05} &      0.504 &      0.428 &      0.411 &      0.462 &      0.385 &      0.400 &      0.603 \\
	  &                        &   (0.0359) &   (0.1361) &   (0.1312) &   (0.0125) &   (0.0105) &   (0.0125) &   (0.0060) \\ \cline{2-9}
	  &   \multirow{2}{*}{0.1} &      0.360 &      0.347 &      0.314 &      0.357 &      0.311 &      0.313 &      0.409 \\
	  &                        &   (0.0245) &   (0.0466) &   (0.0981) &   (0.0050) &   (0.0065) &   (0.0082) &   (0.0781) \\ \cline{2-9}
	  &   \multirow{2}{*}{0.2} &      0.245 &      0.173 &      0.158 &      0.264 &      0.228 &      0.221 &      0.275 \\
	  &                        &   (0.0407) &   (0.0432) &   (0.0560) &   (0.0032) &   (0.0094) &   (0.0075) &   (0.0042) \\ \cline{2-9}
	  &   \multirow{2}{*}{0.3} &      0.142 &      0.069 &      0.074 &      0.187 &      0.155 &      0.145 &      0.219 \\
	  &                        &   (0.0247) &   (0.0679) &   (0.0609) &   (0.0050) &   (0.0077) &   (0.0079) &   (0.0037) \\ \cline{2-9}
	  &   \multirow{2}{*}{0.4} &      0.094 &      0.011 &      0.020 &      0.113 &      0.096 &      0.078 &      0.166 \\
	  &                        &   (0.0221) &   (0.0090) &   (0.0301) &   (0.0030) &   (0.0051) &   (0.0042) &   (0.0066) \\ \hline
	  \hline

	  \multirow{10}{*}{yeast}
	  &  \multirow{2}{*}{0.05} &      1.000 &      0.970 &      0.985 &      1.000 &      0.999 &      1.000 &      0.998 \\
	  &                        &   (0.0000) &   (0.0441) &   (0.0134) &   (0.0000) &   (0.0010) &   (0.0000) &   (0.0036) \\ \cline{2-9}
	  &   \multirow{2}{*}{0.1} &      1.000 &      0.971 &      0.982 &      0.999 &      0.995 &      0.999 &      0.999 \\
	  &                        &   (0.0000) &   (0.0265) &   (0.0224) &   (0.0010) &   (0.0026) &   (0.0019) &   (0.0025) \\ \cline{2-9}
	  &   \multirow{2}{*}{0.2} &      1.000 &      0.879 &      0.999 &      0.992 &      0.977 &      0.979 &      0.994 \\
	  &                        &   (0.0000) &   (0.1466) &   (0.0025) &   (0.0039) &   (0.0062) &   (0.0055) &   (0.0116) \\ \cline{2-9}
	  &   \multirow{2}{*}{0.3} &      1.000 &      0.858 &      0.996 &      0.974 &      0.931 &      0.918 &      0.950 \\
	  &                        &   (0.0000) &   (0.1741) &    (0.004) &   (0.0088) &   (0.0105) &   (0.0068) &   (0.0271) \\ \cline{2-9}
	  &   \multirow{2}{*}{0.4} &      0.998 &      0.581 &      0.593 &      0.919 &      0.843 &      0.845 &      0.816 \\
	  &                        &   (0.0068) &   (0.3448) &   (0.3394) &   (0.0135) &   (0.0197) &   (0.0158) &   (0.0434) \\ \hline

	  \hline

	  \multirow{10}{*}{covtype}
	  &  \multirow{2}{*}{0.05} &      0.985 &      0.950 &      0.943 &      0.964 &      0.957 &      0.955 &      0.995 \\
	  &                        &   (0.0025) &   (0.0194) &   (0.0102) &   (0.0037) &   (0.0059) &   (0.0051) &   (0.0009) \\ \cline{2-9}
	  &   \multirow{2}{*}{0.1} &      0.947 &      0.877 &      0.866 &      0.913 &      0.895 &      0.892 &      0.982 \\
	  &                        &   (0.0104) &   (0.0304) &   (0.0188) &   (0.0062) &   (0.0132) &   (0.0073) &   (0.0019) \\ \cline{2-9}
	  &   \multirow{2}{*}{0.2} &      0.816 &      0.762 &      0.736 &      0.793 &      0.759 &      0.733 &      0.924 \\
	  &                        &   (0.0293) &   (0.0426) &   (0.0747) &   (0.0098) &   (0.0238) &   (0.0136) &   (0.0072) \\ \cline{2-9}
	  &   \multirow{2}{*}{0.3} &      0.671 &      0.688 &      0.614 &      0.641 &      0.600 &      0.553 &      0.860 \\
	  &                        &   (0.0657) &   (0.0328) &   (0.0562) &   (0.0141) &   (0.0344) &   (0.0141) &    (0.007) \\ \cline{2-9}
	  &   \multirow{2}{*}{0.4} &      0.470 &      0.031 &      0.156 &      0.457 &      0.421 &      0.346 &      0.839 \\
	  &                        &   (0.0647) &   (0.0091) &   (0.2154) &   (0.0163) &   (0.0391) &   (0.0120) &   (0.0072) \\ \hline

	  \hline

	  \multirow{10}{*}{letter}
	  &  \multirow{2}{*}{0.05} &      0.792 &      0.660 &      0.538 &      0.802 &      0.682 &      0.628 &      0.810 \\
	  &                        &   (0.0265) &   (0.0245) &   (0.0333) &   (0.0107) &   (0.0119) &   (0.0180) &   (0.0192) \\ \cline{2-9}
	  &   \multirow{2}{*}{0.1} &      0.646 &      0.571 &      0.460 &      0.695 &      0.585 &      0.506 &      0.677 \\
	  &                        &   (0.0283) &   (0.0291) &   (0.0475) &   (0.0103) &   (0.0130) &   (0.0164) &   (0.0210) \\ \cline{2-9}
	  &   \multirow{2}{*}{0.2} &      0.461 &      0.451 &      0.366 &      0.552 &      0.463 &      0.365 &      0.507 \\
	  &                        &   (0.0251) &   (0.0205) &   (0.0387) &   (0.0076) &   (0.0123) &   (0.0123) &   (0.0804) \\ \cline{2-9}
	  &   \multirow{2}{*}{0.3} &      0.371 &      0.369 &      0.316 &      0.451 &      0.372 &      0.273 &      0.528 \\
	  &                        &   (0.0444) &   (0.1188) &   (0.0218) &   (0.0063) &   (0.0101) &   (0.0091) &   (0.1163) \\ \cline{2-9}
	  &   \multirow{2}{*}{0.4} &      0.286 &      0.055 &      0.262 &      0.363 &      0.295 &      0.198 &      0.346 \\
	  &                        &   (0.0280) &   (0.1173) &   (0.0785) &   (0.0075) &   (0.0112) &   (0.0078) &   (0.1134) \\ \hline

	  \hline
	\end{tabular}
  \end{center}
\end{table}


\begin{thebibliography}{31}
\providecommand{\natexlab}[1]{#1}
\providecommand{\url}[1]{\texttt{#1}}
\expandafter\ifx\csname urlstyle\endcsname\relax
  \providecommand{\doi}[1]{doi: #1}\else
  \providecommand{\doi}{doi: \begingroup \urlstyle{rm}\Url}\fi

\bibitem[Bartlett and Wegkamp(2008)]{Bartlett2008}
P.~L. Bartlett and M.~H. Wegkamp.
\newblock Classification with a reject option using a hinge loss.
\newblock \emph{Journal of Machine Learning Research}, 9:\penalty0 1823--1840,
  2008.

\bibitem[Bartlett et~al.(2002)Bartlett, Bousquet, and Mendelson]{bartlett2002}
P.~L. Bartlett, O.~Bousquet, and S.~Mendelson.
\newblock Local {R}ademacher complexities.
\newblock \emph{The Annals of Statistics}, 33:\penalty0 1487--1537, 2002.

\bibitem[Bartlett et~al.(2006)Bartlett, Jordan, and Mcauliffe]{Bartlett2006}
P.~L. Bartlett, M.~I. Jordan, and J.~D. Mcauliffe.
\newblock Convexity, classification, and risk bounds.
\newblock \emph{Journal of the American Statistical Association}, 101\penalty0
  (473):\penalty0 138--156, 2006.

\bibitem[Ben-David et~al.(2003)Ben-David, Eiron, and Long]{bendavid2003}
S.~Ben-David, N.~Eiron, and P.~M. Long.
\newblock On the difficulty of approximately maximizing agreements.
\newblock \emph{Journal of Computer and System Sciences}, 66\penalty0
  (3):\penalty0 496--514, 2003.

\bibitem[Bishop(2006)]{bishop2006}
C.~M. Bishop.
\newblock \emph{Pattern Recognition and Machine Learning}.
\newblock Springer, 2nd edition, 2006.

\bibitem[Chow(1957)]{chow1957}
C.~K. Chow.
\newblock An optimum character recognition system using decision functions.
\newblock \emph{IRE Transactions on Electronic Computers}, EC-6\penalty0
  (4):\penalty0 247--254, 1957.

\bibitem[Chow(1970)]{chow1970}
C.~K. Chow.
\newblock On optimum recognition error and reject tradeoff.
\newblock \emph{IEEE Transactions on Information Theory}, 16\penalty0
  (1):\penalty0 41--46, 1970.

\bibitem[Cortes et~al.(2016{\natexlab{a}})Cortes, DeSalvo, and
  Mohri]{Cortes2016_1}
C.~Cortes, G.~DeSalvo, and M.~Mohri.
\newblock Learning with rejection.
\newblock In \emph{Proceedings of International Conference on Algorithmic
  Learning Theory}, pages 67--82, 2016{\natexlab{a}}.

\bibitem[Cortes et~al.(2016{\natexlab{b}})Cortes, DeSalvo, and
  Mohri]{Cortes2016_2}
C.~Cortes, G.~DeSalvo, and M.~Mohri.
\newblock Boosting with abstention.
\newblock In \emph{Advances in Neural Information Processing Systems 29}, pages
  1660--1668. 2016{\natexlab{b}}.

\bibitem[Cortes et~al.(2018)Cortes, DeSalvo, Gentile, Mohri, and
  Yang]{Cortes2018}
C.~Cortes, G.~DeSalvo, C.~Gentile, M.~Mohri, and S.~Yang.
\newblock Online learning with abstention.
\newblock In \emph{Proceedings of the 35th International Conference on Machine
  Learning}, pages 1059--1067, 2018.

\bibitem[Dubuisson and Masson(1993)]{Dubuisson1993}
B.~Dubuisson and M.~Masson.
\newblock A statistical decision rule with incomplete knowledge about classes.
\newblock \emph{Pattern Recognition}, 26:\penalty0 155--165, 1993.

\bibitem[Feldman et~al.(2012)Feldman, Guruswami, Raghavendra, and
  Wu]{feldman2012}
V.~Feldman, V.~Guruswami, P.~Raghavendra, and Y.~Wu.
\newblock Agnostic learning of monomials by halfspaces is hard.
\newblock \emph{SIAM Journal on Computing}, 41\penalty0 (6):\penalty0
  1558--1590, 2012.

\bibitem[Garcia et~al.(2018)Garcia, Essid, Clavel, and d'Alch{\'e}
  Buc]{Garcia2018}
A.~Garcia, S.~Essid, C.~Clavel, and F.~d'Alch{\'e} Buc.
\newblock Structured output learning with abstention: Application to accurate
  opinion prediction.
\newblock In \emph{Proceedings of the 35th International Conference on Machine
  Learning}, pages 1695--1703, 2018.

\bibitem[Grandvalet et~al.(2009)Grandvalet, Rakotomamonjy, Keshet, and
  Canu]{grandvalet2008}
Y.~Grandvalet, A.~Rakotomamonjy, J.~Keshet, and S.~Canu.
\newblock Support vector machines with a reject option.
\newblock In \emph{Advances in Neural Information Processing Systems 21}, pages
  537--544. 2009.

\bibitem[Hamid et~al.(2017)Hamid, Asif, Abbasi, Sabih, and Minhas]{Hamid2017}
K.~Hamid, A.~Asif, W.~Abbasi, D.~Sabih, and F.~A. Minhas.
\newblock Machine learning with abstention for automated liver disease
  diagnosis.
\newblock In \emph{Proceedings of International Conference on Frontiers of
  Information Technology}, pages 356--361, 2017.

\bibitem[Herbei and Wegkamp(2006)]{herbei2006classification}
R.~Herbei and M.~H. Wegkamp.
\newblock Classification with reject option.
\newblock \emph{Canadian Journal of Statistics}, 34\penalty0 (4):\penalty0
  709--721, 2006.

\bibitem[Lichman et~al.(2013)]{uci2013}
M.~Lichman et~al.
\newblock {UCI} machine learning repository, 2013.
\newblock URL \url{http://archive.ics.uci.edu/ml}.

\bibitem[Mohri et~al.(2012)Mohri, Rostamizadeh, and Talwalkar]{mohri2012}
M.~Mohri, A.~Rostamizadeh, and A.~Talwalkar.
\newblock \emph{Foundations of Machine Learning}.
\newblock The MIT Press, 2012.

\bibitem[Pires and Szepesv{\'a}ri(2016)]{Pires2016}
B.~{\'A}. Pires and C.~Szepesv{\'a}ri.
\newblock Multiclass classification calibration functions.
\newblock \emph{arXiv preprint arXiv:1609.06385}, 2016.

\bibitem[Ramaswamy et~al.(2018)Ramaswamy, Tewari, and Agarwal]{Ramaswamy2018}
H.~G. Ramaswamy, A.~Tewari, and S.~Agarwal.
\newblock Consistent algorithms for multiclass classification with an abstain
  option.
\newblock \emph{Electronic Journal of Statistics}, 12:\penalty0 530--554, 2018.

\bibitem[Reddi et~al.(2018)Reddi, Kale, and Kumar]{adamamsgrad2018}
S.~J. Reddi, S.~Kale, and S.~Kumar.
\newblock On the convergence of {Adam} and beyond.
\newblock In \emph{Proceedings of International Conference on Learning
  Representations}, 2018.

\bibitem[Reid and Williamson(2010)]{reid2010}
M.~D. Reid and R.~C. Williamson.
\newblock Composite binary losses.
\newblock \emph{Journal of Machine Learning Research}, 11:\penalty0 2387--2422,
  2010.

\bibitem[Tax and Duin(2008)]{Tax2008}
D.~Tax and R.~Duin.
\newblock Growing a multi-class classifier with a reject option.
\newblock \emph{Pattern Recognition Letters}, 29\penalty0 (10):\penalty0
  1565--1570, 2008.

\bibitem[Tewari and Bartlett(2007)]{tewari2007}
A.~Tewari and P.~L. Bartlett.
\newblock On the consistency of multiclass classification methods.
\newblock \emph{Journal of Machine Learning Research}, 8:\penalty0 1007--1025,
  2007.

\bibitem[Vernet et~al.(2011)Vernet, Reid, and Williamson]{williamson2016}
E.~Vernet, M.~D. Reid, and R.~C. Williamson.
\newblock Composite multiclass losses.
\newblock In \emph{Advances in Neural Information Processing Systems 24}, pages
  1224--1232. 2011.

\bibitem[Wegkamp and Yuan(2011)]{wegkamp2011}
M.~Wegkamp and M.~Yuan.
\newblock Support vector machines with a reject option.
\newblock \emph{Bernoulli}, 17\penalty0 (4):\penalty0 1368--1385, 2011.

\bibitem[Weston and Watkins(1998)]{Weston1998}
J.~Weston and C.~Watkins.
\newblock Multi-class support vector machines.
\newblock Technical report, Royal Holloway, 1998.

\bibitem[Wu et~al.(2007)Wu, Jia, and Chen]{wu2007}
Q.~Wu, C.~Jia, and W.~Chen.
\newblock A novel classification-rejection sphere {SVM}s for multi-class
  classification problems.
\newblock In \emph{Proceedings of the 3rd International Conference on Natural
  Computation}, volume~1, pages 34--38, 2007.

\bibitem[Yuan and Wegkamp(2010)]{yuan2010}
M.~Yuan and M.~H. Wegkamp.
\newblock Classification methods with reject option based on convex risk
  minimization.
\newblock \emph{Journal of Machine Learning Research}, 11:\penalty0 111--130,
  2010.

\bibitem[Zhang(2004{\natexlab{a}})]{Zhang2004}
T.~Zhang.
\newblock Statistical analysis of some multi-category large margin
  classification methods.
\newblock \emph{Journal of Machine Learning Research}, 5:\penalty0 1225--1251,
  2004{\natexlab{a}}.

\bibitem[Zhang(2004{\natexlab{b}})]{zhang2004binary}
T.~Zhang.
\newblock Statistical behavior and consistency of classification methods based
  on convex risk minimization.
\newblock \emph{The Annals of Statistics}, 32:\penalty0 56--85,
  2004{\natexlab{b}}.

\end{thebibliography}
\end{document}